\documentclass[11pt]{article}
\usepackage{natbib}
\usepackage[figuresright]{rotating}
\usepackage{floatpag}
\rotfloatpagestyle{empty}  

\usepackage{amsmath}
\usepackage{amssymb}
\usepackage{verbatim}
\usepackage{hyperref}
\usepackage{amsthm, amsfonts}
\usepackage{graphicx}
\usepackage{tikz}
\usepackage{color}
\usepackage{varwidth}
\usepackage[english]{babel}
\usepackage{listings}
\usepackage{authblk}
\usepackage{blkarray}
\usepackage{times}
\usepackage[left=1.4in,right=1.4in,top=1.5in,bottom=1.5in,]{geometry}
\usepackage[width=\linewidth, font=small]{caption}

\usetikzlibrary{spy}

\usepackage{fancyhdr}
\pagestyle{fancy}
\fancyhf{}
\fancyhead[L]{\rightmark}
\fancyhead[R]{\thepage}

\fancyheadoffset{-0.0675\textwidth} 

\usepackage{rotating}
\usepackage{url}
\usepackage{bbm}
\usepackage{pgfplots}

\usepackage{mathtools}

\usepackage[super]{nth}

\newtheorem{theorem}{Theorem}

\newtheorem{lemma}[theorem]{Lemma}
\newtheorem{proposition}[theorem]{Proposition}
\theoremstyle{definition}

\newtheorem{example}[theorem]{Example}

\usetikzlibrary{tikzmark,calc,,arrows,shapes,decorations.pathreplacing}
\tikzset{every picture/.style={remember picture}}
\usetikzlibrary{fit,shapes.misc}
\usetikzlibrary{positioning,backgrounds}
\def\layersep{2.5cm}

\definecolor{color1}{RGB}{200,240,240}
\definecolor{color2}{RGB}{240,240,200}

\newcommand{\bo}{\boldsymbol{0}}
\newcommand{\bin}{\operatorname{bin}}
\newcommand{\Ecal}{\mathcal{E}}
\newcommand{\F}{\operatorname{F}}
\newcommand{\dec}{\operatorname{dec}}

\newcommand{\sgn}{\operatorname{sgn}}

\pgfplotsset{width=10cm,compat=1.14}

\usetikzlibrary{positioning}
\usetikzlibrary{arrows, decorations.markings}
\input{arrowsnew}

\definecolor{bl}{RGB}{20,20,150}
\definecolor{gr}{RGB}{20,180,20}
\definecolor{ic}{RGB}{180,20,20}
\definecolor{oc}{RGB}{20,180,20}

\usepackage{tcolorbox}
\makeatletter
\newcommand{\mybox}[1]{%
	\setbox0=\hbox{#1\!\!}%
	\setlength{\@tempdima}{\dimexpr\wd0+13pt}%
	\begin{tcolorbox}[colframe=black, boxrule=1pt, arc=4pt,
		left=6pt,right=6pt, top=2pt,bottom=2pt,boxsep=0pt,width=\@tempdima]
		#1
	\end{tcolorbox}
}	
	\tikzstyle{vecArrow} = [thick, decoration={markings,mark=at position
		1 with {\arrow[semithick]{open triangle 60}}},
	double distance=2.4pt, shorten >= 5.5pt,
	preaction = {decorate},
	postaction = {draw,line width=2.4pt, white,shorten >= 4.5pt}]
	
	\tikzstyle{vecArrowGray} = [thick, decoration={markings,mark=at position
		1 with {\arrow[semithick]{open triangle 60}}},
	double distance=2.4pt, shorten >= 5.5pt,
	preaction = {decorate},
	postaction = {draw,line width=2.4pt, black!10,shorten >= 4.5pt}]


\newcommand{\unitlayerr}[5]{ 
    \setlength\fboxsep{1pt}
    \myboxx{\tikzset{node distance=.4cm, auto}
        \begin{tikzpicture}[scale=0.9, every node/.style={transform shape}]
        \tikzstyle{neuron}=[circle, line width=.75pt, draw=black, inner sep=.025cm, minimum size = .65cm, fill=#5!05]
        
        \foreach \name / \i in {1,...,#2}
        \node[neuron,draw=#5!05] (I-\name) at (\i,0) {};
        
        \node (I-dots) [node distance = .8cm, right of = I-#2] {};
        \node[neuron,draw=#5!05] (I-end)  [node distance = .8cm, right of = I-dots] {};
        \end{tikzpicture}
    }}

\newcommand{\labeledlayer}[5]{ 
    \setlength\fboxsep{1pt}
    \mybox{\tikzset{node distance=.4cm, auto}
        \begin{tikzpicture}[scale=0.9, every node/.style={transform shape}]
        \tikzstyle{neuron}=[circle, line width=.75pt, draw=black, inner sep=.025cm, minimum size = .65cm, fill=#5!15]
        
        \foreach \name / \i in {1,...,#2}
        \node[neuron] (I-\name) at (\i,0) {$#1_{#4\i}$};
        
        \node (I-dots) [node distance = .8cm, right of = I-#2] {$\cdots$};
        \node[neuron] (I-end)  [node distance = .8cm, right of = I-dots] {$#1_{#4 #3}$};        
        \end{tikzpicture}
    }}

\newcommand{\unitlayer}[5]{ 
	\setlength\fboxsep{1pt}
	\mybox{\tikzset{node distance=.4cm, auto}
		\begin{tikzpicture}[scale=0.9, every node/.style={transform shape}]
		\tikzstyle{neuron}=[circle, line width=.75pt, draw=black, inner sep=.025cm, minimum size = .65cm, fill=#5!15]
		
		\foreach \name / \i in {1,...,#2}
		\node[neuron] (I-\name) at (\i,0) {$#1_{#4\i}$};
		
		\node (I-dots) [node distance = .8cm, right of = I-#2] {$\cdots$};
		\node[neuron] (I-end)  [node distance = .8cm, right of = I-dots] {$#1_{#4 #3}$};		
		\end{tikzpicture}
	}}

\newcommand{\myboxx}[1]{%
    \setbox0=\hbox{#1\!\!}%
    \setlength{\@tempdima}{\dimexpr\wd0+13pt}%
    \begin{tcolorbox}[colframe=black, boxrule=1pt, arc=4pt,
        left=6pt,right=6pt, top=70pt,bottom=10pt,boxsep=0pt,width=\@tempdima]
        #1
    \end{tcolorbox}
}

\tikzset{
    add Rectangle/.style={
        alias=tempname,
        append after command={
            ; \begin{pgfonlayer}{background}
        \node [
            fill=green!20,
                draw=yellow!50!black,
                        fit={(tempname) ($(tempname.north)+(0,-#1)$)},
                        inner sep=0pt] {};
        \end{pgfonlayer} \path
        }
    },
    add rectangle/.default=2cm
}
\tikzset{
    add rectangle/.style={
        alias=tempname,
        append after command={
            ; \begin{pgfonlayer}{background}
        \node [
            fill=blue!15,
                draw=white!10!black,
                        fit={(tempname) ($(tempname.north)+(0,-#1)$)},
                        inner sep=0pt] {};
        \end{pgfonlayer} \path
        }
    },
    add rectangle/.default=2cm
}
\tikzset{
    add RRectangle/.style={
        alias=tempname,
        append after command={
            ; \begin{pgfonlayer}{background}
        \node [
            fill=yellow!20,
                draw=white!10!black,
                        fit={(tempname) ($(tempname.north)+(0,-#1)$)},
                        inner sep=0pt] {};
        \end{pgfonlayer} \path
        }
    },
    add rectangle/.default=2cm
}
\tikzset{
    add RRectanglee/.style={
        alias=tempname,
        append after command={
            ; \begin{pgfonlayer}{background}
        \node [
            fill=red!20,
                draw=white!10!black,
                        fit={(tempname) ($(tempname.north)+(0,-#1)$)},
                        inner sep=0pt] {};
        \end{pgfonlayer} \path
        }
    },
    add rectangle/.default=2cm
}
\tikzset{
    add RECT/.style={
        alias=tempname,
        append after command={
            ; \begin{pgfonlayer}{background}
        \node [
                draw=white!10!black,
                        fit={(tempname) ($(tempname.north)+(0,-#1)$)},
                        inner sep=0pt] {};
        \end{pgfonlayer} \path
        }
    },
    add rectangle/.default=2cm
}
\tikzset{
    add RECT2/.style={
        alias=tempname,
        append after command={
            ; \begin{pgfonlayer}{background}
        \node [
                draw=yellow!100!white,
                        fit={(tempname) ($(tempname.north)+(0,-#1)$)},
                        inner sep=0pt] {};
        \end{pgfonlayer} \path
        }
    },
    add rectangle/.default=2cm
}

\setcounter{MaxMatrixCols}{20}

\def\a{\alpha }  \def\d{\delta } \def\D{\Delta } \def\e{\epsilon } \def\g{\gamma }     \def\th{\theta }  \def\r{\rho} \def\o{\omega}   \def\t{\tau}

       \def\sM{{\cal M}}       \def\s{\sigma }     \def\sE{{\cal E}} \def\sS{{\cal S}} \def\sN{{\cal N}}
\def\sO{{\cal O}}

  \def\N{{\mathbb N}}  \def\R{{\mathbb R}}

\newcommand{\bc}{\begin{center}}
\newcommand{\ec}{\end{center}}

\newcommand{\bq}{\begin{equation}}
\newcommand{\eq}{\end{equation}}

\newcommand{\bpm}{\begin{pmatrix}}
\newcommand{\epm}{\end{pmatrix}}

\newcommand{\bfig}{\begin{figure}}
\newcommand{\efig}{\end{figure}}

\newcommand{\btab}{\begin{table}}
\newcommand{\etab}{\end{table}}

\newcommand{\btbl}{\begin{tabular}}
\newcommand{\etbl}{\end{tabular}}

\newcommand{\benum}{\begin{enumerate}}
\newcommand{\eenum}{\end{enumerate}}

\newcommand{\bite}{\begin{itemize}}
\newcommand{\eite}{\end{itemize}}

\begin{document}

\title{\Large\bf Stochastic Feedforward Neural Networks:\\ Universal Approximation}

\author[1]{{\normalsize\bf Thomas Merkh}\thanks{tmerkh@math.ucla.edu}}
\author[1,2,3]{{\normalsize\bf Guido Mont\'ufar}\thanks{montufar@math.ucla.edu}}

\affil[1]{\small Department of Mathematics, University of California Los Angeles, CA 90095}
\affil[2]{\small Department of Statistics, University of California Los Angeles, CA 90095}
\affil[3]{\small Max Planck Institute for Mathematics in the Sciences, 04103 Leipzig, Germany}
\date{\small\today}
\maketitle

\thispagestyle{empty}
\vspace*{-.5cm}
\abstract{%
\noindent
In this work we take a look at the universal approximation question for stochastic feedforward neural networks. 
In contrast to deterministic neural networks, which represent mappings from a set of inputs to a set of outputs, stochastic neural networks represent mappings from a set of inputs to a set of probability distributions over the set of outputs. In particular, even if the sets of inputs and outputs are finite, the class of stochastic mappings in question is not finite. 
Moreover, while for a deterministic function the values of all output variables can be computed independently of each other given the values of the inputs, in the stochastic setting the values of the output variables may need to be correlated, which requires that their values are computed jointly. A prominent class of stochastic feedforward networks which has played a key role in the resurgence of deep learning are deep belief networks. The representational power of these networks has been studied mainly in the generative setting, as models of probability distributions without an input, or in the discriminative setting for the special case of deterministic mappings. 
We study the representational power of deep sigmoid belief networks in terms of compositions of linear transformations of probability distributions, Markov kernels, that can be expressed by the layers of the network. We investigate different types of shallow and deep architectures, and the minimal number of layers and units per layer that are sufficient and necessary in order for the network to be able to approximate any given stochastic mapping from the set of inputs to the set of outputs arbitrarily well. The discussion builds on notions of probability sharing and mixtures of product distributions, focusing on the case of binary variables and conditional probabilities given by the sigmoid of an affine map. 
After reviewing existing results, we present a detailed analysis of shallow networks and a unified analysis for a variety of deep networks. Most of the results were previously unpublished or are new. 
}\smallskip

\noindent
\textit{Keywords}: Deep belief network, Bayesian sigmoid belief network, Markov kernel\\

\newpage 
\thispagestyle{empty}
\tableofcontents
\newpage 

\section{Introduction}
\label{section:one}

Obtaining detailed comparisons between deep vs.~shallow networks remains a topic of theoretical and practical importance as deep learning continues to grow in popularity. 
The successes of deep networks exhibited in many recent applications has sparked much interest in such comparisons~\citep{%
    Larochelle:2007:EED:1273496.1273556, 
	Bengio-2009, 
	delalleau2011shallow,
	pascanu2013number, 
	NIPS2014_5422,
	mhaskar2016deep, 
	pmlr-v49-eldan16, 
	poggio2017and, 
    NIPS2017_7203, 	
    pmlr-v70-raghu17a, 
	YAROTSKY2017103, 
    pmlr-v75-yarotsky18a, 
	doi:10.1137/18M118709X,
	Gribonval2019ApproximationSO}, 
and in developing theory for deep architectures. 
Despite the acclaim, guidelines for choosing the most appropriate model for a given problem have remained elusive. One approach to obtaining such guidance is to analyze the representational and approximation capabilities of different types of architectures. 
The representational power of neural networks poses a number of interesting and important questions, even if it might not capture other important and complex aspects that impact the performance in practice. 
In particular, we note that the choice of network architecture defines a particular parametrization of the representable functions, which in turn has an effect on the shape of the parameter optimization landscape. 

This work examines one aspect of this subject matter; namely, how do deep vs.~shallow stochastic feedforward networks compare in terms of the number of computational units and parameters that are sufficient in order to approximate a target stochastic function to a given accuracy? 
In contrast to deterministic neural networks, which represent mappings from a set of inputs to a set of outputs, stochastic neural networks represent mappings from a set of inputs to a set of probability distributions over the set of outputs. As so, stochastic networks can be used to model the probability distribution of a given set of training examples. This type of problem, which is an instance of parametric density estimation, is a core problem 
in statistics and machine learning. When trained on a set of unlabeled examples, a stochastic network can learn to map an internal hidden variable to new examples which follow a similar probability distribution as the training examples. They can also be trained to generate examples which follow probability distributions conditioned on given inputs. For instance, the input might specify a value ``\texttt{cat}'' or ``\texttt{dog}'', and the outputs could be images of the corresponding animals.
Generative modeling is a very active area of research in contemporary machine learning. In recent years, a particularly popular approach to generative modeling is the generative adversarial network \citep{goodfellow14} and its many variants. The distinguishing property of this approach is that the training loss is formulated in terms of the ability of a discriminator to tell apart the generated examples and the training examples. Aside from utilizing this particular type of loss, these models are implemented in the same general way, as a sequence of mappings that take an internal source to values in a desired domain. The distinguishing property of stochastic neural networks is that each layer can implement randomness. 
Learning stochastic feedforward networks has been an important topic of research for years  \citep{Neal90learningstochastic,Ngiam2011LearningDE,NIPS2013_5026,raiko2014techniques,lee2017simplified}. 
Stochastic neural networks have found applications not only as generative models, but also in unsupervised feature learning \citep{HintonSalakhutdinov2006b,4270182}, 
semantic hashing \citep{Salakhutdinov:2009:SH:1558385.1558446}, 
and natural language understanding \citep{6737243}, among others. 
Unsupervised training with stochastic networks can be used as a parameter initialization strategy for subsequent supervised learning, which was a key technique in the rise of deep learning in the years 2000
\citep{%
hinton2006fast,
NIPS2006_3048,
Bengio-2009}. 

We study the representational power of stochastic feedforward networks from a class that is known as Bayesian sigmoid belief networks \citep{NEAL199271}. These are special types of directed graphical models, also known as Bayesian networks \citep[][]{lauritzen1996graphical,Pearl:1988:PRI:52121}. 
We consider a spectrum of architectures (network topologies) in relation to universal approximation properties. 
When viewing networks as approximators of elements from a specific class, they can be quantified and compared by measures such as the worst-case error for the class. 
If a sufficiently large network is capable of approximating all desired elements with arbitrary accuracy, it can be regarded as a \textit{universal approximator}. 
The question of whether or not a certain network architecture is capable of universal approximation, and if so, how many computational units and trainable parameters suffice, has been studied for a variety of stochastic networks~\citep[see, e.g.,][]{%
	sutskever2008deep,
	le2010deep,
	10.1007/978-3-642-24412-4_3, 
	montufar2011refinements,
	montufar2011expressive, 
	montufar2014deep,
	montufar2015geometry, 
	montufar2014universal, 
	montufar2015universal,
	montufar2017hierarchical,
	73459}. 

Most of the existing works on the representational power of stochastic feedforward networks focus on the generative setting with no inputs, modeling a single probability distribution over the outputs, or the discriminative setting modeling a deterministic mapping from inputs to outputs. 
Models of stochastic functions, which are also referred to as Markov kernels or conditional probability distributions, are more complex than models of probability distributions. 
Rather than a single probability distribution, they need to approximate a probability distribution for each possible input. 
Universal approximation in this context inherently requires more complexity as compared to generative models with no inputs. 
There is also a wider variety of possible network architectures, each with virtually no guidance on how one compares to another. 
Nonetheless, as we will see, the question of universal approximation of Markov kernels can be addressed using similar tools as previously developed for studying universal approximation of probability distributions with deep belief networks~\citep{%
	sutskever2008deep,
	le2010deep,
	montufar2011refinements,
	montufar2014universal}. 
We will also draw on unpublished studies of shallow stochastic feedforward networks~\citep{montufar2015universal}. 

The overall idea of universal approximation that we consider here is as follows. For each possible input ${\bf x} \in \{0,1\}^d$ to the network, there will be a target conditional distribution $p(\cdot|{\bf x})$ over the outputs ${\bf y} \in \{0,1\}^s$ which the network attempts to learn. 
Note that while there is only a finite number $(2^s)^{2^d}$ of deterministic mappings from inputs to outputs, there is a continuum $(\Delta_{\{0,1\}^s})^{2^d}$ of mappings from inputs to probability distributions over outputs, where $\Delta_{\{0,1\}^s}$ is the $(2^s-1)$-dimensional simplex of probability distributions over $\{0,1\}^s$. 
As the number of hidden units of the network grows, the network gains the ability to better approximate the target conditional distributions. At a certain threshold, the model will have sufficient complexity to approximate each conditional distribution with arbitrarily high precision. The precise threshold is unknown except in special cases, and thus upper bounds for universal approximation are generally used to quantify a network's representational capacity. 
Since feedforward networks operate sequentially, each layer can be seen as a module that is able to implement certain operations sharing or diffusing the probability mass away from the distribution at the previous layer and toward the target distribution. This is referred to as \textit{probability mass sharing}. Depending on the size of the layers, the types of possible operations varies. The composition of operations layer by layer is a key difference between deep and shallow networks. 

We prove sufficiency bounds for universal approximation with shallow networks and with a spectrum of deep networks. The proof methods for the deep and shallow cases differ in important ways due to the compositional nature of deep architectures. This is especially so when restrictions are imposed on the width of the hidden layers.  We extend the ideas put forth by~\cite{sutskever2008deep,le2010deep,montufar2011refinements}, where universal approximation bounds were proven for deep belief networks. 
Our main results can be stated as follows. 
\begin{itemize}
\item 
\textit{A shallow sigmoid stochastic feedforward network with $d$ binary inputs, $s$ binary outputs, and a hidden layer of width $2^{d}(2^{s-1}-1)$ is a universal approximator of Markov kernels.} 

\item
\textit{There exists a spectrum of deep sigmoid stochastic feedforward networks with $d$ binary inputs, $s$ binary outputs, and 
$2^{d-j} (2^{s-b} + 2^b - 1 )$
hidden layers of width $2^j(s+d-j)$ that are universal approximators of Markov kernels.  Here $b \sim \log_2(s)$, and the overall shape of each network is controlled by $j \in \{0,1,\dots, d\}$. Moreover, each of these networks can be implemented with a minimum of  $2^d(2^s-1)$ trainable parameters.} 

\item \textit{For the networks in the previous item, if both the trainable and non-trainable parameters are restricted to have absolute values at most $\alpha$, the approximation error for any target kernel can be bounded in infinity norm by $1-\sigma(\alpha/2(d+s))^N +2\sigma(-\alpha/2(d+s))$, where $N$ is the total number of units of the network and $\sigma$ is the standard logistic sigmoid function. 
} 
\end{itemize}

\medskip 

The work is organized as follows. 
Section~\ref{sec:previous} discusses previous works for context. 
Section~\ref{sec:settings} discusses preliminary notions and fixes notation. 
Section~\ref{sec:shallow} presents an analysis of shallow networks. 
The proofs are contained in Section~\ref{sec:shallow_proofs}. 
Section~\ref{sec:deep} presents the main results, describing universal approximation with a spectrum of deep networks and approximation with bounded weights. 
The proofs of these results are contained in Section~\ref{sec:deep_proofs}. 
Section~\ref{sec:lowerbounds} discusses the lower bounds for universal approximation. 
Afterward, a brief comparison between architectures and numerical experiments are discussed in Section~\ref{sec:examples}. Last, Section~\ref{sec:conclusion} offers a conclusion and avenues for future research.

\section{Overview of Previous Works and Results}
\label{sec:previous}

The universal approximation property has been studied in a variety of contexts in the past. 
The seminal work of \citet{Cybenko1989} and \citet{hornik1989multilayer} showed that deterministic multilayer feedforward networks with at least one sufficiently large hidden layer are universal approximators over a certain class of Borel measurable functions. 
An overview on universal approximation for deterministic networks was provided by \citet{SCARSELLI199815}. 
The case of stochastic functions is not covered by this analysis, and was studied later. 
Soon after \citet{hinton2006fast} introduced a practical technique for training deep architectures, universal approximation for deep belief networks (DBNs) was shown by \citet{sutskever2008deep}. 
They found that a DBN consisting of $3(2^s-1)+1$ layers of width $s+1$ is sufficient for approximating any distribution $p \in \D_s$ arbitrarily well. 
This sufficiency bound was improved upon twice, first by \citet{le2010deep}, then by \citet{montufar2011refinements}. The former introduced the idea of using Gray codes to overlap probability sharing steps, thereby reducing the number of layers down to $\sim \frac{2^s}{s},$ each having width $s$.  The latter further reduced the number of layers to $\frac{2^{s-1}}{s-b}, \; b \sim \log_2(s)$ by improving previous results on the representational capacity of restricted Boltzmann machines (RBMs) and probability sharing theorems. 
It is interesting to note that still further improvements have been made on the representational capabilities of RBMs \citep{montufar2017hierarchical}, but it remains unclear whether or not universal approximation bounds for DBNs can benefit from such improvements. For a recent review of results on RBMs, see \citep{10.1007/978-3-319-97798-0_4}. 

Several stochastic networks in addition to DBNs have been shown to be universal approximators.  The undirected counterpart to DBNs called a deep Boltzmann machine (DBM) was proven to be a universal approximator even if the hidden layers are restricted to have at most the same width as the output layer \citep{montufar2014deep}.  Here it was shown that DBMs could be analyzed similarly to feedforward networks under certain parameter regimes. 
This result verifies the intuition that undirected graphical models are in some well defined sense at least as powerful as their directed counterparts. 
For shallow stochastic networks universal approximation bounds have been obtained for feedforward networks which will be discussed next, and undirected networks called conditional restricted Boltzmann machines (CRBMs) \citep{montufar2015geometry}.  Both such architectures are capable of universal approximation and have similar sufficiency bounds. 

We note that not every network architecture is capable of universal approximation.  For example, it is known that for an RBM, DBN, or DBM to be a universal approximator of distributions over $\{0,1\}^s$, the hidden layer immediately before the visible layer must have at least $s-1$ units~\citep{montufar2014deep}. In fact, if $s$ is odd, at least $s$ are needed \citep{doi:10.1137/140957081}. In addition, necessary bounds for universal approximation exist for all of the previously mentioned architectures, though such bounds are generally harder to refine.  Except for very small models, there exists a gap between the known necessary bounds and the sufficiency bounds.  Last, it was recently shown that a class of deterministic feedforward networks with hidden layer width at most equal to the input dimension are unable to capture a class of functions with bounded level sets \citep{johnson2018deep}. Such discoveries exemplify the importance of analyzing the approximation capabilities of different network architectures. 

As already mentioned in the introduction, the representational power of discriminative models has been previously studied. In particular, the representation of deterministic functions from $\{0,1\}^d \to \{0,1\}$, known as Boolean functions, by logical circuits or threshold gates has been studied for many years. 
\citet{6771698} showed that almost all $d$-input Boolean functions require a logic circuit of size at least $(1 - o(1))2^d/d$. 
\citet{lupanov19562} showed that every $d$-input Boolean function can be expressed by a logic circuit of size at most $(1 + o(1))2^d/d$. 
Other works on the representation of Boolean functions include  \citet{brunato2015stochastic,huang2006can,muroga1971threshold,neciporuk1964synthesis,wenzel2000hyperplane}. 
A particularly interesting result by \citet{Hastad1991} shows that, when the depth of a threshold circuit is restricted, some Boolean functions require exponentially more units to be represented. 

\citet{rojas2003networks} showed that a sufficiently deep stack of perceptrons where each layer is connected to the input and feeds forward a single bit of information is capable of learning any $d$-input Boolean function. 
The equivalent network without skip connections to the input would be a network of width $d+1$. 
In that work it is pointed out that there is a direct trade-off between the width and depth of the network, and this idea will surface again in the analysis of deep networks that follows. 
\citet{le2010deep} showed that a sigmoid belief network with $2^{d-1}+1$ layers of width $d$ is sufficient for representing any deterministic function $f\colon \{0,1\}^d \to \{0,1\}$. 
This was done by showing that the parameters of one layer can be chosen to map a single vector to some fixed ${\bf h}_0 \in \{0,1\}^d$. Then considering two classes of vectors, those for which $f({\bf h}) = 0$ and those for which $f({\bf h}) = 1$, one may choose to map the smaller of the two classes of vectors to ${\bf h}_0$. This can be done in $2^{d-1}$ layers or less, and then the last layer can correctly label the inputs depending on whether or not the network mapped them to ${\bf h}_0$ or not. This process differs from the following in that these networks are not learning multivariate conditional distributions for each input, but rather labeling each input $0$ or $1$. 
While for a deterministic function the values of all output variables can be computed independently of each other given the values of the inputs, in the stochastic setting the values of the output variables may be correlated, which requires that their values are computed jointly.

\section{Markov Kernels and Stochastic Networks}
\label{sec:settings}

\subsection*{Binary Probability Distributions and Markov Kernels}

Let $s \in \N$ and consider the set of vectors $\{0,1\}^s$ of cardinality $2^s$.
A probability distribution over the set $\{0,1\}^s$ is a vector $p\in\mathbb{R}^{2^s}$ with non-negative entries $p_i$, $i\in\{1,\ldots, 2^s\}$ that add to one. The entries correspond to the probabilities $p({\bf y})$ that this distribution assigns to each ${\bf y}\in\{0,1\}^s$. 
The set of all such probability distributions is the set 
\begin{equation}
	\D_s := \Big\{ p \in \R^{2^s} \; \Big| \; \sum_{i=1}^{2^s} p_i = 1, \;\; p_i \geq 0 \text{ for } i = 1,2,\ldots,2^s \Big\} . 
\end{equation}
This set is a simplex of dimension $2^s-1$. 
The vertices of $\D_s$ are point distributions which assign full probability mass to a single ${\bf y} \in \{0,1\}^s$ and none on $\{0,1\}^s \setminus {\bf y}$.  Such distributions are denoted $\d_{\bf y}$ and are sometimes referred to as \textit{deterministic} because there is no uncertainty under them. 
The support of a distribution $p \in \D_s$ is denoted by $\operatorname{supp}(p)$ and is the set of the vectors in $\{0,1\}^s$ on which $p$ assigns nonzero probability. The support set of a deterministic distribution is a singleton. 
A probability model $\sM$ is just a subset of $\D_s$. 
If a model $\sM \subseteq \D_s$ satisfies $\overline{\sM} = \D_s$, then $\sM$ is said to have the universal approximation property. Here $\overline{\sM}$ refers to the closure of $\sM$ in the Euclidean topology.

A stochastic map or Markov kernel with input space $\{0,1\}^d$ and output space $\{0,1\}^s$ is a map 
$P \colon \{0,1\}^d \to \D_s$. Such a Markov kernel can be see as a $2^d \times 2^s$ matrix with non-negative entries and with rows that sum to $1$. The $i$-th row is the probability distribution over $\{0,1\}^s$ corresponding to the $i$-th input. 
The set of all Markov kernels is written as  
\begin{equation}
	\D_{d,s} := \Big\{ P \in \R^{2^d \times 2^s} \, : \, P_{ij} \geq 0, \; \sum_{j=1}^{2^s} P_{ij} = 1 \text{ for all } i = 1,2,\dots, 2^d \Big\}.
\end{equation}
One can see that $\D_{d,s}$ is the $2^d$-fold Cartesian product of $\D_s$, and thus is a polytope of dimension $2^d(2^s-1)$. 
A model of Markov kernels $\sN$ is simply a subset of $\D_{d,s}$. 
If a model $\sN$ fills this polytope, meaning $\overline{\sN} = \D_{d,s}$, it is said to be a universal approximator of Markov kernels.

An important class of probability distributions are the factorizable or independent distributions,
for which the probability of observing a joint state ${\bf z}\in\{0,1\}^d$ is just the product of the probabilities of observing each state $z_j\in\{0,1\}$, $j=1,\ldots, d$ individually. 
These distributions can be written as 
\begin{align}
	p({\bf z}) 
	= \prod_{j = 1}^d p_{z_j}(z_j)
	= \prod_{j = 1}^d p_j^{z_j}(1-p_j)^{1-z_j}, \;\; \forall {\bf z} = (z_1,\dots,z_d) \in \{0,1\}^d,
	\label{eq:indepmod}
\end{align}
where $p_{z_j}$ is a probability distribution over $\{0,1\}$ and $p_j = p_{z_j}(z_j=1)\in [0,1]$ is the probability of the event $z_j=1$, for $j = 1,\ldots, d$. 
We denote the set of all factorizable distributions of $d$ binary variables, of the form given above, by $\sE_d$. 
The Hamming distance $\|a - b\|_H$ between two vectors $a$ and $b$ is the number of positions where $a$ and $b$ differ.  
One notable property of $\sE_d$ is that if ${\bf x'},{\bf x''} \in \{0,1\}^d$ have Hamming distance $\|{\bf x'} - {\bf x''}\|_H = 1$, 
then \textit{any} probability distribution $p \in \D_d$ with $\operatorname{supp}(p) = \{{\bf x'},{\bf x''}\}$ is in $\sE_d$. 

Certain configurations of binary vectors will be important in our analysis. 
The set of $d$-bit binary vectors can be visualized as the vertex set of the $d$-dimensional hypercube. 
If ${\bf x'},{\bf x''} \in \{0,1\}^d$ have Hamming distance $\|{\bf x'} - {\bf x''}\|_H = 1$, they form an edge of the $d$-cube. 
For this reason, they are sometimes referred to as an \textit{edge pair} in the literature. 
A codimension $0 \leq j \leq d$ face of the $d$-cube consists of the $2^{d-j}$ vectors having the same $j$ bits in common.

\subsection*{Stochastic Feedforward Networks}

We consider stochastic networks known as Bayesian sigmoid belief networks \citep{NEAL199271}, which are Bayesian networks \citep{Pearl:1988:PRI:52121,lauritzen1996graphical} with conditional distributions taking the form of a logistic sigmoid function applied to a linear combination of the parent variables. 
Details can be found in \citep{Saul:1996:MFT:1622737.1622741} and \citep[][Section~8.1]{Bishop:2006:PRM:1162264}. 

Each unit of the network represents a random variable and edges capture dependencies between variables. 
In our discussion, all of the units of the network are binary. 
The graphs are directed and acyclic, so that the units can be arranged into a sequence of layers. 
We will focus on the case where consecutive layers are fully connected and there are no intralayer and no skip connections. 
The units in each layer are conditionally independent given the state of the units in the previous layer. 
Figure~\ref{fig:zero} shows an example of such an architecture. 

We denote the binary inputs by ${\bf x} \in \{0,1\}^d$, and the outputs by ${\bf y} \in \{0,1\}^s$. 
The network's computational units take states in $\{0,1\}$ with activation probabilities given by the sigmoid function applied to an affine transformation of the previous layer's values. 
Specifically, given a state ${\bf h}^{l-1}\in\{0,1\}^{m_{l-1}}$ of the $m_{l-1}$ units in layer $l-1$, the $j$-th unit of the $l$-th layer \textit{activates} (i.e.\ it takes state $h_j^l = 1$) with probability 
\begin{equation}
	p(h_j^l = 1 | {\bf h}^{l-1})
	= \s({\bf W}_j^l{\bf h}^{l-1} + b_j^l) 
	= \frac{1}{1 + e^{-({\bf W}_j^l{\bf h}^{l-1} +  b_j^l)}}. 
\end{equation}
Here ${\bf W}^l_{j}\in\mathbb{R}^{1\times m_{l-1}}$ is a row vector of weights and $b_j^l\in\mathbb{R}$ is a bias. 
The weights and bias of all units in layer $l$ are collected in a matrix ${\bf W}^l \in \R^{m_l} \times \R^{m_{l-1}}$ and a vector ${\bf b}^l = (b_1^l, \dots, b_{m_l}^l) \in \R^{m_l}$. 
We denote the parameters (weights and biases) of the entire network collectively by $\theta$. 
Note that the inverse of the sigmoid function $\s$ is known as the \textit{logit} function $\s^{-1}(x) = \log{(\frac{x}{1-x})} = \log({x}) - \log{(1-x)}.$ 
The units in layer $l$ are conditionally independent given the state ${\bf h}^{l-1}$ of the units in the preceding layer. The probability of observing state ${\bf h}^l = (h_1^l, \dots, h_{m_l}^l) \in \{0,1\}^{m_l}$ at layer $l$ given ${\bf h}^{l-1}$ is
\begin{equation}
p({\bf h}^l \, | \, {\bf h}^{l-1}) = \prod_{j = 1}^{m_l} \s({\bf W}^l_j{\bf h}^{l-1} + b_{j}^l)^{h_j^l} \big(1-\s({\bf W}^l_j{\bf h}^{l-1} + b_{j}^l) \big)^{1-h_j^l}. 
\label{eq:module}
\end{equation}

Given an input ${\bf x}$, the conditional distribution of all units in a network with $L+2$ layers (including input and output layers) can be written as 
\begin{equation}
p({\bf h}^1, {\bf h}^2, \dots, {\bf h}^L, {\bf y}|{\bf x}) = p({\bf y} | {\bf h}^L)p({\bf h}^L | {\bf h}^{L-1}) \cdots p({\bf h}^1| {\bf x}) . 
\label{eq:jointdistribution}
\end{equation}
By marginalizing over the hidden layers, which are all layers other than the input and output layers, one obtains the conditional distribution of the output given the input as
\begin{equation}
p({\bf y} | {\bf x}) 
= \sum_{{\bf h}^1} \cdots \sum_{{\bf h}^L} p({\bf y} | {\bf h}^L)p({\bf h}^L | {\bf h}^{L-1}) \cdots p({\bf h}^1| {\bf x}). 
\label{eq:condiprob}
\end{equation}
In particular, a network with fixed parameters represents a Markov kernel in $\D_{d,s}$. When we allow the network's parameter $\theta$ to vary arbitrarily, we obtain the set of all Markov kernels in $\Delta_{d,s}$ that are representable by the particular network architecture. 
The architecture is fully determined by the number of layers and the sizes $m_1,\ldots, m_L$ of the hidden layers. 
We call a network \textit{shallow} if $L = 1$, and \textit{deep} if $L > 1$. 

We denote $F_{d,s}\subseteq\D_{d,s}$ the set of all Markov kernels that can be represented by a network module of the form \eqref{eq:module} with an input layer of size $d$ and an output layer of size $s$, with no hidden layers. 
Networks with $L > 1$ can be seen as the composition of $L+1$ such network modules. 
We denote $F_{d,m_1,\dots,m_L,s} := F_{m_L,s} \,\circ\, F_{m_{L-1},m_L} \,\circ\, \cdots \,\circ\, F_{m_1,m_2} \,\circ\, F_{d,m_1} \subseteq \D_{d,s}$ the set of all Markov kernels of the form \eqref{eq:condiprob} representable by a network architecture with an input layer of size $d$, hidden layers of size $m_l$ for $l = 1,\dots,L$, and an output layer of size $s$. 

The general task in practice for our feedforward stochastic network is to learn a conditional distribution $p^*(\cdot | {\bf x})$ for each given input ${\bf x}$. 
In other words, when providing the network with input ${\bf x}$, the goal is to have the outputs ${\bf y}$ distributed according to some target distribution $p^*(\cdot | {\bf x})$. We will be interested in the question of which network architectures have the universal approximation property, meaning that they are capable of representing \textit{any} Markov kernel in $\D_{d,s}$ with arbitrarily high accuracy.

Our analysis builds on previous works discussing closely related types of stochastic networks. For completeness, we now provide the definition of those networks. 
A restricted Boltzmann machine (RBM) with $m$ hidden and $d$ visible binary units is a probability model in $\D_d$ consisting of the distributions 
\begin{equation}
p({\bf x})  = \sum_{{\bf h}\in \{0,1\}^m } \frac{1}{Z({\bf W},{\bf b},{\bf c})}\exp({\bf x}^\top {\bf W} {\bf h} + {\bf x}^\top {\bf b} + {\bf h}^\top {\bf c} ),\quad \forall {\bf x}\in\{0,1\}^d, 
\label{eq:RBM}
\end{equation}
where ${\bf W}\in\mathbb{R}^{d\times m}$, ${\bf b}\in\mathbb{R}^d$, ${\bf c}\in\mathbb{R}^m$ are weights and biases, and $Z$ is defined in such a way that $\sum_{{\bf x}\in\{0,1\}^d }p({\bf x}) = 1$ for any choice of the weights and biases. This is an undirected graphical model with hidden variables. 
A deep belief network (DBN) is a probability model constructed by composing a restricted Boltzmann machine and a Bayesian sigmoid belief network as described above. 
It represents probability distributions of the form 
\begin{equation}
p({\bf y}) = \sum_{{\bf x}\in\{0,1\}^d} p({\bf y} | {\bf x}) p({\bf x}), \quad \forall {\bf y}\in\{0,1\}^s, 
\end{equation}
where $p({\bf y}| {\bf x})$ is a conditional probability distribution of the form \eqref{eq:condiprob}, and $p({\bf x})$ is a probability distribution of the form \eqref{eq:RBM}.

\section{Results for Shallow Networks}
\label{sec:shallow}

\begin{figure}
	\centering
	\tikzset{node distance=1.5cm, auto}
	\begin{tikzpicture}[scale=0.9, every node/.style={transform shape}]
	\node (L-0) at (0,0) {\unitlayer{y}{2}{s}{ }{oc}}; 
	\node (L-1) at (0,2) {\unitlayer{h}{5}{m}{ }{bl}}; 
	\node (L-2) at (0,4) {\unitlayer{x}{3}{d}{ }{ic}}; 
	
	\draw[latexnew-, arrowhead=.225cm,  line width = 1pt] (L-0) to node {${\bf W}$} (L-1);
	\draw[latexnew-, arrowhead=.225cm,  line width = 1pt] (L-1) to node {${\bf V}$}  (L-2);
	
	\node (c) at (2,3) { };
	\draw[latexnew-, arrowhead=.225cm,  line width = 1pt] (L-1) to node {${\bf c}$} (c);
	
	\node (b) at (2,1){ };
	\draw[latexnew-, arrowhead=.225cm,  line width = 1pt] (L-0) to node {${\bf b}$} (b);
	
	\node (Q) at (-3.5,3) {$\F_{d,m}$};
	\node (R) at (-3.5,1) {$\F_{m,s}$};
	
    \node (o) [node distance=.8cm, below of = L-0] {Output layer};
    \node (i) [node distance=.8cm, above of = L-2] {Input layer};	
	
	\end{tikzpicture}
	\caption{Feedforward network with a layer of $d$ input units, a layer of $m$ hidden units, and a layer of $s$ output units. Weights and biases are $({\bf V},{\bf c})$ for the hidden layer, and $({\bf W},{\bf b})$ for the output layer.}
	\label{figure:structure}
\end{figure}
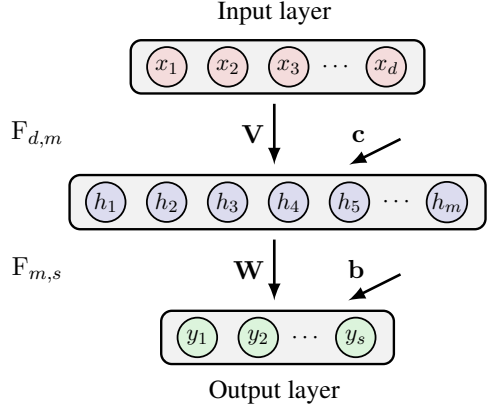

In the case of shallow networks, which have an input layer, a single hidden layer, and an output layer, as shown in Figure~\ref{figure:structure}, we are interested in the smallest size of the hidden layer which will provide for universal approximation capacity. The results in this section are collected from a technical report~\citep{montufar2015universal}, with a few adjustments. 

The shallow network $F_{d,m,s}=F_{m,s}\circ F_{d,m}$ has a total of $dm + m + sm + s$ free parameters. 
We will also consider a restricted case where the second module has fixed weights, meaning that we fix $R\in F_{m,s}$ and consider the composition $R\circ F_{d,m}$, which has only $dm + m$ free parameters. 
By comparing the number of free parameters and the dimension of $\Delta_{d,s}$, which is $2^d(2^s-1)$, it is possible to obtain (see Theorem~\ref{thm:counting} in Section~\ref{sec:lowerbounds}) the following lower bound on the minimal number of hidden units that suffices for universal approximation: 
\begin{proposition}
	\label{proposition:minimal}
	Let $d\geq 1$ and $s\geq 1$. 
	\begin{itemize}
		\item 
		If there is a $R\in \overline{\F_{m,s}}$ with $R\circ \overline{\F_{d,m}} = \Delta_{d,s}$, 
		then $m\geq 
		\frac{1}{(d+1)}2^d(2^s-1)
		$. 
		\item
		If $\overline{\F_{d,m,s}} = \Delta_{d,s}$, then $m\geq 
		\frac{1}{(s+d+1)}(2^d(2^s-1)-s)
		$. 
	\end{itemize}
\end{proposition}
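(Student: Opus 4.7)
The strategy is a pure dimension count, exploiting the fact that each $F$-module is the smooth image of a finite-dimensional Euclidean parameter space and that Euclidean dimension can only drop (never grow) under a smooth or polynomial map, even after taking closures of semialgebraic images. The dimension of the target polytope is
\begin{equation}
\dim \Delta_{d,s} \;=\; 2^d(2^s-1),
\end{equation}
since $\Delta_{d,s}$ is the $2^d$-fold product of the $(2^s-1)$-dimensional simplex $\Delta_s$. Matching the dimension of the parameter space against this number immediately forces the claimed bounds on $m$.

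For the first item, fix any $R\in\overline{F_{m,s}}$ and regard the composition map
\begin{equation}
\Phi_R\colon \mathbb{R}^{(d+1)m} \longrightarrow \Delta_{d,s},\qquad (V,c) \longmapsto R\circ P_{V,c},
\end{equation}
where $P_{V,c}\in F_{d,m}$ is the Markov kernel determined by the hidden-layer weights $V\in\mathbb{R}^{m\times d}$ and biases $c\in\mathbb{R}^m$. Because each coordinate of $\Phi_R$ is analytic in $(V,c)$, the image $R\circ F_{d,m}$ is a real-analytic (in particular semialgebraic) subset of $\Delta_{d,s}$ of dimension at most $(d+1)m$. Its closure has the same dimension, so $R\circ\overline{F_{d,m}}=\Delta_{d,s}$ can hold only when $(d+1)m\geq 2^d(2^s-1)$, i.e.\ $m\geq \frac{2^d(2^s-1)}{d+1}$.

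The second item is the same argument applied to the full shallow model. The architecture $F_{d,m,s}=F_{m,s}\circ F_{d,m}$ is parametrised by the tuple $(V,c,W,b)\in\mathbb{R}^{dm}\times\mathbb{R}^m\times\mathbb{R}^{sm}\times\mathbb{R}^s$, a total of $(d+s+1)m+s$ free parameters. The parametrisation map into $\Delta_{d,s}$ is again analytic, so $\dim \overline{F_{d,m,s}}\leq (d+s+1)m+s$; for universal approximation this must be at least $2^d(2^s-1)$, yielding $m\geq \frac{2^d(2^s-1)-s}{d+s+1}$.

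The only non-routine ingredient is the assertion that taking the closure does not raise the dimension. This is standard for images of analytic/semialgebraic maps (the closure of a semialgebraic set is semialgebraic of the same dimension), and is exactly the statement packaged as Theorem~\ref{thm:counting} in Section~\ref{sec:lowerbounds}, which one may cite in lieu of re-proving it. Everything else is bookkeeping of the parameter counts for the two modules $F_{d,m}$ and $F_{m,s}$.
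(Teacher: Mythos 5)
Your proof is correct and follows essentially the same route as the paper: the paper likewise obtains both bounds by comparing the number of free parameters --- $(d+1)m$ when $R$ is fixed, $(d+s+1)m+s$ when all weights are trainable --- against $\dim\Delta_{d,s}=2^d(2^s-1)$, and defers the justification that the closure of a smoothly parametrized family cannot exceed the dimension of its parameter space to Theorem~\ref{thm:counting}, proved via Proposition~\ref{proposition:closed-parametrization} and Sard's theorem. One minor imprecision: the parenthetical ``real-analytic (in particular semialgebraic)'' is not right as stated, since images of analytic maps need not be semialgebraic; this is harmless here because you correctly lean on Theorem~\ref{thm:counting}, whose proof handles the closure issue by reparametrizing each orthant of parameter space so that the map becomes algebraic on a compact domain.
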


In the following, we will bound the minimal number of hidden units of a universal approximator from above. 
First we consider the case where the output layer has fixed weights and biases. 
Then we consider the case where all weights and biases are free parameters. 

\subsection*{Fixed Weights in the Output Layer}

\begin{theorem}
	\label{theorem:first}
	Let $d\geq 1$ and $s\geq 1$. 
	A shallow sigmoid stochastic feedforward network with $d$ inputs, $m$ units in the hidden layer, $s$ outputs, and fixed weights and biases in the output layer is a universal approximator of Markov kernels in $\Delta_{d,s}$ whenever $m\geq 2^{d-1}(2^s-1)$. 
\end{theorem}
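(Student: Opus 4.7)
The plan is to realize any target Markov kernel $P^\ast\in \D_{d,s}$ by engineering the fixed output kernel $R\in \F_{m,s}$ once and for all, and then choosing the first-layer weights and biases as functions of $P^\ast$. The key structural idea is to partition the inputs into edge pairs and allocate a disjoint block of $2^s-1$ hidden units per pair, so that within each block a product distribution on $\{0,1\}^{2^s-1}$ is mapped onto an arbitrary target distribution in $\D_s$ through a ``sequential sharing'' rule implemented by $R$. Since there are $2^{d-1}$ edge pairs and $2^s-1$ units per block, this uses exactly $m=2^{d-1}(2^s-1)$ hidden units.

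I would begin by fixing a perfect matching of $\{0,1\}^d$ into $2^{d-1}$ edge pairs $(\bx'_i,\bx''_i)$, for instance by pairing vectors that differ in a chosen coordinate $k_0$. Label the hidden units as $h^{(i)}_1,\dots,h^{(i)}_{2^s-1}$ for $i=1,\dots,2^{d-1}$, and enumerate the output vectors as $\yb_0,\yb_1,\dots,\yb_{2^s-1}\in\{0,1\}^s$. For a target $P^\ast$, I need, for each $\bx\in\{\bx'_i,\bx''_i\}$, activation probabilities $a^{(\bx)}_1,\dots,a^{(\bx)}_{2^s-1}\in[0,1]$ in block $i$ that realize $p^\ast(\cdot\mid \bx)$ under $R$, while all units in blocks $i'\neq i$ are effectively off at $\bx$. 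Writing each log-odds $\bfW^i_j\bx+c^i_j$ as an affine function on $\{0,1\}^d$, I would decompose it as a ``finite part'' that realizes two prescribed values at $\bx'_i,\bx''_i$ (using the bias and the weight on coordinate $k_0$) plus a diverging penalty term $N\,U_i(\bx)$, where $U_i$ is an affine function that vanishes on $\{\bx'_i,\bx''_i\}$ and is strictly negative on the other $2^d-2$ inputs. Letting $N\to\infty$ forces the activations of block $i$ at inputs outside the pair to tend to $0$, while preserving the two free values on $\{\bx'_i,\bx''_i\}$; this uses exactly the two degrees of freedom that remain after the penalty.

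Next I would design the fixed $R$ as a deterministic ``priority encoder'' across blocks. Concretely, $R$ emits $\yb_\ell$ if the lowest-index unit with $h_j=1$ across all blocks is the $\ell$-th unit of some block, with the tie-breaking arranged so that only the active block contributes; if every unit is off, $R$ emits $\yb_0$. In the regime where all off-blocks are deterministically zero, this reduces on block $i$ to the rule: emit $\yb_\ell$ if $h^{(i)}_\ell=1$ and $h^{(i)}_j=0$ for $j<\ell$. A direct calculation then shows that for product activations $(a^{(\bx)}_j)_j$ the output law at $\bx$ is the triangular system
\[
q_\ell=a^{(\bx)}_\ell\prod_{j<\ell}\bigl(1-a^{(\bx)}_j\bigr),\qquad q_0=\prod_{j=1}^{2^s-1}\bigl(1-a^{(\bx)}_j\bigr),
\]
which can be solved for $(a^{(\bx)}_j)_j\in[0,1]^{2^s-1}$ for any target $(q_0,\dots,q_{2^s-1})\in \D_s$ by back-substitution. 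I would then verify that the priority encoder is itself in $\overline{\F_{m,s}}$: each output bit $y_k$ is the indicator of a downward-closed event in a strict ordering of $\mathbf{h}$, which is linearly separable, so large-weight sigmoid units approximate the deterministic rule.

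The main obstacle, and the point requiring care, is the simultaneous passage to the limit in two parameters: $N\to\infty$ to kill the off-block activations, and the weight magnitude in $R$ to freeze the priority encoder. I would control both by an explicit two-parameter diagonal argument, bounding the total-variation error at each input $\bx$ by a sum of terms each of which tends to $0$ independently, and then conclude that $R\circ \F_{d,m}$ is dense in $\D_{d,s}$ with $m=2^{d-1}(2^s-1)$. Density, together with the realization of $R$ in $\overline{\F_{m,s}}$, yields the theorem by passing to the closure $\overline{R\circ \F_{d,m}}\subseteq \overline{\F_{d,m,s}}$.
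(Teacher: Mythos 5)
Your proposal is correct and follows essentially the same route as the paper: inputs are matched into $2^{d-1}$ edge pairs each owning a block of $2^s-1$ hidden units, the first layer combines a diverging supporting-hyperplane term with two free values on the pair (the paper's Lemma~\ref{lemma:firstlayer} and Proposition~\ref{proposition:firstla}), and the fixed output layer is a deterministic set-bit priority encoder realized by exponentially scaled linear thresholds whose composition with product distributions gives the same telescoping triangular system (Lemmas~\ref{lemma:deterministicgeometric}--\ref{lemma:geometricclassification} and Proposition~\ref{proposition:univdetmap}). The only differences are cosmetic --- you key on the lowest set bit where the paper uses the highest, and you make the two-parameter limiting argument explicit where the paper absorbs it into closures.
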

The theorem will be shown by constructing $R\in\overline{\F_{m,s}}$ such that $R\circ \overline{\F_{d,m}} = \Delta_{d,s}$, whenever $m\geq \frac{1}{2} 2^d(2^s-1)$. In view of the lower bound from Proposition~\ref{proposition:minimal}, this upper bound is tight at least when $d=1$.  

When there are no input units, i.e., $d=0$, we may set $\F_{0,m} =\Ecal_m$, the set of factorizable distributions of $m$ binary variables \eqref{eq:indepmod}, which has $m$ free parameters, and $\Delta_{0,s} =\Delta_s$, the set of all probability distributions over $\{0,1\}^s$. 
Theorem~\ref{theorem:first} generalizes to this case as: 
\begin{proposition}
	\label{proposition:zero}
	Let $s\geq 2$. 
	There is an $R\in\overline{\F_{m,s}}$ with $R\circ \overline{\Ecal_m}=\Delta_s$, whenever $m\geq 2^{s}-1$. 
\end{proposition}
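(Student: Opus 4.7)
The plan is to fix $m = 2^s - 1$ (extra hidden units can be accommodated by setting their outgoing weights to zero) and exhibit a single $R \in \overline{\F_{m,s}}$ such that $R \circ \overline{\Ecal_m} = \Delta_s$. Enumerate $\{0,1\}^s$ as ${\bf y}_0 = {\bf 0}, {\bf y}_1, \ldots, {\bf y}_m$, pair the $j$-th hidden unit with ${\bf y}_j$ for $j \geq 1$, and for ${\bf h} \in \{0,1\}^m$ let $J({\bf h})$ denote the smallest index at which ${\bf h}$ has a $1$, with the convention $J({\bf 0}) = 0$. The argument rests on the observation that, even though $\overline{\Ecal_m}$ is a very small subset of $\Delta_m$, the marginal distribution of the ``first-hit index'' $J$ under a product measure can be made arbitrary, and a cleverly chosen $R$ can turn this index into an arbitrary distribution on $\{0,1\}^s$.

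For the factorizable side, given a target $\pi \in \Delta_s$ with $\alpha_j := \pi({\bf y}_j)$, I take $q \in \Ecal_m$ to be the product distribution with marginals $p_j = \alpha_j / (1 - \alpha_1 - \cdots - \alpha_{j-1})$; a direct telescoping gives $P_q(J = j) = \alpha_j$ for $j = 1, \ldots, m$ and $P_q({\bf h} = {\bf 0}) = \alpha_0$. Degenerate denominators are handled by continuity inside $\overline{\Ecal_m}$.

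For the decoder, I will use a geometric weight scheme of the form
\[
{\bf W}_{kj} = (2({\bf y}_j)_k - 1)\, M\, \lambda^{m-j}, \qquad b_k = -M/2,
\]
with $\lambda \geq 3$ and $M \to \infty$. If ${\bf h}$ has first $1$ at position $J$, the $j = J$ term contributes $(2({\bf y}_J)_k - 1)\, M \lambda^{m-J}$, while all later terms together are bounded in absolute value by $M \sum_{j > J}\lambda^{m-j} \leq M \lambda^{m-J}/(\lambda - 1) \leq M \lambda^{m-J}/2$, so they cannot flip the sign of the leading term. Hence ${\bf W}_k {\bf h} + b_k \to +\infty$ or $-\infty$ according as $({\bf y}_J)_k$ equals $1$ or $0$, and ${\bf W}_k {\bf 0} + b_k = -M/2 \to -\infty$. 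Consequently $R^{(M)}(\cdot \mid {\bf h}) \to \delta_{{\bf y}_{J({\bf h})}}$ uniformly in ${\bf h}$, so the limit kernel $R$ lies in $\overline{\F_{m,s}}$.

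Combining the two pieces, a one-line marginalization gives
\[
R \circ q \;=\; \sum_{{\bf h}} q({\bf h})\, \delta_{{\bf y}_{J({\bf h})}} \;=\; \sum_{j=0}^{m} \alpha_j\, \delta_{{\bf y}_j} \;=\; \pi,
\]
proving $R \circ \overline{\Ecal_m} = \Delta_s$. The main obstacle is designing one fixed affine-plus-sigmoid decoder that simultaneously routes the $2^m$ hidden configurations to the correct outputs; the geometric decay $\lambda^{m-j}$ is precisely what makes the first active coordinate of ${\bf h}$ dominate every linear combination ${\bf W}_k {\bf h}$, while the irrelevance of the ``garbage'' configurations --- those occurring with vanishing probability under the constructed $q$ --- is what lets the construction get by with only $2^s - 1$ hidden units.
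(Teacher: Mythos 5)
Your proof is correct and follows essentially the same route as the paper: the paper's Lemma~\ref{lemma:deterministicgeometric}/Lemma~\ref{lemma:geometricclassification} build the same deterministic decoder $\delta_{\bin_s l({\bf z})}$ (keyed to the \emph{largest} active index, with geometrically \emph{growing} column weights, rather than your first-hit index with geometrically decaying weights --- an immaterial relabeling), and Proposition~\ref{proposition:univdetmap} establishes surjectivity via the same telescoping product $(1-p_i)\prod_{j>i}p_j$ that your stick-breaking choice of marginals inverts explicitly. Both arguments then pass from $\Delta_s^+$ to all of $\Delta_s$ by closure, so the two proofs coincide in substance.
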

This bound is always tight, since the network uses exactly $2^s-1$ parameters to approximate every distribution from $\Delta_{s}$ arbitrarily well. 
For $s=1$, one hidden unit is sufficient and necessary for universal approximation.

\subsection*{Trainable Weights in the Output Layer}

When we allow for trainable weights and biases in both layers, we obtain a slightly more compact bound: 
\begin{theorem}
	\label{theorem:second}
    Let $d\geq 1$ and $s\geq 2$. A shallow sigmoid stochastic feedforward network with $d$ inputs, $m$ units in the hidden layer, and $s$ outputs is a universal approximator of kernels in $\Delta_{d,s}$ whenever $m\geq 2^{d}(2^{s-1}-1)$. 
\end{theorem}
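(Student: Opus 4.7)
The plan is to adapt the strategy of Theorem~\ref{theorem:first}, exploiting the newly available trainable output-layer weights to reduce the per-input hidden budget. The construction I would use is built around an ``input selector'' gadget in the hidden layer together with a generative sublemma applied per input.

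First, I would partition the $m = 2^d(2^{s-1}-1)$ hidden units into $2^d$ blocks $B_\bx$ of size $2^{s-1}-1$, one per input $\bx\in\{0,1\}^d$. Using the standard large-weight construction, I would scale the input-to-hidden weights and biases so that, in the closure limit, each unit in $B_\bx$ has activation identically $0$ whenever the input is any $\bx' \neq \bx$, and has an independently tunable Bernoulli activation when the input is $\bx$. For input $\bx$, all blocks other than $B_\bx$ are then inactive, and the output distribution reduces to
\begin{equation*}
p(\yb \mid \bx) \;=\; \sum_{\hb_\bx \in \{0,1\}^{|B_\bx|}} q_\bx(\hb_\bx)\, R(\yb \mid \hb_\bx,\mathbf{0}),
\end{equation*}
where $q_\bx$ ranges over product distributions on $\{0,1\}^{|B_\bx|}$ and $R$ denotes the shared sigmoid output layer. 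It therefore suffices to show a generative sublemma: a single-hidden-layer sigmoid network with $2^{s-1}-1$ hidden units and trainable output weights is a universal approximator of $\Delta_s$.

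For the sublemma, my approach would be an edge-pair decomposition of $\Delta_s$: fix a partition of $\{0,1\}^s$ into $2^{s-1}$ edge pairs $\{E_k\}_{k=1}^{2^{s-1}}$ (each pair differing in exactly one coordinate) and write the target as a mixture $p^* = \sum_{k=1}^{2^{s-1}} \lambda_k \mu_k$, where each $\mu_k$ is an arbitrary Bernoulli supported on $E_k$ and is therefore a product distribution on $\{0,1\}^s$. I would then encode the mixture weights $(\lambda_k)$ via a ``first-$1$ index'' map $k(\hb) = \min\{j : h_j = 1\}$ on the hidden layer, where $k = 2^{s-1}$ corresponds to the all-zeros state; the induced distribution $P(k(\hb) = k) = q_k \prod_{i<k}(1-q_i)$ can realize any mixture weights $\lambda$ on $2^{s-1}$ atoms as the $2^{s-1}-1$ independent Bernoulli parameters $q_i$ vary over $[0,1]$. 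The trainable output layer is to be tuned so that on the hidden state representing index $k$, it produces precisely the edge-product distribution $\mu_k$.

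The hard part will be this last step of the sublemma: the sigmoid output layer produces Bernoulli parameters that depend on \emph{all} hidden bits via an affine form, not only on the first-$1$ index. The ``tail'' bits past position $k(\hb)$ are random under $q_\bx$ and generally perturb the output sigmoid. I would resolve this with a joint rescaling argument, driving the per-block Bernoulli parameters and the output-layer weights to their limits together so that in the closure only the ``first $1$ at position $k$, tail zero'' configurations carry appreciable mass. Verifying closure surjectivity onto $\Delta_s$ under this joint rescaling, and confirming that the shared base distribution $R(\yb\mid\mathbf{0})$ (which contributes to every input block simultaneously) is consistent with the edge-pair decomposition for all $2^d$ target kernels at once, will be the technical crux. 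Once the sublemma is established, the per-input constructions assemble without interference to yield universal approximation on $\Delta_{d,s}$.
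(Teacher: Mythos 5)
Your architecture matches the paper's: one block of $2^{s-1}-1$ hidden units per input, an input-selector first layer, and a reduction to the generative statement that $2^{s-1}-1$ hidden units feeding a trainable sigmoid output layer fill $\Delta_s$; the paper even uses the same edge-pair mixture idea, with the specific pairs $\{\operatorname{bin}_s(2l),\operatorname{bin}_s(2l)+1\}$ and a ``last-$1$'' index in place of your ``first-$1$'' index. The gap is exactly at the step you flag as the crux, and the resolution you sketch cannot work. If each index class $k$ is to be represented, in the closure, only by its canonical ``pivot bit set, tail zero'' state, you must drive the tail Bernoulli parameters $q_i$ ($i>k$) to $0$ or $1$; but those same parameters appear as factors in the stick-breaking masses $P(k(\hb)=j)=q_j\prod_{i<j}(1-q_i)$ of the \emph{other} classes, so concentrating the conditionals necessarily collapses some class masses to $0$. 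The alternative of making the output pre-activation asymptotically insensitive to the tail bits forces the corresponding output weights to $0$, which removes the very parameters needed to tune the within-pair splits of the later classes. There is no regime of the joint rescaling in which the class masses and the per-class conditional outputs are simultaneously free, so the sublemma as you set it up does not close.

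The paper's Lemma~\ref{lemma:deterministicgeometric2} and Proposition~\ref{proposition:sendod} avoid this by never asking the output conditional on a class to converge to a fixed $\mu_k$. Instead, $s-1$ output units encode the class index deterministically (via the orthant map of Lemma~\ref{lemma:geometricclassification}), while the one remaining output unit is allowed to depend on the entire hidden state, producing a state-dependent split $\lambda_{\bf z}$; what is controlled is only the \emph{aggregate} ratio $\sum_{{\bf z}\colon l({\bf z})=l}\lambda_{\bf z}\,p({\bf z})\big/\sum_{{\bf z}\colon l({\bf z})=l}(1-\lambda_{\bf z})\,p({\bf z})$, which sweeps $(0,\infty)$ continuously as the single weight $\mu_l$ runs over $\R$. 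The triangular structure of the classes (members of $Z_j$ have $z_i=0$ for all $i>j$) lets one tune $\mu_0,\mu_1,\dots$ sequentially without disturbing the classes already fixed. Replacing your concentration argument with this aggregate intermediate-value argument is what is needed to complete the proof.
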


This bound on the number of hidden units is slightly smaller than the one obtained for fixed weights in the output layer. However, it always leaves a gap to the corresponding parameter counting lower bound. 

As before, we can also consider the setting where there are no input units, $d=0$, in which case the units in the hidden layer may assume an arbitrary product distribution, $\F_{0,m} =\Ecal_m$. In this case we obtain: 
\begin{proposition}
Let $s\geq1$. Then $\overline{F_{m,s}}\,\circ\, \overline{\Ecal_m} = \Delta_s$, whenever $m\geq 2^{s-1}-1$. 
\end{proposition}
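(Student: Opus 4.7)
The plan is to reduce the claim to approximating finite mixtures of edge-pair product distributions, which can then be realized by a carefully designed flag-encoded hidden layer together with a priority-encoded output layer. First, I partition $\{0,1\}^s$ into $K = 2^{s-1}$ edge pairs $\{(v,0),(v,1)\}_{v \in \{0,1\}^{s-1}}$; by the edge-pair fact recalled in Section~\ref{sec:settings}, any distribution supported on such a pair is already a product distribution on $\{0,1\}^s$. Consequently, every $p^* \in \Delta_s$ decomposes as $p^* = \sum_v \alpha_v \pi_v$, where $\pi_v \in \Ecal_s$ is deterministic in the first $s-1$ coordinates (equal to $v$) and Bernoulli with parameter $p_v$ in the last coordinate. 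It then suffices to realize any such $K$-component mixture inside $\overline{F_{m,s}\circ \Ecal_m}$ for $m = K-1 = 2^{s-1}-1$.

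To this end, fix a bijection $k \mapsto v(k)$ between $\{0,1,\ldots,m\}$ and $\{0,1\}^{s-1}$, and partition $\{0,1\}^m$ into the flag cells $G_0 = \{\mathbf{0}\}$ and $G_k = \{\mathbf{h} : h_1 = \cdots = h_{k-1} = 0,\ h_k = 1\}$ for $k \geq 1$. A product $q = \prod_i \mathrm{Bern}(q_i)$ assigns cell masses $Q_0 = \prod_i(1-q_i)$ and $Q_k = \big(\prod_{i<k}(1-q_i)\big)q_k$, which depend triangularly on $\{q_i\}$, so I can pick the $q_i$'s so that $Q_k = \alpha_{v(k)}$ for every $k$. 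For the first $s-1$ output units I use a priority-encoder parametrization: choose weights with sharply decreasing magnitudes $|W_{j,1}| \gg |W_{j,2}| \gg \cdots \gg |W_{j,m}| \gg |b_j|$ and signs $\operatorname{sign}(W_{j,k}) = 2v(k)^{(j)}-1$, $\operatorname{sign}(b_j) = 2v(0)^{(j)}-1$. Scaling all these magnitudes by a common factor $M \to \infty$, the sign of $\mathbf{W}_j\mathbf{h} + b_j$ is governed by the first nonzero coordinate of $\mathbf{h}$, so $\sigma(\mathbf{W}_j\mathbf{h} + b_j) \to v(k)^{(j)}$ for every $\mathbf{h} \in G_k$ and every $j < s$. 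For the last output I keep $(b_s, W_{s,1}, \ldots, W_{s,m})$ bounded; the conditional Bernoulli parameter inside $G_k$ is then $\bar p_k = E_{\mathbf{h} \mid G_k}\big[\sigma\big(b_s + W_{s,k} + \sum_{i>k} W_{s,i}h_i\big)\big]$, a continuous and strictly monotone function of the next unset weight, so I can solve $\bar p_k = p_{v(k)}$ recursively in the order $k = 0, m, m-1, \ldots, 1$, determining $b_s, W_{s,m}, W_{s,m-1}, \ldots, W_{s,1}$ one at a time.

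Passing to the limit $M \to \infty$, the output distribution converges to $\sum_k Q_k\,\mathbf{1}[\mathbf{y}_{1:s-1} = v(k)]\,\mathrm{Bern}(y_s;\bar p_k) = \sum_k \alpha_{v(k)} \pi_{v(k)} = p^*$, showing $p^* \in \overline{F_{m,s} \circ \Ecal_m}$. The main obstacle is arranging the parametrization so that the two groups of parameters that control different aspects of the target, namely the $\{q_i\}$ controlling the mixture weights and the $\{W_{s,i}, b_s\}$ controlling the within-cell Bernoulli parameters, can be set without interfering with each other or with the priority-encoder signs. The flag partition is chosen precisely because $Q_k$ and $\bar p_k$ then depend on disjoint and recursively solvable blocks of parameters. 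Boundary mixtures in which some $\alpha_{v(k)} \in \{0,1\}$ or $p_{v(k)} \in \{0,1\}$ are accommodated by sending the relevant $q_i$ or $W_{s,k}$ to $0$, $1$, or $\pm\infty$, which is precisely why one works in the closure $\overline{F_{m,s} \circ \Ecal_m}$ rather than in $F_{m,s} \circ \Ecal_m$ itself.
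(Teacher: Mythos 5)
Your proof is correct and takes essentially the same route as the paper: your flag cells $G_k$ are the paper's sets $Z_l=\{{\bf z}\colon l({\bf z})=l\}$ up to reversing the bit order, the triangular solve for the cell masses $Q_k$ is Proposition~\ref{proposition:univdetmap}, and the priority-encoded first $s-1$ outputs together with the recursively tuned soft last output reproduce Lemma~\ref{lemma:geometricclassification} and the sequential tuning of the $\mu_l$ in Lemma~\ref{lemma:deterministicgeometric2} combined with Proposition~\ref{proposition:sendod}. The only cosmetic difference is that you phrase the target as an explicit mixture of edge-pair product distributions rather than verifying the ratios $q_{2i}/q_{2i+1}$ directly.
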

For $s=1$, the bias of the output unit can be adjusted to obtain any desired distribution, and hence no hidden units are needed. For $s=2$, a single hidden unit, $m=1$, is sufficient and necessary for universal approximation, so that the bound is tight. For $s=3$, three hidden units are necessary \citep[][Proposition~3.19]{doi:10.1137/140957081}, so that the bound is also tight in this case.

\section{Proofs for Shallow Networks} 
\label{sec:shallow_proofs}

We first give an outline of the proofs, and then proceed with the analysis, first for the case of fixed weights in the output layer, and then for the case of trainable weights in the output layer. 
Our strategy for proving Theorem~\ref{theorem:first} and Theorem~\ref{theorem:second} can be summarized as follows: 

\begin{itemize}
	\item 
	First we show that the first layer of $\F_{d,m,s}$ can approximate Markov kernels arbitrarily well, 
	which fix the state of some units, depending on the input, 
	and have an arbitrary product distribution over the states of the other units. 
	The idea is illustrated in Fig.~\ref{figure:proofidea}. 
	\item 
	Then we show that the second layer can approximate deterministic kernels arbitrarily well, whose rows are copies of all point measures from $\Delta_s$, ordered in a good way with respect to the different inputs. 
	Note that the point measures are the vertices of the simplex $\Delta_s$. 
	\item 
	Finally, we show that the set of product distributions of each block of hidden units is mapped to the convex hull of the rows of the kernel represented by the second layer, which is $\Delta_s$. 
	\item 
	The output distributions of distinct sets of inputs is modeled individually by distinct blocks of hidden units and so we obtain the universal approximation of Markov kernels. 
\end{itemize}

The goal of our analysis is to construct the individual pieces of the network as compact as possible. 
Lemma~\ref{lemma:firstlayer} will provide a trick that allows us to use each block of units in the hidden layer for a pair of distinct input vectors at the same time. 
This allows us to halve the number of hidden units that would be needed if each input had an individual block of active hidden units. 
Similarly, Lemma~\ref{lemma:deterministicgeometric2} will provide a trick for producing more flexible mixture components at the output layer than simply point measures. This comes at the expense of allowing only one input per hidden block, but it allows us to nearly halve the number of hidden units per block, for a slight reduction in the total number of hidden units. 

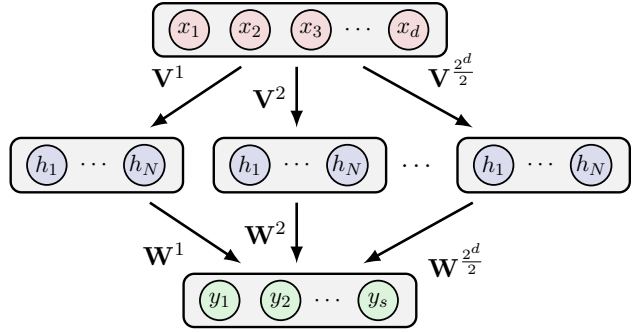
\begin{figure}
	\centering
	\tikzset{node distance=1.5cm, auto}
	\begin{tikzpicture}[scale=0.9, every node/.style={transform shape}]
	\node (L-0) at (0,0) {\unitlayer{y}{2}{s}{ }{oc}}; 
	\node (L-1a) at (-3,2) {\unitlayer{h}{1}{N}{ }{bl}}; 
	\node (L-1b) at (0,2) {\unitlayer{h}{1}{N}{ }{bl}}; 
	\node (L-1d) at (1.8,2) {$\dots$}; 
	\node (L-1c) at (3.6,2) {\unitlayer{h}{1}{N}{ }{bl}}; 
	\node (L-2) at (0,4) {\unitlayer{x}{3}{d}{ }{ic}}; 
	
	\draw[latexnew-, arrowhead=.225cm,  line width = 1pt] (L-0) to node {${\bf W}^1$} (L-1a);
	\draw[latexnew-, arrowhead=.225cm,  line width = 1pt] (L-0) to node {${\bf W}^2$} (L-1b);
	\draw[latexnew-, arrowhead=.225cm,  line width = 1pt] (L-0) to node [swap] {${\bf W}^{\frac{2^d}{2}}$} (L-1c);
	\draw[latexnew-, arrowhead=.225cm,  line width = 1pt] (L-1a) to node {${\bf V}^1$}  (L-2);
	\draw[latexnew-, arrowhead=.225cm,  line width = 1pt] (L-1b) to node {${\bf V}^2$}  (L-2);
	\draw[latexnew-, arrowhead=.225cm,  line width = 1pt] (L-1c) to node [swap] {${\bf V}^{\frac{2^d}{2}}$}  (L-2);
	
	\end{tikzpicture}
	\caption{Illustration of the construction used in our proof. 
		Each block of hidden units is active on a distinct subset of possible inputs. 
		The output layer integrates the activities of the block that was activated by the input, 
		and produces corresponding activities of the output units. }
	\label{figure:proofidea}
\end{figure}

\subsection{Fixed Weights in the Output Layer}
\label{section:proof1}

\subsubsection*{The First Layer} 

We start with the following lemma. 

\begin{lemma} 
	\label{lemma:firstlayer}
	Let ${\bf x}',{\bf x}''\in\{0,1\}^d$ differ only in one entry, and let $q',q''$ be any two distributions on $\{0,1\}$. 
	Then $F_{d,1}$ can approximate the following arbitrarily well: 
	\begin{equation*}
	p(\cdot|{\bf x}) = 
	\left\{
	\begin{array}{l l}
	q', & \text{if ${\bf x}={\bf x}'$}\\
	q'', & \text{if ${\bf x}={\bf x}''$}\\
	\delta_{0}, & \text{else}
	\end{array}
	\right.. 
	\end{equation*}
\end{lemma}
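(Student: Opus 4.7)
The plan is to construct explicit weights and bias for the single hidden unit of $F_{d,1}$ (which has conditional distribution $p(1|{\bf x}) = \sigma({\bf V}{\bf x} + c)$ for a row vector ${\bf V}\in\mathbb{R}^{1\times d}$ and bias $c\in\mathbb{R}$) that drive the activation probability to the targets on ${\bf x}',{\bf x}''$ while sending it to $0$ on all other inputs in the limit of large parameter magnitude.

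Let $k$ be the unique coordinate on which ${\bf x}'$ and ${\bf x}''$ differ, and let $S_1 = \{j\neq k : x'_j = x''_j = 1\}$ and $S_0 = \{j\neq k : x'_j = x''_j = 0\}$. For a large parameter $\alpha>0$ and auxiliary scalars $\gamma,\beta$ to be chosen, set
\begin{equation*}
V_j = \begin{cases} +\alpha, & j\in S_1, \\ -\alpha, & j\in S_0, \\ \gamma, & j=k, \end{cases}
\qquad c = \beta - \alpha|S_1|.
\end{equation*}
A direct computation gives ${\bf V}{\bf x}+c = \gamma x_k + \beta - \alpha\bigl(\sum_{j\in S_1}(1-x_j) + \sum_{j\in S_0}x_j\bigr)$. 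The bracketed penalty term vanishes precisely when ${\bf x}\in\{{\bf x}',{\bf x}''\}$ and is at least $1$ otherwise. Hence for ${\bf x}\notin\{{\bf x}',{\bf x}''\}$ the preactivation is at most $\gamma + \beta - \alpha$, so $\sigma({\bf V}{\bf x}+c)\to 0$ as $\alpha\to\infty$, and $p(\cdot|{\bf x})\to\delta_0$ in the closure.

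It remains to match the targets on ${\bf x}',{\bf x}''$. Writing, without loss of generality, $x'_k = 0$ and $x''_k = 1$, the preactivations at these two inputs are exactly $\beta$ and $\gamma + \beta$ regardless of $\alpha$. Provided $q'(1),q''(1)\in(0,1)$, we set $\beta = \sigma^{-1}(q'(1))$ and $\gamma = \sigma^{-1}(q''(1)) - \sigma^{-1}(q'(1))$, which matches the activation probabilities exactly. The boundary cases $q'(1),q''(1)\in\{0,1\}$ are handled by taking $\beta$ or $\gamma+\beta$ to $\pm\infty$, yielding the targets in the closure.

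The argument has no real obstacles: the only subtlety is bookkeeping the bias $c = \beta - \alpha|S_1|$ so that the preactivations at ${\bf x}'$ and ${\bf x}''$ decouple from the separating parameter $\alpha$, while all other preactivations are driven to $-\infty$. This is precisely the \emph{trick} the lemma will provide for the main construction, namely that a single hidden unit can serve a pair of Hamming-adjacent inputs simultaneously, which is what halves the hidden layer size in Theorem~\ref{theorem:first}.
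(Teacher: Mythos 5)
Your proof is correct and follows essentially the same route as the paper's: the paper invokes a supporting hyperplane of the edge $\{{\bf x}',{\bf x}''\}$ scaled by a large $\alpha$ plus a correction $(\gamma''-\gamma')e_k^\top$ at the differing coordinate, and your explicit choice of $V_j=\pm\alpha$ off coordinate $k$ with $c=\beta-\alpha|S_1|$ is exactly such a hyperplane written out concretely. (One trivial nit: for ${\bf x}\notin\{{\bf x}',{\bf x}''\}$ the preactivation is bounded by $\max(0,\gamma)+\beta-\alpha$ rather than $\gamma+\beta-\alpha$ when $\gamma<0$, but this still tends to $-\infty$ as $\alpha\to\infty$, so the conclusion is unaffected.)
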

\begin{proof}
    Given the input weights and bias, ${\bf V}\in\R^{1\times d}$ and $c\in\R$, 
	for each input ${\bf x}\in\{0,1\}^d$ the output probability is given by 
	\begin{equation}
	p(z=1|{\bf x}) = \sigma({\bf V} {\bf x} + c).
	\label{eq:output}
	\end{equation} 
	Since the two vectors ${\bf x}',{\bf x}''\in\{0,1\}^d$ differ only in one entry, they form an edge pair $E$ of the $d$-dimensional unit cube. 
	Let $l\in[d]:=\{1,\ldots,d\}$ be the entry in which they differ, with $ x'_l=0$ and $x''_l=1$. 
	Since $E$ is a face of the cube, there is a supporting hyperplane of $E$. 
	This means that there are ${\bf \tilde V}\in \R^{1\times d}$ and $\tilde c\in\R$ with 
	${\bf \tilde V }{\bf x} + \tilde c =0$ if ${\bf x}\in E$, 
	and ${\bf \tilde V }{\bf x} + \tilde c < -1$ if ${\bf x}\in\{0,1\}^d\setminus E$. 
	Let $\g'= \sigma^{-1}(q'(z=1))$ and $\g''=\sigma^{-1}(q''(z=1))$. 
	We define $c= \alpha \tilde c + \g'$ and ${\bf V}=\alpha{\bf \tilde V} + (\g'' - \g')e_l^\top$. 
	Then, as $\alpha\to\infty$, 
	\begin{equation*}
	{\bf V} {\bf x} +c = 
	\left\{
	\begin{array}{l l}
	\g', & \text{if ${\bf x}={\bf x}'$}\\
	\g'', & \text{if ${\bf x}={\bf x}''$}\\
	-\infty, & \text{else}
	\end{array}
	\right.. 
	\end{equation*}  
	Plugging this into~\eqref{eq:output} proves the claim. 
\end{proof}

Given any binary vector ${\bf x}=(x_1,\ldots, x_d)\in\{0,1\}^d$, 
let $\dec({\bf x}):=\sum_{i=1}^d 2^{i-1} x_i$ be its integer representation. Using the previous lemma, we obtain the following. 

\begin{proposition}
	\label{proposition:firstla}
	Let $N\geq 1$ and $m=2^{d-1} N$. 
	For each ${\bf x}\in\{0,1\}^d$, let $p(\cdot|{\bf x})$ be an arbitrary factorizing distribution from $\Ecal_{N}$. 
	The model $\F_{d,m}$ can approximate the following kernel from $\Delta_{d,m}$ arbitrarily well: 
	\begin{align*}
	P({\bf h} | {\bf x} )  = &
	\delta_{\bo}({\bf h}^0 )   \cdots  
	\delta_{\bo}({\bf h}^{\lfloor\dec({\bf x})/2\rfloor-1}) 
	p({\bf h}^{\lfloor\dec({\bf x})/2\rfloor}|{\bf x}) \\
	&
	\times
	\delta_{\bo}({\bf h}^{\lfloor\dec({\bf x})/2\rfloor+1}) \cdots
	\delta_{\bo}({\bf h}^{2^{d-1} -1}), 
	\quad \forall {\bf h}\in\{0,1\}^m, {\bf x}\in\{0,1\}^d,  
	\end{align*}
	where ${\bf h}^i= (h_{Ni+1},\ldots, h_{N(i+1)})$ for all $i\in\{0,1,\ldots,2^{d-1}-1\}$. 
\end{proposition}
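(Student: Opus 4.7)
The plan is to apply Lemma~\ref{lemma:firstlayer} in parallel to every unit of the hidden layer, organized into $2^{d-1}$ blocks of $N$ units each. The key combinatorial observation is that the pairing of inputs induced by the map ${\bf x}\mapsto\lfloor \dec({\bf x})/2\rfloor$ is exactly a pairing by edge pairs of the $d$-cube: since $\dec({\bf x}) = x_1 + 2x_2 + \cdots + 2^{d-1}x_d$, the two inputs whose $\dec$-values are $2i$ and $2i+1$ coincide in coordinates $x_2,\ldots,x_d$ and differ only in $x_1$. Thus the prescribed kernel asks each block $i$ to be ``on'' exactly for the edge pair of inputs mapping to $i$, and to be pinned at $\bo$ for every other input.

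For each block index $i\in\{0,1,\ldots,2^{d-1}-1\}$ and each intra-block index $k\in\{1,\ldots,N\}$, I will configure the weights and bias of unit $h_{Ni+k}$ by invoking Lemma~\ref{lemma:firstlayer} with ${\bf x}',{\bf x}''$ taken as the edge pair mapping to block $i$, and with $q',q''$ taken as the $k$-th Bernoulli marginals of the prescribed product distributions $p(\cdot|{\bf x}')$ and $p(\cdot|{\bf x}'')$ (which are well-defined because each $p(\cdot|{\bf x})\in\Ecal_N$ factorizes). The lemma then gives that $\F_{d,1}$ approximates the desired single-unit conditional arbitrarily well: this unit activates with the correct marginal probability on the two inputs in its edge pair, and outputs $0$ deterministically on all remaining inputs. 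Setting the parameters of all $m = 2^{d-1}N$ units in this fashion fully specifies a parameter choice for $\F_{d,m}$.

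Because the hidden units of $\F_{d,m}$ are conditionally independent given the input (cf.\ equation~\eqref{eq:module}), the joint conditional distribution over ${\bf h}\in\{0,1\}^m$ factorizes as a product of the individual unit conditionals. By construction, for any input ${\bf x}$ with $i_{{\bf x}} := \lfloor\dec({\bf x})/2\rfloor$, the product over units in block $i_{{\bf x}}$ reproduces the prescribed factorizing distribution $p({\bf h}^{i_{{\bf x}}}|{\bf x})\in\Ecal_N$, while for every $j\neq i_{{\bf x}}$ the product over block $j$ concentrates arbitrarily close to $\delta_{\bo}({\bf h}^j)$. Continuity of the finite product of Bernoulli probabilities promotes the per-unit approximations to an approximation of the full joint kernel, so the target kernel lies in $\overline{\F_{d,m}}$. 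The main obstacle is really just bookkeeping: verifying that $\lfloor\dec(\cdot)/2\rfloor$ genuinely yields edge pairs so that Lemma~\ref{lemma:firstlayer} applies to each block, and checking that independent parameter choices across units produce no interference, which is immediate from conditional independence. The analytical content is carried by Lemma~\ref{lemma:firstlayer}; the remaining argument is a clean assembly.
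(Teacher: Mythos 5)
Your proof is correct and follows essentially the same route as the paper: partition the inputs into the $2^{d-1}$ pairs $\{\dec^{-1}(2i),\dec^{-1}(2i+1)\}$, apply Lemma~\ref{lemma:firstlayer} unit by unit with the Bernoulli marginals of the target product distributions, and assemble the joint kernel via conditional independence of the hidden units. Your explicit check that each such pair is an edge pair (differing only in $x_1$ under the convention $\dec({\bf x})=\sum_i 2^{i-1}x_i$) is a detail the paper leaves implicit, but the argument is the same.
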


\begin{proof}
	We divide the set $\{0,1\}^d$ of all possible inputs into $2^{d-1}$ disjoint pairs with successive decimal values. 
	The $i$-th pair consists of the two vectors ${\bf x}$ with $\lfloor \dec({\bf x})/2 \rfloor = i$, for all $i\in\{0,\ldots,2^{d-1}-1 \}$. 
	The kernel $P$ has the property that, for the $i$-th input pair, 
	all output units are inactive with probability one, 
	except those with index $Ni+1, \ldots, N(i+1)$. 
	Given a joint distribution $q$ let $q_{j}$ denote the corresponding marginal distribution on the states of the $j$-th of unit. 
	By Lemma~\ref{lemma:firstlayer}, we can set 
	\begin{equation*}
	P_{Ni+j}(\cdot|{\bf x}) = 
	\left\{
	\begin{array}{l l}
	p_j(\cdot|{\bf x}) , & \text{if $\dec({\bf x})= 2i  $}\\
	p_j(\cdot|{\bf x}) , & \text{if $\dec({\bf x})= 2i+1$}\\
	\delta_{0}, & \text{else}
	\end{array}
	\right.  
	\end{equation*}
	for all $i\in\{0,\ldots,2^{d-1}-1 \}$ and $j\in\{1,\ldots, N\}$. 
\end{proof}

\subsubsection*{The Second Layer}

For the second layer we will consider deterministic kernels. 
Given a binary vector ${\bf z}$, 
let $l({\bf z}):=\lceil \log_2(\dec({\bf z})+1)\rceil$ denote the largest $j$ where $z_j=1$. 
Here we set $l(0,\ldots,0)=0$. 
Given an integer $l\in\{0,\ldots, 2^s-1 \}$, let $\bin_s(l)$ denote the $s$-bit representation of $l$; 
that is, the vector with $\dec(\bin_s(l))=l$.  Last, when applying any scalar operation to a vector, such as subtraction by a number, it should be understood as being performed pointwise to each vector element. 

\begin{lemma}
	\label{lemma:deterministicgeometric}
	Let $N=2^s-1$. The set $\F_{N,s}$ can approximate the following deterministic kernel arbitrarily well: 
	\begin{equation*}
	Q(\cdot | {\bf z}) = \delta_{\bin_s l({\bf z})}(\cdot), \quad \forall {\bf z}\in\{0,1\}^N. 
	\end{equation*}
\end{lemma}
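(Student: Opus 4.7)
The plan is to construct, for each output unit $j\in\{1,\ldots,s\}$, weights ${\bf W}_j \in \mathbb{R}^{1\times N}$ and a bias $b_j\in\mathbb{R}$ such that, as a common scale parameter $\alpha\to\infty$, the activation probability $\sigma({\bf W}_j {\bf z} + b_j)$ tends to $\bin_s(l({\bf z}))_j$ for every ${\bf z}\in\{0,1\}^N$. Since the output units are conditionally independent given ${\bf z}$ by \eqref{eq:module}, the joint output kernel then concentrates on $\delta_{\bin_s l({\bf z})}$, which is exactly the deterministic kernel claimed.

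For the construction I would use a geometric weighting that makes the highest-index active coordinate dominate the preactivation. Set $({\bf W}_j)_k = \alpha \cdot 2^k \cdot (2\bin_s(k)_j - 1)$ for $k=1,\ldots,N$, so $({\bf W}_j)_k = \pm \alpha\, 2^k$ with sign determined by the $j$-th bit of $\bin_s(k)$, and take $b_j = -\alpha$. For ${\bf z} = {\bf 0}$ the preactivation reduces to $-\alpha\to -\infty$ and $\sigma \to 0 = \bin_s(0)_j$, as desired. For $K:=l({\bf z})\geq 1$, using $z_K = 1$ and $z_k = 0$ for all $k>K$, the preactivation equals
\[
\alpha\, 2^K\bigl(2\bin_s(K)_j - 1\bigr) \;+\; \alpha\!\!\sum_{\substack{k<K\\ z_k=1}}\!\! 2^k\bigl(2\bin_s(k)_j - 1\bigr) \;-\; \alpha.
\]
Since the middle sum has absolute value at most $\alpha\sum_{k=1}^{K-1} 2^k = \alpha(2^K-2)$, the leading term $\pm \alpha\cdot 2^K$ dominates: a direct case analysis shows that the preactivation is at least $\alpha\cdot 2^K - \alpha(2^K-2) - \alpha = \alpha$ when $\bin_s(K)_j = 1$, and at most $-\alpha\cdot 2^K + \alpha(2^K-2) - \alpha = -3\alpha$ when $\bin_s(K)_j = 0$. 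As $\alpha\to\infty$ the sigmoid output therefore tends to $1$ and $0$ respectively, which are exactly the bits of $\bin_s(l({\bf z}))$.

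The main technical point is the geometric-dominance inequality used above: the partial sum $\sum_{k=1}^{K-1} 2^k = 2^K - 2$ is strictly smaller than $2^K$, so the sign and the unboundedly growing magnitude of the preactivation are controlled by the single term $k=K$, regardless of how the remaining $z_k$ with $k<K$ are set. This is the only place where the precise definition $l({\bf z}) = \max\{k : z_k = 1\}$ enters. The remainder of the argument — passing from marginal convergence at each output unit to joint convergence of $p({\bf y}\mid{\bf z})$ to $\delta_{\bin_s l({\bf z})}$ via the conditional independence factorization in \eqref{eq:module} — is routine.
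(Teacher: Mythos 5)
Your construction is exactly the paper's: the weights $({\bf W}_j)_k=\pm\alpha 2^k$ with sign given by the $j$-th bit of $\bin_s(k)$ and bias $-\alpha$ are the scaled version of the affine map in Lemma~\ref{lemma:geometricclassification} (whose $l$-th column is $2^{l+1}(\bin_s(l)-\tfrac12)$ with bias $-(1,\ldots,1)^\top$), and your geometric-dominance inequality $\sum_{k<K}2^k<2^K$ is precisely the sign computation that lemma relies on. The proof is correct and takes essentially the same approach, merely inlining the orthant-classification step that the paper factors out as a separate lemma.
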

In words, the ${\bf z}$-th row of $Q$ indicates the largest non-zero entry of the binary vector ${\bf z}$. 
For example, for $s=2$ we have $N=3$ and 
\begin{equation*}
Q = 
\begin{blockarray}{ccccc}
00 & 01 & 10 & 11 & \\
\begin{block}{(cccc) c}
1 &    &      &      & 000\\
& 1 &      &      & 001\\
&    & 1   &      & 010\\
&    & 1   &      & 011\\
&    &      &  1  & 100\\
&    &      &  1 & 101\\
&    &      &  1 & 110\\
&    &      &  1 & 111\\
\end{block}
\end{blockarray}. 
\end{equation*}

\begin{proof}[Proof of Lemma~\ref{lemma:deterministicgeometric}]
Given the input and bias weights, ${\bf W}\in\R^{s\times N}$ and ${\bf b}\in\R^{s}$, for each input ${\bf z}\in\{0,1\}^N$ the output distribution is the product distribution $p({\bf y}|{\bf z}) = \frac{1}{Z({\bf W}z + {\bf b})}\exp({\bf y}^\top ({\bf W}z + {\bf b}))$ in $\sE_s$ with parameter ${\bf W} {\bf z} + {\bf b}$. 
If $\sgn({\bf W} {\bf z} + {\bf b}) = \sgn({\bf x}-\tfrac12)$ for some ${\bf x}\in\{0,1\}^s$, 
then the product distribution with parameters $\alpha({\bf W}{\bf z}+{\bf b})$, $\alpha\to\infty$ tends to $\delta_{\bf x}$. 
We only need to show that there is a choice of ${\bf W}$ and ${\bf b}$ with 
$\sgn({\bf W} {\bf z} +{\bf b}) = \sgn(f({\bf z}) -\tfrac12)$, $f({\bf z})= \bin_s(l({\bf z}))$, for all ${\bf z}\in\{0,1\}^N$. 
That is precisely the statement of Lemma~\ref{lemma:geometricclassification}. 
\end{proof}

We used the following lemma in the proof of Lemma~\ref{lemma:deterministicgeometric}.  For $l=0,1,\ldots, 2^s-1$, the $l$-th orthant of $\R^s$ is the set of all vectors ${\bf r}\in\R^s$ with strictly positive or negative entries and
$\dec({\bf r}_+) = l$, where ${\bf r}_+$ indicates the positive entries of ${\bf r}$. 

\begin{lemma}
	\label{lemma:geometricclassification}
	Let $N=2^s-1$. 
	There is an affine map $\{0,1\}^N\to \R^s$; ${\bf z}\mapsto {\bf W} {\bf z} + {\bf b}$, sending $\{{\bf z}\in\{0,1\}^N\colon l({\bf z})=l \}$ to the $l$-th orthant of $\R^s$, for all $l\in\{0,1,\ldots,N \}$. 
\end{lemma}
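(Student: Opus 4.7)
My plan is to give a direct explicit construction of the affine map. For each output coordinate $i\in\{1,\ldots,s\}$, I want an affine function $f_i(\bfz)=\sum_{j=1}^{N} W_{ij} z_j + b_i$ whose sign encodes the $i$-th bit of $l(\bfz)$; that is, $\operatorname{sgn}(f_i(\bfz))>0$ iff the $i$-th bit of $\operatorname{bin}_s(l(\bfz))$ equals $1$. Doing this coordinate-wise produces a map whose image on $\{\bfz\colon l(\bfz)=l\}$ lies in the $l$-th orthant.

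The construction I propose is
\[
W_{ij} \;=\; \bigl(2\,\operatorname{bin}_s(j)_i - 1\bigr)\, 2^{j}, \qquad b_i \;=\; -1,
\]
so that the sign of $W_{ij}$ is $+$ when the $i$-th bit of $j$ is $1$ and $-$ otherwise, and the magnitudes grow geometrically in $j$. The key step is then a ``highest-bit dominates'' estimate: for $\bfz$ with $l(\bfz)=l\ge 1$ one has $z_l=1$, $z_{l+1}=\cdots=z_N=0$ and $z_1,\ldots,z_{l-1}$ arbitrary, so
\[
f_i(\bfz) \;=\; \bigl(2\,\operatorname{bin}_s(l)_i - 1\bigr)\,2^{l} \;+\; \sum_{j=1}^{l-1}\bigl(2\,\operatorname{bin}_s(j)_i - 1\bigr)\,2^{j}z_j \;-\; 1.
\]
The second sum is bounded in absolute value by $\sum_{j=1}^{l-1}2^{j}=2^{l}-2$, so the leading $\pm 2^{l}$ term determines the sign: $f_i(\bfz)\ge 2^{l}-(2^{l}-2)-1 = 1$ when $\operatorname{bin}_s(l)_i=1$, and $f_i(\bfz)\le -2^{l}+(2^{l}-2)-1=-3$ when $\operatorname{bin}_s(l)_i=0$. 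For $l=0$, $\bfz=\mathbf{0}$ gives $f_i(\bfz)=b_i=-1<0$, matching the fact that every bit of $0$ is $0$.

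Combining these sign statements across $i=1,\ldots,s$ shows that the image point $\bfW\bfz+\bfb$ has no zero coordinates and lies in exactly the orthant whose sign pattern equals $\operatorname{bin}_s(l(\bfz))$, which is the $l(\bfz)$-th orthant by definition. The only mild subtlety is the off-by-one accounting around $l=0$, which is the reason for choosing a strictly negative bias; once this is fixed to $b_i=-1$ (any $b_i\in(-2,0)$ works) the dominance estimate handles every $l\ge 1$ uniformly. I do not expect a real obstacle: the heart of the argument is the geometric series bound $\sum_{j=1}^{l-1}2^{j}<2^{l}$, which forces the contribution of the largest active index to dictate the sign in each coordinate.
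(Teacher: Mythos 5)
Your construction is exactly the paper's: the column $W_{\cdot j}=2^{j}(2\operatorname{bin}_s(j)-\mathbf 1)=2^{j+1}(\operatorname{bin}_s(j)-\tfrac12)$ and bias $-\mathbf 1$ coincide with the weights and bias given there, and your geometric-series dominance estimate simply spells out the sign claim that the paper asserts without detail. The argument is correct and complete.
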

\begin{proof}
	Consider the affine map 
	${\bf z}\mapsto {\bf W}{\bf z}+{\bf b}$, 
	where ${\bf b}=-(1,\ldots,1)^\top$ and the $l$-th column of ${\bf W}$ is $2^{l+1}(\bin_s(l)-\frac{1}{2})$ for all $l\in\{1,\ldots, N\}$. 
	For this choice, $\sgn({\bf W} {\bf z} + {\bf b}) = \sgn( \bin_s(l({\bf z})) -\frac{1}{2} )$ lies in the $l$-th orthant of $\R^s$. 
\end{proof}

\begin{figure}
	\centering
	\includegraphics{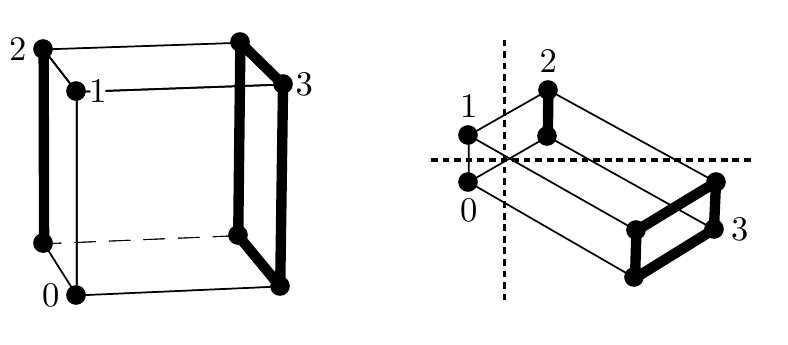}
	\caption{ %
		Illustration of Lemma~\ref{lemma:geometricclassification} for $s=2$, $N=2^s-1 = 3$. 
		There is an arrangement of $s$ hyperplanes which divides the vertices of a $(2^s-1)$-dimensional cube as $0|1|2,3|4,5,6,7|\cdots|2^{2^s-2},\ldots,2^{2^s-1}-1$.}
	\label{figure:geometricclassification}
\end{figure}

Lemma~\ref{lemma:geometricclassification} is illustrated in Figure~\ref{figure:geometricclassification} for $s=2$ and $N=2^s-1=3$. 
As another example, for $s=3$ the affine map can be defined as 
${\bf z}\mapsto {\bf W} {\bf z} + {\bf b}$, where 
\begin{equation*}
{\bf b} =
\begin{pmatrix}
-1 \\ -1 \\ -1
\end{pmatrix},
\quad
\text{and}
\quad
{\bf W} =
\left(
\begin{array}{rrrrrrr}
2 & -4 & 8   & -16 & 32  & -64 & 128\\
-2 &   4 & 8   & -16 & -32& 64   & 128\\
-2 & -4 & - 8&   16 & 32  & 64   & 128
\end{array}
\right).
\end{equation*}

\begin{proposition}
	\label{proposition:univdetmap}
    Let $N=2^s -1$ and let $Q$ be defined as in Lemma~\ref{lemma:deterministicgeometric}. 
	Then $Q\circ \Ecal_{N} = \Delta_{s}^+$, the interior of $\Delta_{s}$ consisting of all strictly positive distributions. 
\end{proposition}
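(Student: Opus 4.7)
The plan is to set up an explicit bijective parameterization of $\Delta_s^+$ by the interior of $\Ecal_N$. The key observation is that pushing a factorizable distribution $p\in\Ecal_N$ forward through the deterministic kernel $Q$ collapses $\{0,1\}^N$ onto $\{0,1\}^s$ via the map ${\bf z}\mapsto \bin_s(l({\bf z}))$, so for every $l\in\{0,1,\dots,N\}$,
\begin{equation*}
(Q\circ p)(\bin_s(l)) \;=\; \sum_{{\bf z}\colon l({\bf z})=l} p({\bf z}) \;=\; \Pr_{{\bf z}\sim p}[\,l({\bf z})=l\,].
\end{equation*}
Writing $p_j=p_{z_j}(z_j=1)\in[0,1]$ for the parameters of $p\in\Ecal_N$, the event $\{l({\bf z})=l\}$ says that $z_l=1$ (for $l\geq 1$) and $z_{l+1}=\cdots=z_N=0$, while $\{l({\bf z})=0\}$ says all bits are zero. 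Since the bits are independent under $p$, I would compute
\begin{equation*}
\pi_l \;:=\; (Q\circ p)(\bin_s(l)) \;=\;
\begin{cases}
\prod_{j=1}^N (1-p_j), & l=0,\\[2pt]
p_l\prod_{j=l+1}^N (1-p_j), & 1\leq l\leq N.
\end{cases}
\end{equation*}

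Given a target $q\in\Delta_s^+$, set $\pi_l:=q(\bin_s(l))>0$ and invert the above system recursively from the top. Noting that $\prod_{j=l+1}^N(1-p_j)=\Pr[l({\bf z})\leq l]=\sum_{k=0}^l\pi_k$, I would define
\begin{equation*}
p_N \;:=\; \pi_N, \qquad p_l \;:=\; \frac{\pi_l}{\sum_{k=0}^{l}\pi_k}\quad\text{for }1\leq l\leq N-1.
\end{equation*}
Because $q$ is strictly positive, every denominator is strictly positive and strictly greater than $\pi_l$, so each $p_l\in(0,1)$, and the induced $p\in\Ecal_N$ satisfies $Q\circ p=q$ by construction. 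This proves $\Delta_s^+\subseteq Q\circ\Ecal_N$.

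For the reverse inclusion, given any $p\in\Ecal_N$ with all $p_j\in(0,1)$, the formula above shows that every $\pi_l>0$, hence $Q\circ p\in\Delta_s^+$. (Boundary choices $p_j\in\{0,1\}$ would send some $\pi_l$ to zero, so one restricts attention to the strictly positive factorizable distributions, which is the content of $\Ecal_N$ needed here; alternatively one passes to the image closure.) I do not expect a serious obstacle: the only thing to check carefully is that the cascade structure of $\pi_l$ in terms of $p_l,\ldots,p_N$ really does invert in closed form, which is the recursive denominator identity $\sum_{k\leq l}\pi_k=\prod_{j>l}(1-p_j)$; once this is in hand, the proposition follows immediately.
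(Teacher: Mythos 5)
Your proposal is correct and follows essentially the same route as the paper: both compute the pushforward of a product distribution through $Q$, obtain the same telescoping formula $\pi_l = p_l\prod_{j>l}(1-p_j)$ (the paper writes it with the complementary parameter convention $p_i=\Pr[z_i=0]$), and exploit its triangular structure to conclude surjectivity onto $\Delta_s^+$. The only cosmetic difference is that you finish with the explicit closed-form inverse $p_l=\pi_l/\sum_{k\leq l}\pi_k$, whereas the paper argues that the ratios $q_i/q_0$ can be tuned independently and sequentially; both are valid, and your parenthetical remark about boundary parameter values matches the paper's own implicit restriction to strictly positive product distributions.
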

\begin{proof}
	Consider a strictly positive product distribution $p\in \Ecal_N$ with $p({\bf z})=\prod_{i=1}^N p_i^{1-z_i}(1-p_i)^{z_i}$ for all ${\bf z}\in\{0,1\}^N$. 
	Then $p^\top Q\in\Delta_s$ is the vector $q=(q_0,q_1,\ldots, q_N)$ with entries $q_0= p({\bf 0})= \prod_{j=1}^N p_j$ and 
	\begin{eqnarray*}
		q_i 
		&=& \sum_{{\bf z}\colon l( {\bf z})=i} p({\bf z})\\
        &=& \sum_{z_1,\ldots, z_{i-1}} \Big( \prod_{k<i} p_k^{1-z_k}(1-p_k)^{z_k}\Big) (1-p_i) \Big( \prod_{j>i}p_j\Big)\\
		&=& (1-p_i) \prod_{j>i}p_{j}, 
	\end{eqnarray*}
	for all $i=1,\ldots, N$. 
	Therefore, 
	\begin{equation}
	\frac{q_i}{q_0} = \frac{1-p_i}{p_i} \frac{1}{\prod_{j=1}^{i-1}p_j} \quad\forall i=1,\ldots,N. 
	\label{eq:quotients}
	\end{equation}
	Since $\frac{1-p_i}{p_i}$ can be made arbitrary in $(0,\infty)$ by choosing an appropriate $p_i$, 
	independently of $p_j$, for $j<i$,  
	the quotient $\frac{q_i}{q_0}$ can be made arbitrary in $(0,\infty)$ for all $i\in\{1,\ldots, N\}$. This implies that $q$ can be made arbitrary in $\Delta_s^+$. 
	In fact, each $p\in \mathcal{E}_N\cap\Delta_N^+$ is mapped uniquely to one $q\in\Delta_s^+$. 
\end{proof}

\begin{proof}[Proof of Theorem~\ref{theorem:first}]
The statement follows from Proposition~\ref{proposition:firstla} and Proposition~\ref{proposition:univdetmap}. 
\end{proof}

\subsection{Trainable Weights in the Second Layer}
\label{section:proof2}

In order to prove Theorem~\ref{theorem:second} we use the same construction of the first layer as in the previous section, except that we use one block of hidden units for each input vector. The reason for not using a single block for a pair of inputs is that now the second layer will contribute to the modeling of the output distribution in a way that depends on the specific input of the block. 
For the second layer we will use the following refinement of Lemma~\ref{lemma:deterministicgeometric}. 

\begin{lemma}
	\label{lemma:deterministicgeometric2}
	Let $s\geq 2$ and $N=2^{s-1}-1$. 
	The set $\F_{N,s}$ can approximate the following kernels arbitrarily well: 
	\begin{equation*}
	Q(\cdot | {\bf z}) = \lambda_{\bf z} \delta_{\bin_s 2 l({\bf z}) }(\cdot) + (1-\lambda_{\bf z})\delta_{\bin_s 2 l({\bf z}) + 1}(\cdot), \quad \forall {\bf z}\in\{0,1\}^N, 
	\end{equation*}
	where $\lambda_{\bf z}$ are certain (not mutually independent) weights in $[0,1]$. 
	Given any $r_l\in\R_+$, $l\in\{0,1, \ldots, N \}$, 
	it is possible to choose the $\lambda_{\bf z}$'s such that 
	\begin{equation*}
	\frac{\sum_{{\bf z}\colon l({\bf z}) = l} \lambda_{\bf z}}
	{\sum_{{\bf z}\colon l({\bf z}) = l} ( 1 - \lambda_{\bf z})} = r_l,\quad\forall l\in\{0,1,\ldots, N  \}. 
	\end{equation*}  
\end{lemma}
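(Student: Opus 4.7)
The plan is to split the $s$ output units into unit $1$ and units $2,\ldots,s$, exploiting the fact that in the shallow module $\F_{N,s}$ the output units are conditionally independent given ${\bf z}$ and that their weight rows and biases are independent parameters. I will make units $2,\ldots,s$ approximate a deterministic encoding of $l({\bf z})$ in $s-1$ bits, and keep unit $1$ as a Bernoulli variable whose ``off'' probability plays the role of $\lambda_{\bf z}$. Since in the chosen bit ordering $\bin_s(2l) = (0,\bin_{s-1}(l))$ and $\bin_s(2l+1) = (1,\bin_{s-1}(l))$, the joint output distribution then automatically takes the claimed two-point form.

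For units $2,\ldots,s$, I will invoke Lemma~\ref{lemma:geometricclassification} with $s$ replaced by $s-1$. The hypothesis $N = 2^{s-1}-1$ matches exactly the input dimension required there, so there is an affine map ${\bf z}\mapsto {\bf W}'{\bf z} + {\bf b}'$ whose sign pattern agrees with $\sgn(\bin_{s-1}(l({\bf z})) - \tfrac12)$. Scaling these weights and biases by $\alpha\to\infty$ concentrates the product distribution of units $2,\ldots,s$ at $\delta_{\bin_{s-1}(l({\bf z}))}$, exactly as in the proof of Lemma~\ref{lemma:deterministicgeometric}. The parameters $(w,b)\in\R^{N+1}$ of unit $1$ remain free and define $\lambda_{\bf z}:=1-\s(w^\top {\bf z} + b)$; conditional independence of the output units then yields $Q(\cdot\mid{\bf z})$ in the claimed form.

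For the ratio condition, let $A_l := \sum_{{\bf z}:l({\bf z})=l}\lambda_{\bf z}$ and $n_l := |\{{\bf z}:l({\bf z})=l\}|$, so that $n_0=1$ and $n_l=2^{l-1}$ for $l\geq 1$. The prescribed ratios are equivalent to $A_l = n_l r_l/(1+r_l) \in (0,n_l)$ for each $l\in\{0,1,\ldots,N\}$. I will fix the $N+1$ parameters $(b, w_1,\ldots,w_N)$ inductively on $l$: at step $0$, choose $b$ so that $\s(-b)=A_0\in(0,1)$; at step $l\geq 1$, with $b$ and $w_1,\ldots,w_{l-1}$ already fixed, view $A_l$ as the function $w_l\mapsto\sum_{z_1,\ldots,z_{l-1}}\s(-b-w_l-\sum_{i<l}w_i z_i)$, which is continuous and runs monotonically from $2^{l-1}$ (as $w_l\to-\infty$) down to $0$ (as $w_l\to+\infty$), so the intermediate value theorem produces a $w_l$ realising the target $A_l\in(0,2^{l-1})$.

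The point that demands most care is the interplay between the limit $\alpha\to\infty$ used to freeze units $2,\ldots,s$ and the finite values of $(b,w_1,\ldots,w_N)$ chosen for unit $1$: because these parameters occupy disjoint rows of $({\bf W},{\bf b})$, the two constructions do not interfere and the limits can be taken independently. I must also observe that at step $l$ of the induction only $z_1,\ldots,z_{l-1}$ are free on the set $\{{\bf z}:l({\bf z})=l\}$ (since $z_l=1$ and $z_{l+1}=\cdots=z_N=0$ are forced), so later choices $w_{l+1},\ldots,w_N$ do not disturb the already-realised $A_l$; this triangular structure is precisely what makes the parameter count $N+1$ exactly match the number of ratio constraints.
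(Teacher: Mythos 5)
Your proposal is correct and follows essentially the same route as the paper's proof: the first output unit carries the mixture weight $\lambda_{\bf z}=1-\s(w^\top{\bf z}+b)$ while units $2,\ldots,s$ deterministically encode $\bin_{s-1}(l({\bf z}))$ via Lemma~\ref{lemma:geometricclassification} applied with $s-1$, and the weights $w_0,\ldots,w_N$ are tuned sequentially using the triangular structure of the sets $Z_l$ (on which $z_l=1$ and $z_{l+1}=\cdots=z_N=0$). Your version is in fact slightly more explicit than the paper's on two points it leaves implicit --- the intermediate-value argument with the target value $A_l=n_l r_l/(1+r_l)$, and the fact that the finite parameters of unit $1$ can coexist with the $\alpha\to\infty$ limit on the other rows --- so no gaps.
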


In words, the ${\bf z}$-th row of $Q$ is a convex combination of the indicators of $2l({\bf z})$ and $2l({\bf z})+1$, 
and, furthermore, the total weight assigned to $2l$ relative to $2l+1$ can be made arbitrary for each $l$.  For example, for $s=3$ we have $N=3$ and 
\begin{equation*}
Q = 
\begin{blockarray}{ccccccccc}
000 & \!\!001 & \!\!010 & \!\!011 & \!\!000 & \!\!001 & \!\!010 & \!\!011 & \\
\begin{block}{(cccccccc) c}
\lambda_{\tiny000} & \!\!\!\!(1-\lambda_{\tiny000})&&&&&& & 000\\
&&\!\!\!\!\lambda_{\tiny001}& \!\!\!\!(1-\lambda_{\tiny001}) &&&& & 001\\
&&&&\!\!\!\!\lambda_{\tiny010}& \!\!\!\!(1-\lambda_{\tiny010}) && & 010\\
&&&&\!\!\!\!\lambda_{\tiny011}& \!\!\!\!(1-\lambda_{\tiny011}) && & 011\\
&&&&&&\!\!\!\!\lambda_{\tiny100}& \!\!\!\!(1-\lambda_{\tiny100})  & 100\\
&&&&&&\!\!\!\!\lambda_{\tiny101}& \!\!\!\!(1-\lambda_{\tiny101})  & 101\\
&&&&&&\!\!\!\!\lambda_{\tiny110}& \!\!\!\!(1-\lambda_{\tiny110})  & 110\\
&&&&&&\!\!\!\!\lambda_{\tiny111}& \!\!\!\!(1-\lambda_{\tiny111})  & 111\\
\end{block}
\end{blockarray}
. 
\end{equation*}
The sum of all weights in any given even column can be made arbitrary, relative to the sum of all weights in the column right next to it, for all $N+1$ such pairs of columns simultaneously. 

\begin{proof}[Proof of Lemma~\ref{lemma:deterministicgeometric2}]
    Consider the sets $Z_l=\{{\bf z}\in\{0,1\}^N\colon l({\bf z})=l \}$, for $l=0,1,\ldots, N$. 
	Let ${\bf W}'\in\R^{(s-1)\times N}$ and ${\bf b}'\in\R^{s-1}$ be the input weights and biases defined in Lemma~\ref{lemma:deterministicgeometric}. 
	We define ${\bf W}$ and ${\bf b}$ by appending a row $(\mu_1,\ldots,\mu_N )$ on top of ${\bf W}'$ and an entry $\mu_0$ on top of ${\bf b}'$. 
	
	If $\mu_j<0$ for all $j=0,1,\ldots, N$, then ${\bf z}\mapsto {\bf W} {\bf z} + {\bf b}$ maps $Z_l$ to the $2l$-th orthant of $\R^s$, for each $l=0,1,\ldots, N$. 
	
	Consider now some arbitrary fixed choice of $\mu_j$, for $j< l$. 
	Choosing $\mu_l<0$ with $|\mu_l|>\sum_{j<l}|\mu_l|$, $Z_l$ is mapped to the $2l$-th orthant. 
	If $\mu_l \to -\infty$, then $\lambda_{\bf z} \to 1$ for all ${\bf z}$ with $l({\bf z})=l$. 
	As we increase $\mu_l$ to a sufficiently large positive value, the elements of $Z_l$ gradually are mapped to the $(2l + 1)$-th orthant. 
	If $\mu_l\to\infty$, then $(1 - \lambda_{\bf z})\to 1$ for all ${\bf z}$ with $l({\bf z})=l$. 
	By continuity, there is a choice of $\mu_l$ such that $\frac{\sum_{{\bf z}\colon l({\bf z}) =l} \lambda_{\bf z}}{ \sum_{{\bf z}\colon l({\bf z}) = l} (1-\lambda_{\bf z})} = r_l$. 
	
	Note that the images of $Z_j$, for $j<l$, are independent of the $i$-th columns of ${\bf W}$ for all $i=l,\ldots, N$. 
	Hence changing $\mu_l$ does not have any influence on the images of $Z_l$ nor on $\lambda_{\bf z}$ for ${\bf z}\colon l({\bf z})<l$. 
	Tuning $\mu_i$ sequentially, starting with $i=0$, we obtain a kernel that approximates any $Q$ of the claimed form arbitrarily well. 
\end{proof}

Let $\mathcal{Q}_s^N$ be the collection of kernels described in Lemma~\ref{lemma:deterministicgeometric2}. 
\begin{proposition}
	\label{proposition:sendod}
	Let $s\geq 2$ and $N=2^{s-1} -1$. 
	Then $\mathcal{Q}_s^N \circ \Ecal_{N} = \Delta_{s}^+$. 
\end{proposition}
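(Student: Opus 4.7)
The strategy is to match a target $q \in \Delta_s^+$ by decoupling it into the pair-marginals $a_l := q_{2l}+q_{2l+1}$ and the pair-ratios $r_l := q_{2l}/q_{2l+1}$, and to hit the $a_l$'s via the product distribution $p\in\Ecal_N$ and the $r_l$'s via the parameters $\mu_0,\ldots,\mu_N$ of the kernel $Q\in\mathcal{Q}_s^N$ from the proof of Lemma~\ref{lemma:deterministicgeometric2}. For any such pair $(p,Q)$, the composition $p^\top Q$ has coordinates
\[
q_{2l} \;=\; \sum_{{\bf z}\in Z_l} p({\bf z})\,\lambda_{\bf z}, \qquad q_{2l+1} \;=\; \sum_{{\bf z}\in Z_l} p({\bf z})\,(1-\lambda_{\bf z}),
\]
with $Z_l=\{{\bf z}\in\{0,1\}^N : l({\bf z})=l\}$. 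In particular $q_{2l}+q_{2l+1}=p(Z_l)$, which cleanly decouples the two subproblems.

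For the first subproblem I would recycle the surjectivity computation used in the proof of Proposition~\ref{proposition:univdetmap}: the quotients $p(Z_l)/p(Z_0)$ given by~\eqref{eq:quotients} can be independently set to arbitrary positive values by choosing the $p_i\in(0,1)$, so the $N+1=2^{s-1}$ numbers $p(Z_l)$ can realize any point of $\Delta_N^+$. Thus there is $p\in\Ecal_N\cap\Delta_N^+$ with $p(Z_l)=a_l$ for every $l\in\{0,\ldots,N\}$, which already pins down $q_{2l}+q_{2l+1}$ for all $l$.

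For the second subproblem I would tune $\mu_0,\mu_1,\ldots,\mu_N$ sequentially. For ${\bf z}\in Z_l$ one has $\lambda_{\bf z}=\sigma(-(\mu_0+\sum_{j\le l}\mu_j z_j))$ with $z_l=1$, so $\lambda_{\bf z}$ is smooth and strictly decreasing in $\mu_l$, tending to $1$ as $\mu_l\to-\infty$ and to $0$ as $\mu_l\to+\infty$. Hence the weighted ratio
\[
R_l(\mu_l) \;:=\; \frac{\sum_{{\bf z}\in Z_l} p({\bf z})\,\lambda_{\bf z}}{\sum_{{\bf z}\in Z_l} p({\bf z})\,(1-\lambda_{\bf z})}
\]
is continuous and strictly monotone in $\mu_l$ and sweeps $(0,\infty)$; the intermediate value theorem then delivers a $\mu_l$ with $R_l(\mu_l)=r_l$.

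The main point to verify is the upper-triangular dependence that legitimizes the sequential tuning: for any ${\bf z}\in Z_{l'}$ with $l'<l$, the definition of $Z_{l'}$ forces $z_l=0$, so $\lambda_{\bf z}$ does not depend on $\mu_l$ and the already-matched ratios $r_0,\ldots,r_{l-1}$ are preserved when $\mu_l$ is chosen. Iterating $l=0,1,\ldots,N$ produces $(p,Q)$ with $p^\top Q$ arbitrarily close to $q$, establishing $\Delta_s^+ \subseteq \mathcal{Q}_s^N\circ\Ecal_N$. The reverse inclusion is immediate, since a strictly positive $p$ composed with any $Q\in\mathcal{Q}_s^N$ whose $\mu_l$'s are finite (so $\lambda_{\bf z}\in(0,1)$) yields a strictly positive output distribution.
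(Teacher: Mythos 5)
Your proposal is correct and follows essentially the same route as the paper: the pair-sums $q_{2l}+q_{2l+1}=p(Z_l)$ are matched by the product distribution exactly as in Proposition~\ref{proposition:univdetmap}, and the pair-ratios are matched by sweeping the $\lambda_{\bf z}$'s via the extra parameters $\mu_l$, using the same monotonicity, continuity, and upper-triangular (non-interference) structure that the paper draws from Lemma~\ref{lemma:deterministicgeometric2}. The only difference is cosmetic: you inline the $\mu_l$-tuning and intermediate-value argument that the paper delegates to the proof of that lemma, and you note the easy reverse inclusion explicitly.
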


\begin{proof}
	Consider a strictly positive product distribution $p\in \Ecal_N$ with $p({\bf z}) = \prod_{i=1}^N p_i^{1-z_i}(1-p_i)^{z_i}$ for all ${\bf z}\in\{0,1\}^N$. 
	Then $p^\top Q\in\Delta_s$ is a vector $(q_0,q_1,\ldots, q_{2N+1})$ whose entries satisfy 
	$q_0 + q_1 = p({\bf 0})= \prod_{j=1}^N p_j$ and 
	\begin{eqnarray*}
		q_{2i}+q_{2i+1} = (1-p_i) \prod_{j>i}p_{j}, 
	\end{eqnarray*}
	for all $i = 1, \ldots, N$. 
	As in the proof of Proposition~\ref{proposition:univdetmap}, this implies that the vector $(q_0+q_1, q_2+q_3,\ldots, q_{2N}+q_{2N+1})$ can be made arbitrary in $\Delta_{s-1}^+$. 
	This is irrespective of the coefficients $\lambda_0,\ldots, \lambda_N$. 
	Now all we need to show is that we can make $q_{2i}$ arbitrary relative to $q_{2i+1}$ for all $i=0,\ldots, N$. 
	
	We have 
	\begin{eqnarray*}
		q_{2i} 
		&=&\sum_{{\bf z}\colon l({\bf z})=i} \lambda_{\bf z} p({\bf z})\\
		&=& \left(\sum_{{\bf z}\colon l({\bf z})=i}\lambda_{{\bf z}} \Big( \prod_{k<i} p_k^{1-z_k}(1-p_k)^{z_k}\Big)\right)
		(1-p_i) \Big( \prod_{j>i}p_j\Big) , 
	\end{eqnarray*}
	and
	\begin{eqnarray*}
		q_{2i+1} 
		&=&\sum_{{\bf z}\colon l({\bf z})=i} (1 - \lambda_{\bf z}) p({\bf z})\\
		&=& \left(\sum_{{\bf z}\colon l({\bf z})=i}(1 -\lambda_{{\bf z}}) \Big( \prod_{k<i} p_k^{1-z_k}(1-p_k)^{z_k}\Big)\right) 
		(1-p_i) \Big( \prod_{j>i}p_j\Big). 
	\end{eqnarray*}
	Therefore, 
	\begin{eqnarray*}
		\frac{q_{2i}}{q_{2i+1}} 
		&=& 
		\frac{ \sum_{{\bf z}\colon l({\bf z})=i}\lambda_{{\bf z}} \Big( \prod_{k<i} p_k^{1-z_k}(1-p_k)^{z_k}\Big) }
		{\sum_{{\bf z}\colon l({\bf z})=i}(1 -\lambda_{{\bf z}}) \Big( \prod_{k<i} p_k^{1-z_k}(1-p_k)^{z_k}\Big)}. 
	\end{eqnarray*}
	By Lemma~\ref{lemma:deterministicgeometric2} it is possible to choose all $\lambda_{\bf z}$ arbitrarily close to zero for all ${\bf z}$ with $l({\bf z})=i$ and have them transition continuously to values arbitrarily close to one (independently of the values of $\lambda_{\bf z}$, ${\bf z}\colon l({\bf z})\neq i$). 
	Since all $p_k$ are strictly positive, this implies that the quotient $\frac{q_{2i}}{q_{2i+1}}$ takes all values in $(0,\infty)$ as the $\lambda_{\bf z}$, ${\bf z}\colon l({\bf z})=i$ transition from zero to one. 
\end{proof}

\begin{proof}[Proof of Theorem~\ref{theorem:second}]
	This follows from a direct modification of Proposition~\ref{proposition:firstla} to have a hidden block per input vector, and Proposition~\ref{proposition:sendod}. 
\end{proof}

\subsection{Discussion of the Proofs for Shallow Networks}
\label{section:conclusion}

We proved upper bounds on the minimal size of shallow sigmoid stochastic feedforward networks that can approximate any stochastic function with a given number of binary inputs and outputs arbitrarily well. 
By our analysis, if all parameters of the network are free, $2^d(2^{s-1}-1)$ hidden units suffice, and, if only the parameters of the first layer are free, $2^{d-1}(2^s-1)$ hidden units suffice. 

It is interesting to compare these results with what is known about universal approximation of Markov kernels by undirected stochastic networks called conditional restricted Boltzmann machines. For those networks, \citet{montufar2015geometry} showed that $2^{d-1}(2^s-1)$ hidden units suffice, whereby, if the number $d$ of input units is large enough, $\frac14 2^{d}(2^s-1 + 1/30)$ suffice. 
A more recent work \citep{montufar2017hierarchical} showed that an RBM is a universal approximator as soon as $m\geq \frac{2(\log(v) + 1)}{(v + 1)}2^{v} -1$, where $v=d+s$, which has a smaller asymptotic behavior. 
In the case of no input units, our bound $2^{s-1}-1$ equals the bound for RBMs from~\cite{montufar2011expressive}, but is larger than the bound from \cite{montufar2017hierarchical}. 
It has been observed that undirected networks can represent many kernels that can be represented by feedforward networks, especially when these are not too stochastic~\citep{montufar2015geometry,montufar2014deep}. 
Verifying the tightness of the bounds remains an open problem, as well as the detailed comparison of directed and undirected architectures.

\section{Results for Deep Networks} 
\label{sec:deep}

We now consider networks with multiple layers of hidden units, i.e. $L > 1$. 
Since the dimension of $\D_{d,s}$ is $2^d(2^s - 1)$, a lower bound on the number of trainable parameters a model needs for universal approximation is $2^d(2^s - 1)$. Details on this are provided in Section~\ref{sec:lowerbounds}. 
The following theorem provides sufficient conditions for a spectrum of deep architectures to be universal approximators. 

\begin{theorem}
\label{theorem:deep2}
Let $d,s\in\N$, and assume that $s = 2^{b-1} + b$ for some $b \in \N$. 
Then a deep sigmoid stochastic feedforward network with $d$ binary inputs and $s$ binary outputs is a universal approximator of Markov kernels in $\D_{d,s}$ if it contains 
$2^{d-j} ( 2^{s-b} + 2^b - 1 )$
hidden layers each consisting of $2^j(s+d-j)$ units, for any $j \in \{0,1,2,\dots,d\}$. 
\end{theorem}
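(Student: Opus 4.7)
The plan is to adapt the probability--sharing technique used by \citet{sutskever2008deep,le2010deep,montufar2011refinements} for universal approximation of a single distribution by a deep belief network, and to refine it so that it handles a different target conditional distribution for every input $\bx\in\{0,1\}^d$ in parallel. The parameter $j$ controls a width--depth trade--off: we partition the $2^d$ inputs into $2^{d-j}$ groups of $2^j$ inputs each, process one group at a time across a sequence of layers, and use the width $2^j(s+d-j)$ to handle the $2^j$ inputs inside a group in parallel. The $2^{s-b}+2^b-1$ factor in the depth will be the cost of shaping the output distribution for a single group once it is isolated.

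First I would design a single ``processing block'' consisting of $2^{s-b}+2^b-1$ layers of width $2^j(s+d-j)$ whose job is to act nontrivially on a prescribed group of $2^j$ inputs and act as (an arbitrarily good approximation of) the identity on all other inputs. Inside such a block I would split the $s+d-j$ units of each of the $2^j$ parallel tracks as ``$s$ output--like units'' that carry the current candidate output state and ``$d-j$ bookkeeping units'' that store an identifier for the remaining, not--yet--processed input groups. Following the Le Roux--Bengio Gray--code construction together with the Mont\'ufar--Ay refinement, I would show that a sequence of $2^{s-b}+2^b-1$ such layers can approximate any composition of elementary sharing steps needed to transform a point mass on the initial state into an arbitrary target distribution over $\{0,1\}^s$. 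Crucially, because the sharing is gated by the bookkeeping units, the layers of a block act trivially on all inputs that belong to other groups, so distinct blocks can be composed without interference.

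Next I would stack $2^{d-j}$ such processing blocks, one per input group, in sequence, giving the stated total depth $2^{d-j}(2^{s-b}+2^b-1)$. To start the process I would use the trick from Proposition~\ref{proposition:firstla} and Lemma~\ref{lemma:firstlayer}: the first block maps every input to a canonical deterministic state on its output--like units while recording the input's identity on its bookkeeping units. From there, within the block that is currently active for input $\bx$, I would apply the sharing layers constructed in the previous paragraph with target $p^*(\cdot|\bx)$; for inputs outside that block the bookkeeping gating keeps the state frozen. Composing all $2^{d-j}$ blocks approximates the desired kernel on every input simultaneously, which yields universal approximation in $\Delta_{d,s}$ by taking the weight scale $\alpha\to\infty$.

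The main obstacle is the second paragraph: one has to make each sharing layer act as the identity (to arbitrary precision) on all ``inactive'' input groups while executing a prescribed sharing step on the ``active'' group, and simultaneously keep the bookkeeping units' information intact so that later blocks can still tell the inactive inputs apart. This requires carefully chosen weights and biases so that the sigmoid saturations behave as hard gates, in the spirit of Lemma~\ref{lemma:firstlayer}, extended across many layers and across the $2^j$ parallel tracks. Getting the parallel tracks to share a single sharing schedule (so that the same layer width $2^j(s+d-j)$ suffices) is what forces the specific arithmetic $2^{s-b}+2^b-1$ with $s=2^{b-1}+b$; I expect this step to be the most delicate, and the one where the unified treatment of the whole $j$-spectrum of architectures is nontrivial. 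The remaining counting, as well as the universal approximation conclusion via composition, then follows routinely from the module decomposition $F_{d,m_1,\dots,m_L,s}$ introduced in Section~\ref{sec:settings}.
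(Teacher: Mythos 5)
Your proposal is correct and follows essentially the same route as the paper: the same $2^j\times 2^{d-j}$ grid of parallel width-$(s+d-j)$ tracks and sequential sharing stages (you merely describe it transposed, slicing by depth-blocks first rather than by parallel sections first), the same gating of the overlaid partial-Gray-code sharing schedule by $d-j$ bookkeeping units carrying the input's identity, and the same saturation argument as the parameter magnitude tends to infinity. The only points the paper makes explicit that you leave implicit are the final \texttt{or}-gate layer collecting the $s$ output bits from the unique nonzero track, and the fact that the count $2^{s-b}+2^b-1$ arises from the within-track schedule ($2^b$ initialization layers plus the $2^{s-b}$-step overlaid sharing paths) rather than from coordinating the parallel tracks, which are simply disconnected from one another and need no shared schedule.
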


We note that the indicated upper bound on the width and depth holds for universal approximation of $\Delta_{d,s'}$ for any $s'\leq s$. Moreover, if $j=d$, we can save one layer. We will make use of this in our numerical example in Section~\ref{sec:examples}. 
One can also use a simplified construction with $2^{d+s-j}$ hidden layers. 

The theorem indicates that there is a spectrum of networks capable of universal approximation, where if the network is made narrower, it must become proportionally deeper. 
The network topology with $j = d$ has depth exponential in $s$ and width exponential in $d$, whereas  $j = 0$ has depth exponential in $d$ and $s$, but only width $d+s$. See Figure~\ref{fig:example11} for a sketch of how $j$ affects the different network properties and shape. 
There may exist a spectrum of networks bridging the gap between the shallow universal approximators from Section~\ref{sec:shallow} which have width exponential in $d$ and $s$ and only one hidden layer, and the $j = d$ case, although no formal proof has been established. 

When considered as fully connected, the networks described in Theorem~\ref{theorem:deep2} vary greatly in their number of parameters. 
However, we will see that each of them can be implemented with the same minimal number of trainable parameters, equal to the dimension of $\Delta_{d,s}$. 
For the most narrow case $j=0$, each input vector is carried to a specific block of hidden layers which create the corresponding output distribution, which then this is passed downwards until the output layer. 
When $j > 0$ it will be shown that the first layer can divide the input space into $2^j$ sets, each to be handled by its own parallel section of the network. 
Each of the $2^j$ sections can be thought of as running side-by-side and non-interacting, meanwhile creating the corresponding output distributions of the $2^{d-j}$ different inputs. 
In the widest case $j=d$, each input is processed by its own parallel section of the network, and $\frac{2^s}{2(s-b)} + 2(s-b) -1$ hidden layers are sufficient for creating the corresponding output distribution, which resembles the the bound for deep belief networks \citep{montufar2011refinements}. 

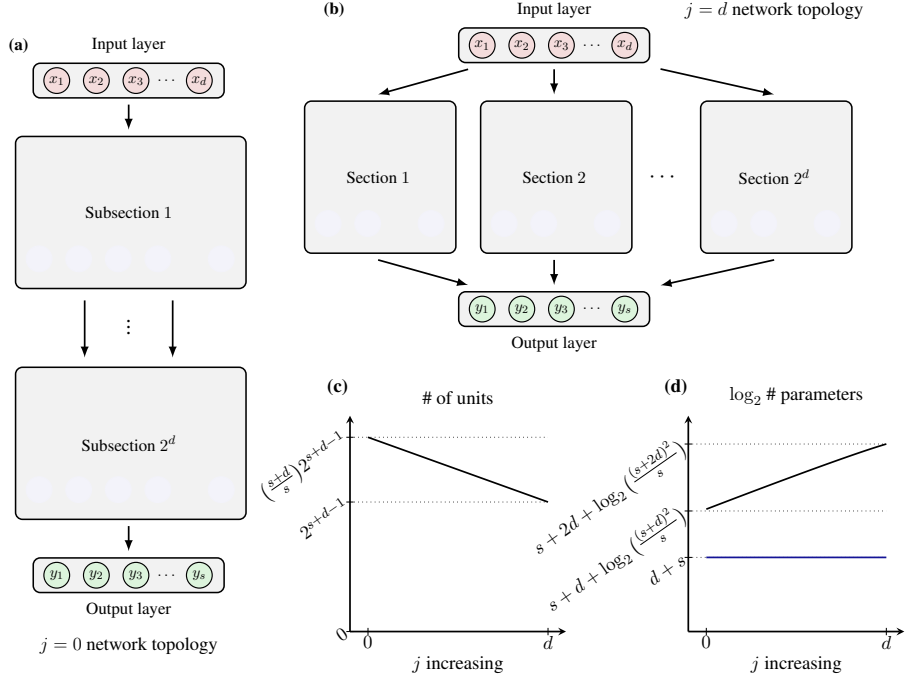
\begin{figure}[ht]
\centering
\begin{tikzpicture}
\node[
] (picutre1) at (-3.1,6.5)
    {
    \scalebox{.65}{
        \tikzset{node distance=2cm, auto}
        \begin{tikzpicture}[scale=0.9, every node/.style={transform shape}]
        \node (L-3) at (0,4) {\labeledlayer{x}{3}{{d}}{ }{ic} };
        \node (L-2b) at (0,1) {\unitlayerr{h}{4}{}{}{bl}};
        \node (L-2c2) at (0,-1.5) {\Large$\vdots$}; 
        \node (L-2d) at (0,-4.25) {\unitlayerr{h}{4}{}{}{bl}}; 
        \node (L-0) at (0,-7.25) {\labeledlayer{y}{3}{{s}}{ }{oc} };

        \draw[-latexnew, arrowhead=.225cm, line width=1pt] (1,-1) to (1,-2.3);
        \draw[-latexnew, arrowhead=.225cm, line width=1pt] (-1,-1) to (-1,-2.3);

        \draw[-latexnew, arrowhead=.225cm, line width=1pt] (L-3) to (L-2b); 
        \draw[-latexnew, arrowhead=.225cm, line width=1pt] (L-2d) to (L-0);

        \node (o) [node distance=.8cm, below of = L-0] {Output layer};
        \node (i) [node distance=.8cm, above of = L-3] {Input layer};
        \node (tit) [node distance=.8cm, below of = o] {\large $j = 0$ network topology};

        \node (last) [node distance=.00cm, below of = L-2d] {Subsection $2^d$};
        \node (one) [node distance=.00cm, below of = L-2b] {Subsection $1$};
        \node (ah) [node distance=2.5cm, left of = i] {\textbf{(a)}};
        \end{tikzpicture}
        }
    };

\node[
] (pitcutre2) at (2.9,8.75)
    {
    \scalebox{.65}{
        \tikzset{node distance=2cm, auto}
        \begin{tikzpicture}[scale=0.9, every node/.style={transform shape}]
        \node (L-3) at (0,4) {\labeledlayer{x}{3}{{d}}{ }{ic} };
        \node (L-2a) at (-4,1) {\unitlayerr{h}{2}{}{}{bl}}; 
        \node (L-2b) at (0,1) {\unitlayerr{h}{2}{}{}{bl}}; 
        \node (L-2c) at (5,1) {\unitlayerr{h}{2}{}{}{bl}};   
        \node (L-2d) at (2.5,1) {\Large$\cdots$}; 
        \node (L-0) at (0,-2) {\labeledlayer{y}{3}{{s}}{ }{oc} };
        \draw[-latexnew, arrowhead=.225cm, line width=1pt] (L-3)  to (L-2a.north);
        \draw[-latexnew, arrowhead=.225cm, line width=1pt] (L-3)  to (L-2b.north); 
        \draw[-latexnew, arrowhead=.225cm, line width=1pt] (L-3)  to (L-2c.north); 
        \draw[-latexnew, arrowhead=.225cm, line width=1pt] (L-2a.south) to (L-0); 
        \draw[-latexnew, arrowhead=.225cm, line width=1pt] (L-2b.south) to (L-0); 
        \draw[-latexnew, arrowhead=.225cm, line width=1pt] (L-2c.south) to (L-0); 
        \node (o) [node distance=.8cm, below of = L-0] {Output layer};
        \node (i) [node distance=.8cm, above of = L-3] {Input layer};
        \node (tit) [node distance=5cm, right of = i] {\large $j = d$ network topology}; 
        \node (one) [node distance=.00cm, below of = L-2a] {Section $1$};
        \node (two) [node distance=.00cm, below of = L-2b] {Section $2$};
        \node (last) [node distance=.00cm, below of = L-2c] {Section $2^d$};
        \node (bee) [node distance=5cm, left of = i] {\textbf{(b)}};
        \end{tikzpicture}}
    };

\node[inner sep=0pt] (pitcutre3) at (0.7,4.1)
    {
        \scalebox{0.65}{
        \begin{tikzpicture}
        \node (see) at (-.25,5) {\textbf{(c)}};
        \begin{axis}[width=6cm,height=6cm,line width =1pt,
            title={\# of units},
            xlabel={$j$ increasing},
            xmin=-1, xmax=11,
            ymin=0, ymax=100,
            xtick={0},
            extra x ticks= {10},
            extra x tick labels={$d$},
            ytick={0},
            extra y ticks= {60,90},
            extra y tick labels={${2^{s+d-1}}$,$\big(\frac{s+d}{s}\big)2^{s+d-1}$},
            yticklabel style={rotate=35,anchor=east}, 
            every tick/.style={black,semithick},            
            axis x line=bottom,
            axis y line=left,
        ]
         
        \addplot [domain=-1:10, samples=100 ,color=black, thin, dotted]{60};    
        \addplot [domain=-1:10, samples=100 ,color=black, thin, dotted]{90};    
        
        \addplot[color=black,]
            coordinates {
            (0,90)(10,60)
            }; 
        \end{axis}
        \end{tikzpicture}
        }
    };

\node[inner sep=0pt] (pitcutre4) at (4.75,4.1)
    {
    \scalebox{0.65}{
    \begin{tikzpicture}
    \node (dee) at (-.25,5) {\textbf{(d)}};
    \begin{axis}[width=6cm,height=6cm,line width =1pt,
        title={$\log_2$ \# parameters},
        xlabel={$j$ increasing},
        xmin=-1, xmax=11,
        ymin=1, ymax=100,
        xtick={0},
        extra x ticks= {10},
        extra x tick labels={$d$},
        ytick={0},
        extra y ticks= {35, 56.3,87},
        extra y tick labels={$d+s$, $s+d + \log_2\big(\frac{(s+d)^2}{s}\big)$,$s+2d + \log_2\big(\frac{(s+2d)^2}{s}\big)$},
        yticklabel style={rotate=35,anchor=east},
        every tick/.style={black,semithick},
        axis x line=bottom,
        axis y line=left,
    ]
     
    \addplot [domain=-1:10, samples=100 ,color=black, thin, dotted]{35};    
    \addplot [domain=-1:10, samples=100 ,color=black, thin, dotted]{56.3};
    \addplot [domain=-1:10, samples=100 ,color=black, thin, dotted]{87};
    
    \addplot [domain=0:10, samples=100 ,color=black]{log2(9^(x) * (140 - x^2)) + 50};
    \addplot [domain=0:10, samples=100 ,color=bl]{35};
    
    \end{axis}
    \end{tikzpicture}
    }
    };
\end{tikzpicture}
\caption{(a) The deepest narrowest network architecture $j = 0$.  Here, there is a single section consisting of $2^d$ subsections stacked on top of each other. 
(b) The widest deep architecture $j = d$.  Here, there are $2^d$ sections placed in parallel, each consisting of a single subsection. 
(c) A sketch of how the number of units scales as a function of $j$, for fixed $s,d$. (d) A log-scale sketch of how the total number of network parameters scales with $j$. 
Blue shows the rounded log number of trainable parameters in our construction, which is independent of $j$. 
}
\label{fig:example11}
\end{figure}

In some special instances, small reductions in size are possible. For instance, in the widest network topology considered, where $j = d$, universal approximation can be done with one less layer.

\subsection*{Parameter Count}

If we are to consider the networks as being fully connected and having all weight and bias parameters trainable, the number of parameters grows exponentially with $j$ as 
\begin{align*}
    |\th_{\text{full}}| 
    &= \sO\bigg(\frac{2^{s+d+j}(s+d-j)^2}{2(s-\log_2(s))}\bigg),
\end{align*}
suggesting that the deepest topology where $j = 0$ is the most efficient universal approximator considered. The number of units for these networks linearly decrease with $j$ as
\begin{equation*}
\text{\# of units} = \sO\bigg(\frac{2^{s+d}(s+d-j)}{2(s-\log_2(s))}\bigg),
\end{equation*}
meaning that the widest topology, where $j=d$, uses the least number of units. However, if one counts only the parameters that can not be fixed prior to training according to the construction that we provide below in Section~\ref{sec:deep_proofs}, one obtains 
\begin{equation*}
    |\th_{\text{trainable}}| = 2^d(2^s-1). 
\end{equation*}
Each of the network topologies has the hidden units organized into $2^j$ sections and a total of $2^d$ subsections, each with $2^s-1$ trainable parameters. 
The first hidden layer and the output layer have fixed parameters. 
Each one of the trainable parameters controls exactly one entry in the Markov kernel and number of parameters is also necessary for universal approximation of $\D_{d,s}$, as we will show further below in Section~\ref{sec:lowerbounds}.

\subsection*{Approximation with Finite Weights and Biases}

The quality of approximation provided in our construction depends on the magnitude of the network parameters. 
If this is allowed to increase unboundedly, a universal approximator will be able to approximate any Markov kernel with arbitrary accuracy. 
If the parameters are only allowed to have a certain maximal magnitude, the approximation is within an error bound described in the following theorem. 

\begin{theorem} \label{thrm:errr}
Let $\e \in (0,1/2^s)$ and consider a target kernel $p^\ast$ in the non-empty set $\D_{d,s}^\e := \{P \in \D_{d,s} \colon \e \leq P_{ij} \leq 1-\e \text{ for all } i,j\}$. 
There is a choice of the network parameters, bounded in absolute value by $\a = 2m\s^{-1}(1-\e)$, where $m =\max\{j,s+(d-j)\}\leq d+s$, 
such that the conditional probabilities $p({\bf y}|{\bf x})$ generated by the network are uniformly close to the target values according to 
\begin{equation*}
|p({\bf y} | {\bf x}) - p^*({\bf y} | {\bf x})| \leq 1 - (1-\e)^N  
+ \e,\quad \forall {\bf x}\in\{0,1\}^d, {\bf y}\in\{0,1\}^s,  
\end{equation*}
where $N$ is the total number of units in the network excluding input units. 
If one considers an arbitrary target kernel $p^\ast$, the error bound increases by $\e$. 
\end{theorem}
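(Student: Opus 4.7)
The plan is to reuse the explicit construction from Theorem~\ref{theorem:deep2} unchanged, but to cap every weight and bias at magnitude $\a$ and to track the resulting approximation error through the network. In the unbounded-weight limit, each sigmoid unit in that construction falls into one of two categories: (a) a routing or probability-sharing unit whose ideal preactivation has magnitude $+\infty$ so as to output $0$ or $1$ deterministically, and (b) a selector unit whose ideal preactivation is the finite logit $\s^{-1}(\lambda)$ of some prescribed Bernoulli parameter $\lambda$ extracted from $p^*$. I would replace every $\pm\infty$ preactivation by $\pm\s^{-1}(1-\e)$, and keep the finite preactivations at their exact values; since $p^* \in \D_{d,s}^\e$, these too have magnitude at most $\s^{-1}(1-\e)$. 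Consequently every sigmoid output lies within $\e$ of its target value ($0$, $1$, or $\lambda$).

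Next I would verify that these preactivations are realizable with individual parameters of magnitude at most $\a = 2m\s^{-1}(1-\e)$. In every layer of the construction the preactivation at any unit is an affine combination of at most $m = \max\{j, s+d-j\}$ binary quantities from the previous layer. The first-layer routing units are the most demanding: they implement a supporting hyperplane of a face of the $d$-cube, placing on one side a preactivation of at least $+\s^{-1}(1-\e)$ and on the other at most $-\s^{-1}(1-\e)$. Such a separator admits an integer-coefficient description with coefficients of magnitude $O(1)$ times a uniform scale; setting that scale so the narrowest margin equals $\s^{-1}(1-\e)$ multiplies the parameters by a factor at most $2m$, giving $\a$. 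The probability-sharing layers (whose unbounded-weight form uses $\pm 1$ weights times a scale) and the selector units (which only need preactivations of magnitude $\s^{-1}(1-\e)$) comfortably fit within the same budget.

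For error propagation, fix an input ${\bf x}$ and an output ${\bf y}$ and couple the bounded-weight network with the ideal unbounded-weight one unit by unit, making the two outputs agree whenever possible. Because the marginal distribution of each unit given its parents differs from the ideal in total variation by at most $\e$, the coupling succeeds at that unit with probability at least $1-\e$, and by the chain rule over the $N$ non-input units the full coupling succeeds with probability at least $(1-\e)^N$; on that event ${\bf y}$ is distributed as $p^*(\cdot\,|\,{\bf x})$ up to a residual $\e$ at the output layer. The two-sided bookkeeping
\begin{equation*}
(1-\e)^N\bigl(p^*({\bf y}|{\bf x})-\e\bigr)\;\leq\;p({\bf y}|{\bf x})\;\leq\;(1-\e)^N\bigl(p^*({\bf y}|{\bf x})+\e\bigr) + \bigl(1-(1-\e)^N\bigr)
\end{equation*}
then collapses, using $p^*({\bf y}|{\bf x}) \in [0,1]$, to the stated $|p({\bf y}|{\bf x}) - p^*({\bf y}|{\bf x})| \leq 1-(1-\e)^N + \e$. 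For an arbitrary target $p^* \in \D_{d,s}$, I would first project entrywise onto $\D_{d,s}^\e$ by clipping into $[\e,1-\e]$ and renormalizing each row; since $\e < 1/2^s$ makes the clipping consistent with the row-sum constraint, each conditional probability shifts by at most $\e$, accounting for the advertised extra $\e$.

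The step I expect to be the main obstacle is the parameter-budget verification in the middle paragraph: the construction of Theorem~\ref{theorem:deep2} is built from several distinct gadgets --- the first-layer router that fans the $2^d$ input vectors into $2^j$ parallel sections, the per-section Sutskever--Hinton-style probability-sharing chains, and the selector units that transcribe entries of $p^*$ --- and each must be rescaled consistently so that no weight exceeds $\a$ while every sigmoid retains a margin of at least $\s^{-1}(1-\e)$ on the intended side. The worst offender is precisely the first-layer router, whose effective width $m$ is exactly the factor appearing in $\a = 2m\s^{-1}(1-\e)$.
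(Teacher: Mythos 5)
Your proposal is correct in outline and reaches the stated bound, but the error-accumulation step uses a genuinely different mechanism from the paper's. The paper expands $p({\bf y}|{\bf x})$ as a sum over all hidden-layer trajectories, isolates the single intended Gray-code trajectory $T$, evaluates its probability exactly as $p^*({\bf y}|{\bf x})(1-\e)^{mL-r+s}$ via Theorem~\ref{theorem:sharing1}, and bounds the aggregate of all other trajectories by $\e$ (each such term carries at least one factor of $\e$ while the whole sum is at most $1$); that aggregate is precisely where the additive $+\e$ in the bound originates. You instead couple the bounded-weight network to the limiting ideal network unit by unit; since on the coupling-success event the two outputs coincide, this gives $|p({\bf y}|{\bf x})-p^*({\bf y}|{\bf x})|\le \Pr(\text{coupling fails})\le 1-(1-\e)^N$ directly, which is in fact slightly stronger than the stated bound (your two-sided bookkeeping with the extra $\pm\e$ inside the success term is unnecessary but harmless). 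The one step you assert without justification, and which the paper proves explicitly, is that the prescribed Bernoulli parameters of the sharing units lie in $[\e,1-\e]$ whenever $p^*\in\Delta_{d,s}^\e$: these parameters are not the entries of $p^*$ themselves but the conditional ratios $\rho^{[l]}$ appearing in the factorization $p^*({\bf g}^r|{\bf x})=\rho^{[1]}\cdots\rho^{[r-1]}(1-\rho^{[r]})$, and showing each factor lies in $[\e,1-\e]$ requires the short recursive argument at the end of the paper's proof (each partial product is at least $\e$ while every factor is at most $1$). Without that verification, Theorem~\ref{theorem:sharing1} would clip $\rho^{[l]}$ to $\rho^{[l]}_\e\neq\rho^{[l]}$, and your claim that every sigmoid's conditional distribution is within total variation $\e$ of its ideal, under the stated parameter budget $\a=2m\s^{-1}(1-\e)$, would break. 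Your clipping-and-renormalizing reduction for arbitrary targets matches the paper's final step.
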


The proof of Theorem~\ref{thrm:errr} is presented in Section~\ref{section:errr} after the proof of Theorem~\ref{theorem:deep2}. It depends on explicit error bounds for the probability sharing steps discussed next.

\section{Proofs for Deep Networks} 
\label{sec:deep_proofs}

The proof naturally splits into three steps. 
The first step shows that the first layer is capable of dividing the input space into $2^j$ disjoint sets of $2^{d-j}$ input vectors, 
sending each set to a different parallel running section of the network. 
The $2^{d-j}$ vectors in the $\t$-th set will activate the $\t$-th section, while the other sections take state zero with probability one. 
Second, it is shown that each of the $2^j$ sections is capable of approximating the conditional distributions for the corresponding $2^{d-j}$ inputs. The last step explicitly determines the parameters to copy the relevant units from the last hidden layer to the output layer.

\subsection{Notation}
\label{section:Notation}

The integer $j$ dictates the network's topology. 
This index can be any number in $\{0,1,\dots,d\}$. 
For an input vector ${\bf x}$, we denote the $r$ through $r'$ bits by ${\bf x}_{[r,r']}$. 
The target conditional probability distribution given the input ${\bf x}$ is denoted by $p^*(\cdot | {\bf x})$. 
The joint state of all units in the $l$-th hidden layer will be denoted by ${\bf h}^l$. 
The state of the $r$-th unit of the $l$-th layer is denoted by ${\bf h}_r^l$. 
If a range is provided as superscript (subscript), it is referring to a range of layers (units), i.e. ${\bf h}^{[l,l+2]}$ refers to the hidden layers ${\bf h}^l, {\bf h}^{l+1}, {\bf h}^{l+2}$. 
The integer $\t = 1,2,\dots,2^{j}$ will be an index specifying a \textit{block} of the units in a layer. 
Each block consists of $s+d-j$ consecutive units. 
The $\t$-th block of units in a given layer are those indexed from $(\t-1)(s+d-j)+1$ to $\t(s+d-j)$. 
The state of $\t$-th block of units of hidden layer $l$ is ${\bf h}_{(\t)}^l := {\bf h}^l_{[(\t-1)(s+d-j)+1,\,\t(s+d-j)]}$. 
A block \textit{activates} means that it can take a state other than the zero vector with non-zero probability. 
If the block is \textit{inactive}, it will take the zero state with arbitrarily high probability. 
Due to their different function in the following, 
it is useful to denote 
${\bf a}_{(\t)}^l$
and 
${\bf b}^l_{(\t)}$ 
the first $s$ and the last $d-j$ units of the $\t$-th block at the $l$-th layer. 
The \nth{1} unit of ${\bf a}_{(\t)}^l$ is denoted as ${\bf a}_{(\t),1}^l$, and plays an important role in the first layer of the network.

For the second part of the proof, we will first show that $L = 2^{s+d-j}$ hidden layers suffice, and then refine this bound.  Thus the focus will be on the entire $\t$-th \textit{section} of the hidden units. 
The $\t$-th section are the units in the $\t$-th block over all of the hidden layers, denoted without a superscript ${\bf h}_{(\t)} := {\bf h}^{[1,2^{s+d-j}]}_{(\t)}$. 
Each section will be broken into subsections, indexed by $q = 1,2,\dots,  2^{d-j}$. 
The $q$-th subsection of the $\t$-th section is ${\bf h}_{(\t)}^{(q)} := {\bf h}_{(\t)}^{[(q-1)2^s+1,q2^s]}$. 
Last, each subsection of the network will have a Gray code associated with it, i.e., a sequence of binary vectors with subsequent vectors differing in only one bit.  In actuality, since each subsection will be capable of performing not just one but rather multiple tasks per layer, each subsection will have a set of partial Gray codes associated with it, to be defined in the following.
See Figure~\ref{fig:zero} for an illustration of the notation described here. 

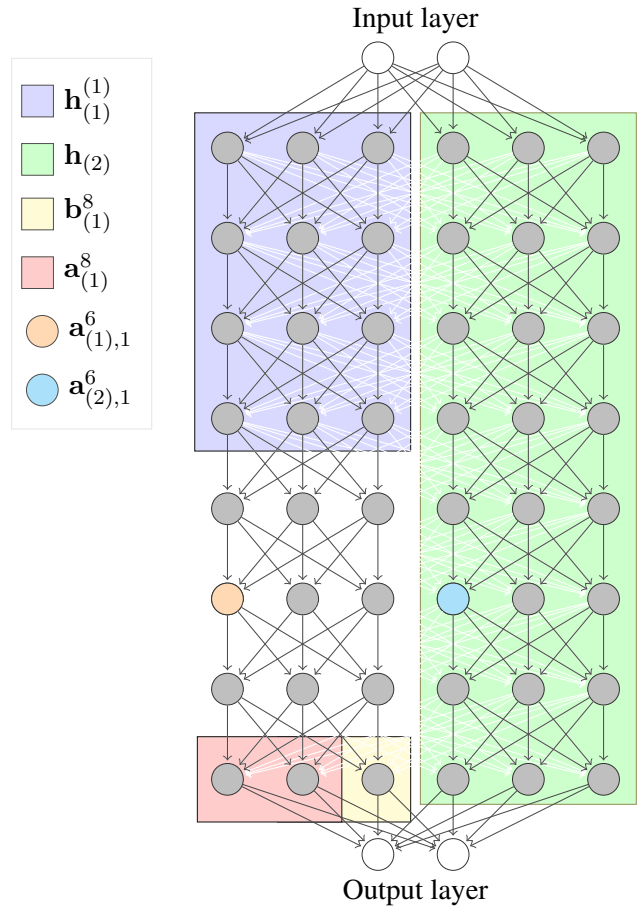
\begin{figure}
\centering
\begin{tikzpicture}[shorten >=1pt,->,draw=black!10, node distance=\layersep]
   
    \tikzstyle{every pin edge}=[<-,shorten <=1pt]
    \tikzstyle{neuron}=[circle,fill=black!25,minimum size=12pt,inner sep=0pt]
    \tikzstyle{input neuron}=[neuron, draw=black!80, line width=0.1mm, fill = black!0!white];
    \tikzstyle{output neuron}=[neuron, draw=black!80, line width=0.1mm, fill = black!0!white];
    \tikzstyle{hidden neuron}=[neuron, draw=black!80, line width=0.1mm, fill = black!25!white];
    \tikzstyle{annot} = [text width=4em, text centered]

    \foreach \name / \y in {1,...,2}
        \node[input neuron] (I-\name) at (-\y-2,10) {};
    \foreach \name / \y in {1,...,6}
        \path node[hidden neuron] (H1-\name) at (-\y, 8.8 cm) {};
    \foreach \name / \y in {1,...,6}
        \path node[hidden neuron] (H2-\name) at (-\y, 7.6cm) {};
    \foreach \name / \y in {1,...,6}
        \path node[hidden neuron] (H3-\name) at (-\y, 6.4cm) {};
    \foreach \name / \y in {1,...,6}
        \path node[hidden neuron] (H4-\name) at (-\y, 5.2cm) {};
    \foreach \name / \y in {1,...,6}
        \path node[hidden neuron] (H5-\name) at (-\y, 4 cm) {};
    \foreach \name / \y in {1,...,6}
        \path node[hidden neuron] (H6-\name) at (-\y, 2.8cm) {};
    \foreach \name / \y in {1,...,6}
        \path node[hidden neuron] (H7-\name) at (-\y, 1.6cm) {};
    \foreach \name / \y in {1,...,6}
        \path node[hidden neuron] (H8-\name) at (-\y, 0.4cm) {};
    \foreach \name / \y in {1,...,2}
        \node[output neuron] (O-\name) at (-\y-2, -0.6 cm) {};

    \path node[hidden neuron, orange!30, draw=black!80, line width=0.1mm] (a_1) at (-6,2.8cm) {};
    \path node[hidden neuron, cyan!30, draw=black!80, line width=0.1mm] (a_2) at (-3,2.8cm) {};
    
    \foreach \source in {1,...,2}
        \foreach \dest in {1,...,6}
            \path (I-\source) edge[black!70] (H1-\dest);
   
    \foreach \layer [count=\xi from 2] in {1,...,7}
    	\foreach \source in {1,...,6}
        	\foreach \dest in {1,...,6}
            	\path (H\layer-\source) edge[white] (H\xi-\dest);

    \foreach \layer [count=\xi from 2] in {1,...,7}
    	\foreach \source in {1,2,3}
        	\foreach \dest in {1,2,3}
            	\path (H\layer-\source) edge[black!70] (H\xi-\dest);

    \foreach \layer [count=\xi from 2] in {1,...,7}
    	\foreach \source in {4,5,6}
        	\foreach \dest in {4,5,6}
            	\path (H\layer-\source) edge[black!70] (H\xi-\dest);

    \foreach \source in {1,...,6}
        \foreach \dest in {1,...,2}
            \path (H8-\source) edge[black!70] (O-\dest);

    \node at (-5,8.7 cm) [
    add rectangle=4.5cm
    ] {
    \begin{varwidth}{2.6cm}
    $$\hspace{3.2cm}$$
    \end{varwidth}
    };

    \node at (-2,8.7 cm) [
    add Rectangle=9.2cm
    ] {
    \begin{varwidth}{2.6cm}
    $$\hspace{3.2cm}$$
    \end{varwidth}
    };

    \node at (-4.45,0.4 cm) [
    add RRectangle=0.8cm
    ] {
    \begin{varwidth}{1.5cm}%
    $$\hspace{1.5cm}$$
    \end{varwidth}%
    };

    \node at (-5.44,0.4 cm) [
    add RRectanglee=0.8cm
    ] {
    \begin{varwidth}{1.655cm}%
    $$\hspace{1.65cm}$$
    \end{varwidth}%
    };

    \tikzstyle{section}=[rectangle,fill=green!20,minimum size=12pt,inner sep=0pt,line width=0.1mm, draw=black!70]
    \tikzstyle{subsection}=[rectangle,fill=blue!15,minimum size=12pt,inner sep=0pt,line width=0.1mm, draw=black!70]
    \tikzstyle{ablock}=[rectangle,fill=red!20,minimum size=12pt,inner sep=0pt,line width=0.1mm, draw=black!70]
    \tikzstyle{bblock}=[rectangle,fill=yellow!20,minimum size=12pt,inner sep=0pt,line width=0.1mm, draw=black!70]
    \tikzstyle{noder}=[circle,fill=orange!30,minimum size=12pt,inner sep=0pt,line width=0.1mm, draw=black!70]
    \tikzstyle{noder2}=[circle,fill=cyan!30,minimum size=12pt,inner sep=0pt,line width=0.1mm, draw=black!70]
    
    \matrix [draw,below left] at (-7,10) {
  	\node [subsection,label=right:${\bf h}_{(1)}^{(1)}$] {}; \\
  	\node [section,label=right:${\bf h}_{(2)}$] {}; \\
  	\node [bblock,label=right:${\bf b}_{(1)}^8$] {}; \\
  	\node [ablock,label=right:${\bf a}_{(1)}^8$] {}; \\
  	\node [noder,label=right:${\bf a}_{(1),1}^6$] {}; \\
  	\node [noder2,label=right:${\bf a}_{(2),1}^6$] {}; \\
	};
    \node[above of=I-2, node distance=.5cm] {\hspace{1cm}Input layer};
    \node[below of=O-2, node distance=.5cm] {\hspace{1cm}Output layer};
\end{tikzpicture}
\caption{The network architecture from Theorem~\ref{theorem:deep2} for $j = 1$, with $d = 2$ inputs and $s = 2$ outputs. 
The figure exemplifies the notations defined in Section~\ref{section:Notation}. 
The light connections are set to zero, separating the hidden units into $2^{d-j}$ parallel running sections. 
Here there are 2 sections indexed by $\t=1,2$, and in each section there are 2 subsections indexed by $q = 1,2$. 
The network will have $2^d$ subsections in total when added across all sections. 
Each output vector is generated with an appropriate conditional probability given the input vector ${\bf x}$, 
by mapping the input through a corresponding sequence of states of ${\bf a}_{(\t)}$ with appropriate probability, 
and information about the input is preserved throughout the hidden layers by ${\bf b}_{(\t)}$.}
\label{fig:zero}
\end{figure}

\subsection{Probability Mass Sharing}

For fixed weights and a given input, a width-$m$ layer of the network will exhibit a marginal distribution $p \in \D_m$. 
A subsequent width-$m$ layer determines a particular mapping of $p$ to another distribution $p' \in \D_m$. 
For certain choices of the parameters, this mapping transforms $p$ in such a way that a fraction of the mass of a given state ${\bf g}\in\{0,1\}^m$ is transferred to some other state $\hat{\bf g}\in\{0,1\}^m$, so that $p'(\hat{\bf g}) = p(\hat {\bf g}) + \lambda p({\bf g})$, $p'({\bf g}) = (1-\lambda) p({\bf g})$, and 
$p'({\bf z}) = p({\bf z})$ for all other states ${\bf z}$. 
This mapping is referred to as probability mass sharing, and was exploited in the works of \cite{sutskever2008deep, le2010deep, montufar2011refinements}. 
One important takeaway from these works is that probability mass sharing in one layer is restrictive and the states ${\bf g}$ and $\hat{\bf g}$ need to stand in a particular relation to each other, e.g., being Hamming neighbors. \citet{78096} give a description of the mappings that are expressible by the individual layers of a Bayesian sigmoid belief network.  In the following we describe ways in which probability mass sharing is possible. 

We define Gray codes and partial Gray codes as they will be useful in discussion of probability mass sharing sequences. 
A \textit{Gray code} ${\cal G}$ for $s$ bits is a sequence of vectors ${\bf g}_i\in \{0,1\}^s$, $i=1,\ldots, 2^s$, such that $\|{\bf g}_i - {\bf g}_{i-1}\|_H = 1$ for all $i \in \{2,\dots,2^s\}$, and $\cup_i \{{\bf g}_i\} = \{0,1\}^s$. 
One may visualize such a code as tracing a path along the edges of the $s$-cube without ever returning to the same vertex and covering all the vertices. 
A \textit{partial Gray code} ${\cal S}_i$ for $s$ bits is a sequence of vectors ${\cal S}_{i,j}\in\{0,1\}^s$, $j=1,\ldots, r\leq 2^s$, such that $\|{\cal S}_{i,j} - {\cal S}_{i,j-1}\|_H=1$ for all $j \in \{1,\ldots,r\}$. 
We will be interested in collections of partial Gray codes which contain the same number of vectors and partition $\{0,1\}^s$. 
The analysis will have subsections of the network activate to the values of different partial Gray codes with appropriate probabilities. 

The first proposition states that when considering two consecutive layers $l-1$ and $l$, there exists a weight vector ${\bf W}^l_i$ and bias $b_{i}^l$ such that $h_{i}^l$ will be a copy of $h_i^{l-1}$ with arbitrarily high probability. 
See Section 3.2 of \cite{sutskever2008deep} or Section 3.3 of \cite{le2010deep} for the equivalent statements. 
\begin{proposition} 
\label{prop:sharing1}
    Fix $\e \in (0,1/2)$ and $\a = \log{(1-\e)} - \log{(\e)}$. 
    Choose the $i$-th row of weights ${\bf W}_i^l$ such that $W_{ii}^l = 2\a$ and $W_{ij}^l = 0$ for $j \neq i$, and choose the $i$-th bias $b_i^l = -\a$. 
    Then $\Pr(h_{i}^l = h_{i}^{l-1} | {\bf h}^{l-1}) = (1-\e)$ for all ${\bf h}^{l-1}$. 
    Letting $\a \to \infty$ allows for the unit to copy the state of a unit in the previous layer with arbitrarily high probability. 
\end{proposition}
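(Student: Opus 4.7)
The plan is to verify the statement by direct computation of the conditional probability produced by the specified weights and bias, using the logistic sigmoid activation. The key observation is that the choice $W_{ii}^l = 2\a$, $W_{ij}^l = 0$ for $j \neq i$, and $b_i^l = -\a$ makes the pre-activation of unit $i$ in layer $l$ depend only on $h_i^{l-1}$, so the computation reduces to two cases.

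First I would substitute the weights into the pre-activation formula, obtaining ${\bf W}_i^l {\bf h}^{l-1} + b_i^l = 2\a\, h_i^{l-1} - \a$. Evaluating this at $h_i^{l-1} = 0$ gives $-\a$, and at $h_i^{l-1} = 1$ gives $+\a$. By equation \eqref{eq:module}, the activation probability is then $\s(-\a)$ or $\s(\a)$ respectively, independent of the remaining entries of ${\bf h}^{l-1}$.

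Next I would use the identity $\s(\a) = 1/(1+e^{-\a})$ together with the definition $\a = \log(1-\e) - \log(\e) = \log\!\tfrac{1-\e}{\e}$, which gives $e^{-\a} = \tfrac{\e}{1-\e}$ and hence $\s(\a) = 1 - \e$ and $\s(-\a) = \e$. Combining this with the two cases: if $h_i^{l-1} = 0$, then $\Pr(h_i^l = 0 \mid {\bf h}^{l-1}) = 1 - \s(-\a) = 1-\e$; if $h_i^{l-1} = 1$, then $\Pr(h_i^l = 1 \mid {\bf h}^{l-1}) = \s(\a) = 1-\e$. In both cases $\Pr(h_i^l = h_i^{l-1}\mid{\bf h}^{l-1}) = 1-\e$, as claimed.

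Finally, the limiting assertion follows immediately from $\a \to \infty$ as $\e \to 0^+$, which sends $1-\e \to 1$. There is no real obstacle here; the proof is a one-line evaluation of the sigmoid at $\pm\a$ combined with the algebraic identity for $\a$ in terms of $\e$.
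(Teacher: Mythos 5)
Your proposal is correct and follows essentially the same route as the paper's proof: evaluate the pre-activation $2\a h_i^{l-1}-\a$ in the two cases, note that $\a=\log\tfrac{1-\e}{\e}=\s^{-1}(1-\e)$ so that $\s(\a)=1-\e$ and $\s(-\a)=\e$, and conclude that the copy probability is $1-\e$ in both cases. No differences worth noting.
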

\begin{proof}
For the given choice of weights and bias, the total input to the $i$-th unit in layer $l$ will be $(2\a h_i^{l-1} - \a)$, 
meaning
\bq
    \Pr(h_{i}^l = 1 | {\bf h}^{l-1}) = \begin{cases} \s(\a) & \text{ when } h_i^{l-1} = 1 \\ \s(-\a) & \text{ when } h_i^{l-1} = 0\end{cases} . 
\eq 
Since $\a = \s^{-1}(1-\e)$, one has that $\Pr(h_{i}^l = h_{i}^{l-1} | {\bf h}^{l-1}) = (1-\e)$. 
\end{proof}

The next theorem states that the weights ${\bf W}_i^l$ and bias $b_i^l$ of the $i$-th unit in layer $l$ may be chosen such that, if ${\bf h}^{l-1}$ matches any of two pre-specified vectors ${\bf g}$ and $\hat{\bf g}$ in $\{0,1\}^m$, then $h_i^l$ flips $h_i^{l-1}$ with a pre-specified probability. Otherwise, if ${\bf h}^{l-1}$ is not ${\bf g}$ or $\hat{\bf g}$, the bit is copied with arbitrarily high probability. 
This corresponds to Theorem~2 of \citet{le2010deep} adjusted to our notation. 

\begin{theorem}[Theorem~2 of \citealp{le2010deep}]
\label{theorem:sharing1}
Consider two subsequent layers $l-1$ and $l$ of width $m$. 
Consider two vectors ${\bf g}$ and ${\hat {\bf g}}$ in $\{0,1\}^m$ with $\|{\bf g} - {\hat {\bf g}}\|_H = 1$, differing in entry $j$. 
Fix two probabilities $\rho,\hat\rho \in (0,1)$ and a tolerance $\epsilon \in (0,1/2)$. 
For any $i\neq j$, there exist $({\bf W}_i^l, b_i^l)$ with absolute values at most $\alpha = 2m(\sigma^{-1}(1-\epsilon))$ 
such that 
\begin{align*}
\Pr(h_i^l &= 1 \, | \, {\bf h}^{l-1} = {\bf g}) = \rho_\e,\\ 
\Pr(h_i^l &= 1 \, | \, {\bf h}^{l-1} = \hat{\bf g}) = \hat \rho_\e, 
\intertext{and, otherwise,}
\Pr(h_i^l &= h_i^{l-1}\, | \, {\bf h}^{l-1} \neq {\bf g}, \hat {\bf g}) = (1-\epsilon).  
\end{align*}
Here $\rho_\epsilon=\max\{\epsilon, \min\{\rho,1-\epsilon\}\}$. 
If we fix a maximum parameter magnitude $\alpha$ instead of a tolerance, then we can substitute $\epsilon = 1-\sigma(\frac{\alpha}{2m})$. 
\end{theorem}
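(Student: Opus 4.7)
The approach is to translate the three probability conditions into conditions on the affine pre-activation
\[
z({\bf h}) := {\bf W}_i^l {\bf h} + b_i^l
\]
and to construct the coefficients explicitly. Set $\beta := \sigma^{-1}(1-\epsilon)$ so that $\alpha = 2m\beta$, and let $r := \sigma^{-1}(\rho_\epsilon)$ and $\hat r := \sigma^{-1}(\hat\rho_\epsilon)$. The clipping in the definition of $\rho_\epsilon, \hat\rho_\epsilon$ guarantees $r,\hat r \in [-\beta,\beta]$. Thus it suffices to exhibit an affine $z$ on $\{0,1\}^m$ satisfying (i) $z({\bf g}) = r$, (ii) $z(\hat{\bf g}) = \hat r$, (iii) $z({\bf h}) \geq \beta$ for every ${\bf h} \notin \{{\bf g},\hat{\bf g}\}$ with $h_i = 1$, and (iv) $z({\bf h}) \leq -\beta$ for every ${\bf h} \notin \{{\bf g},\hat{\bf g}\}$ with $h_i = 0$, with every coordinate of ${\bf W}_i^l$ and $b_i^l$ bounded in absolute value by $\alpha$. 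Applying $\sigma$ to (i)--(iv) then yields the three stated probability conditions, and the last sentence of the theorem follows by inverting the defining relation $\alpha = 2m\,\sigma^{-1}(1-\epsilon)$.

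\textbf{Construction.} I first reduce to the case $g_i = 1$ by symmetry (flip the sign of every weight and adjust $b_i^l$ accordingly). I then split $z = C + E$, where the \emph{copy drive} $C({\bf h}) = 2\gamma h_i - \gamma$ takes value $+\gamma$ on $\{h_i=1\}$ and $-\gamma$ on $\{h_i=0\}$ with $\gamma$ of order $m\beta$, and the \emph{edge correction} $E$ uses the remaining coordinates to pin $z$ to $r,\hat r$ on $\{{\bf g},\hat{\bf g}\}$. The key geometric input is that $\{{\bf g},\hat{\bf g}\}$ is an edge of the $m$-cube differing only in coordinate $j$. I take $W_{ik} = -\beta(2g_k-1)$ for $k \notin\{i,j\}$, so that $\sum_{k\neq i,j} W_{ik} h_k$ is minimized exactly when $h_k = g_k$ for all $k \neq j$, and every single bit disagreement among these coordinates increases the sum by $\beta$; this isolates the two-vertex edge. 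I take $W_{ij} = (\hat r - r)/(1-2g_j)$, which satisfies $|W_{ij}|\leq 2\beta$, to split $r$ from $\hat r$ along the edge. Finally, $b_i^l$ is determined by the single equation $z({\bf g}) = r$.

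\textbf{Verification and main obstacle.} For ${\bf h} \notin \{{\bf g},\hat{\bf g}\}$ with $h_i = 1$ there is at least one disagreement between $h_k$ and $g_k$ among $k \neq i,j$ (since $h_i = g_i$), and each such disagreement contributes an extra $+\beta$ to $E$; combined with the copy drive, $z({\bf h}) \geq \gamma + r - |W_{ij}| + \beta \cdot t$ for $t \geq 1$ disagreements, which exceeds $\beta$ for an appropriate choice of $\gamma$ of order $m\beta$. The $h_i=0$ case is symmetric. The delicate step is checking that each parameter fits within $\alpha = 2m\beta$: the off-diagonal weights $W_{ik}$ for $k\neq i,j$ have magnitude $\beta$, $|W_{ij}|\leq 2\beta$, and $W_{ii} = 2\gamma$ and $b_i^l$ each collect contributions of order $m\beta$. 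The main obstacle is calibrating $\gamma$ to be just large enough to dominate the worst-case accumulated edge correction on off-edge vertices, while still keeping $|W_{ii}|$ and $|b_i^l|$ within $2m\beta$; this relies on the $O(m)$ budget afforded by having $m$ input connections and one bias, and on the one-dimensional structure of the edge $\{{\bf g},\hat{\bf g}\}$ that allows the override between $r$ and $\hat r$ to be carried entirely by the single coordinate $W_{ij}$ rather than spread over the $m-2$ orthogonal coordinates. The worst case to check is $g_i = 1$ with an extreme pattern of $g_k$ on $k \neq i,j$, where the freedom in choosing the exact value of $\gamma$ within its permissible range is used to balance the magnitudes of $W_{ii}$ and $b_i^l$ against the bound.
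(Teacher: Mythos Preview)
The paper does not prove this theorem; it is stated with attribution to Le Roux and Bengio (2010) and used without proof. So I am evaluating your argument on its own merits rather than against an argument in the paper.

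Your overall strategy---a copy drive $C$ plus an edge correction $E$ supported on the remaining coordinates---is the right framework, and your choice $W_{ij} = (\hat r - r)/(1-2g_j)$ is exactly what is needed to split $r$ from $\hat r$ along the edge. But there is a concrete error in your verification. For ${\bf h}$ with $h_i = g_i = 1$, the copy drive contributes $C({\bf h}) = C({\bf g}) = \gamma$, and since the bias has already been chosen to force $z({\bf g}) = C({\bf g}) + E({\bf g}) = r$, that $\gamma$ is absorbed into $r$. Hence for such ${\bf h}$,
\[
z({\bf h}) \;=\; z({\bf g}) + \bigl(E({\bf h}) - E({\bf g})\bigr) \;=\; r + W_{ij}(h_j - g_j) + \beta\, t,
\]
with no additional $\gamma$ term; your claimed lower bound $z({\bf h}) \geq \gamma + r - |W_{ij}| + \beta t$ is off by exactly $\gamma$. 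Taking $t = 1$, $h_j = g_j$, and $r = -\beta$ gives $z({\bf h}) = 0 < \beta$, so condition~(iii) fails with your stated weights $|W_{ik}| = \beta$.

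The natural repair is to double the off-diagonal weights to $W_{ik} = -2\beta(2g_k - 1)$, so that a single disagreement already yields $z({\bf h}) \geq \min(r,\hat r) + 2\beta \geq \beta$ on the $h_i = 1$ side. But then on the $h_i = 0$ side the accumulated correction can reach $2\beta(m-2)$, forcing $W_{ii}$ up to roughly $2\beta(m-1)$, and the bias $b_i^l = r - W_{ii} + 2\beta s - W_{ij} g_j$ (with $s$ the number of ones among $g_k$, $k\neq i,j$) can hit about $(2m+1)\beta$ in absolute value---just over the budget $\alpha = 2m\beta$. You correctly flag this calibration as the main obstacle, but you have not actually carried it through: a blanket triangle inequality on the bias is too crude, and one has to exploit cancellation between $-W_{ii}$ and $2\beta s$ (both of which can be tuned, since the parameters may depend on ${\bf g}$) to fit inside the stated bound.
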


This theorem allows to have a given vector ${\bf g}\in\{0,1\}^m$ map at the subsequent layer to itself or to a Hamming adjacent vector ${\bf g}'\in\{0,1\}^m$ with a pre-specified probability $\rho$, with 
\begin{align}
\Pr({\bf h}^l = {\bf g} \, | \, {\bf h}^{l-1} = {\bf g}) &= \rho_\e (1-\epsilon)^{m-1},
\label{eq:sharingstep_a}\\
\Pr({\bf h}^l = {\bf g}' \, | \, {\bf h}^{l-1} = {\bf g}) &= (1-\rho_\e) (1-\epsilon)^{m-1},
\intertext{and}
 \Pr({\bf h}^l = {\bf h}^{l-1}\, | \, {\bf h}^{l-1} \neq {\bf g}) &= (1-\epsilon)^m, 
 \label{eq:sharingstep}
 \end{align}
where $\e$ can be made arbitrarily small if the maximum magnitude $\alpha = 2m(\sigma^{-1}(1-\epsilon))$ of weights and biases is allowed to grow to infinity. 
This mapping is referred to as a \textit{probability mass sharing step}, or a sharing step for short. 
This in turn allows to transfer probability mass around the $m$-cube one vertex at a layer until the correct probability mass resides on each binary vector, to the given level of accuracy. The sharing path follows a Gray code with each pair of consecutive vectors having Hamming distance one.

In fact, the theorem allows us to overlay multiple sharing paths, so long as the Gray codes are sufficiently separated. 
A collection of partial codes satisfying this requirement and covering the set of binary strings is described in the following theorem, which is Lemma 4 of \citet{montufar2011refinements}. 

\begin{theorem}[Lemma 4 of \citealp{montufar2011refinements}] 
\label{theorem:sharing2}
    Let $m = \frac{2^b}{2} + b, b\in\N, b\geq 1$. There exist $2^b = 2(m-b)$ sequences $\sS_{i}$, $1 \leq i \leq 2^b$, composed of length-$m$ binary vectors $\sS_{i,k}$, $1\leq k\leq 2^{m-b}$, satisfying the following: 
    \begin{enumerate}
    \item $\{\sS_1,\ldots,\sS_{2^b}\}$ is a partition of $\{0,1\}^m$. 
    \item The vectors $\sS_{1,1},\ldots, \sS_{2^b,1}$ share the same values in the last $m-b$ bits. 
    \item The vector $(0,\ldots, 0)$ is the last element $\sS_{1,2^{m-b}}$ of the first sequence. 
    \item $\forall i\in\{1,\ldots,2^b\}$, $\forall k\in\{1,\ldots,2^{m-b} -1\}$ we have $\|\sS_{i,k},\sS_{i,k+1}\|_H = 1$. 
    \item $\forall i,r\in\{1,\ldots,2^b\}$ such that $i\neq r$ and $\forall k\in\{1,\ldots, 2^{m-b} -1\}$, the bit switched between $\sS_{i,k}$ and $\sS_{i,k+1}$ and the bit switched between $\sS_{r,k}$ and $\sS_{r,k+1}$ are different, unless $\|S_{i,k} - S_{r,k}\|_H = 1$. 
    \end{enumerate}
\end{theorem}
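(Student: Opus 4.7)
The plan is to construct the $2^b$ sequences explicitly via a shifted Gray-code scheme that exploits the arithmetic identity $m - b = 2^{b-1}$. I parametrize $\{0,1\}^m$ as pairs $({\bf v}, {\bf w}) \in \{0,1\}^b \times \{0,1\}^{m-b}$, fix the perfect matching $M = E_1$ of the $b$-cube (all edges flipping the first coordinate, giving $2^{b-1}$ pairs), and arrange for each sequence $\sS_i$ to hold its ${\bf v}$-block constant while its ${\bf w}$-block traces a Gray code on $\{0,1\}^{m-b}$. Enumerate the pairs as $P_0, P_1, \ldots, P_{2^{b-1}-1}$ with the all-zero vector of $\{0,1\}^b$ lying in $P_0$; each pair supplies two sequences of the target construction, one per vertex.

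Fix a Gray code $\sG$ on $\{0,1\}^{m-b}$ with flip sequence $(g_1, \ldots, g_{2^{m-b}-1}) \in \{1, \ldots, m-b\}$ and starting point ${\bf w}_0$ chosen so that the walk ends at the all-zero vector. For pair $P_t$, use the cyclically shifted flip sequence $g_k^{(t)} := ((g_k + t - 1) \bmod (m-b)) + 1$, and declare that both of its vertices produce sequences that start at $({\bf v}, {\bf w}_0)$ and evolve by flipping coordinate $g_k^{(t)}$ of ${\bf w}$ at step $k$. A direct XOR computation shows that the $k$-th vertex of the shifted walk satisfies ${\bf w}_k' = {\bf w}_0 \oplus \pi({\bf w}_0) \oplus \pi({\bf w}_k)$, where $\pi$ is the cyclic permutation of positions and ${\bf w}_k$ is the $k$-th vertex of the original $\sG$; this walk therefore still visits each element of $\{0,1\}^{m-b}$ exactly once, so each pair traces a valid Gray code. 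Label $\sS_1$ to be the sequence in $P_0$ carrying ${\bf v} = \mathbf 0$.

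Properties 1--4 then follow immediately: distinct ${\bf v}$-values distinguish sequences across pairs and within a pair, so every vector of $\{0,1\}^m$ is visited exactly once; all initial vectors share the last $m-b$ bits (namely ${\bf w}_0$); $\sS_1$ terminates at the all-zero vector of $\{0,1\}^m$ by our choices; and each step flips exactly one coordinate. The substantive check is property 5. Within a pair, the two sequences always agree in ${\bf w}$ and differ only in the first coordinate of ${\bf v}$, hence are Hamming-adjacent at every step, and the ``unless'' clause excuses them from flipping different bits. Across distinct pairs $P_t \neq P_{t'}$, the map $t \mapsto g_k^{(t)}$ is injective for each fixed $k$, so $g_k^{(t)} \neq g_k^{(t')}$ and the ``different flipped bit'' condition holds.

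The main obstacle is the shift-equivariance step, namely verifying that cyclically relabelling the flip positions of a Gray code while keeping the same starting vertex still yields a Hamiltonian traversal. This is where the parameter choice $m = 2^{b-1} + b$ is essential: it makes the $2^{b-1}$ cyclic shifts bijectively label the $2^{b-1}$ pairs in $E_1$ while exactly exhausting the $m-b$ available positions in ${\bf w}$, so the bijection between pairs, shifts, and positions is forced by the arithmetic. The remaining bookkeeping --- choosing ${\bf w}_0$ so that $\sG$ ends at the all-zero vector --- can always be arranged by reversing $\sG$ if necessary, completing the construction.
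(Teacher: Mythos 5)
The paper does not prove this statement at all: it is imported verbatim as Lemma~4 of the cited reference (Mont\'ufar and Ay, 2011), so there is no in-paper argument to compare against. Your construction is correct and self-contained, and it is essentially the standard one behind the cited lemma: index the $2^b$ sequences by the first $b$ bits, pair them along a perfect matching of the $b$-cube in the first coordinate, and run cyclically shifted copies of one Gray code on the last $m-b$ bits, using $m-b=2^{b-1}$ to put the shifts in bijection with the pairs. The one step that genuinely needs checking --- that relabelling the flip positions of a Gray code by a permutation $\pi$, while keeping the start vertex, still yields a Hamiltonian path --- you handle correctly via ${\bf w}_k' = {\bf w}_0 \oplus \pi({\bf w}_0) \oplus \pi({\bf w}_k)$, which exhibits the shifted walk as a bijective image of the original. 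Your case analysis for property~5 is also complete: within a pair the two sequences agree in ${\bf w}$ at every step and differ only in the first coordinate, so the Hamming-distance-one escape clause applies, while across pairs the injectivity of $t \mapsto g_k^{(t)}$ forces distinct flipped bits. The remaining claims (partition, common suffix ${\bf w}_0$ of the initial vectors, termination of $\sS_1$ at the all-zero vector after reversing $\sG$ if needed) follow as you say. I have verified the construction explicitly for $b=1$ and $b=2$ and see no gap.
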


This theorem describes a schedule that allows for probability to be shared off of $2(m-b)$ vectors per layer, starting from the vectors $\sS_{i,1}$, $i=1,\ldots, 2(m-b)$. 
For every layer $l$, if ${\bf h}^{l-1}$ matches $\sS_{i,l-1}$, probability mass will be shared onto $\sS_{i,l}$, for each $i=1,\ldots, 2(m-b)$, 
and it will be copied unchanged otherwise. 
At this, the accuracy of the transition probabilities depends on the maximum allowed magnitude of the weighs and biases, similar to equation~\eqref{eq:sharingstep}.

\subsection{Universal Approximation}
\label{section:proofdeep}

\subsubsection*{The First Layer}

The first step of the proof focuses on the flexibility of the first layer of the network. 
For fixed $d,s,j,$ there are $2^j(s+d-j)$ units in ${\bf h}^1$ belonging to $2^j$ consecutive blocks, indexed by $\t=1,\ldots, 2^j$. 

Within each block, set the parameters according to Proposition \ref{prop:sharing1} 
to copy the last $d-j$ bits of the input ${\bf x}$, 
so that ${\bf b}^1_{(\t)} = {\bf x}_{[j+1,d]}$ 
with probability $(1-\e)^{j-d}$ and parameters of magnitude no more than $\alpha=2\sigma^{-1}(1-\e)$. 

Within each block, ${\bf a}_{(\t)}^1$ will activate with probability close to one for exactly $2^{d-j}$ inputs ${\bf x}$. 
This can be done by setting the parameters of the first unit ${\bf a}_{(\t),1}^1$ of each block such that it takes state $1$ only if the first $j$ bits of ${\bf x}$ agree with the number $\t$ of the block. This is formalized in the following lemma. 
The remainder units ${\bf a}_{(\t),i}^1$, $i=2,\ldots, s$ are set to take the zero state with probability $(1-\e)$, by choosing their weights to be zero and bias $-\sigma^{-1}(1-\e)$. 

\begin{lemma} 
\label{lemma:deep1}
    Fix $\t \in \{1,2,\dots,2^j\}$ and $\e \in (0,1/2)$.  Let $S = \big\{ {\bf x} \in \{0,1\}^d \; \Big| \; \big\lfloor{\frac{\operatorname{int}({\bf x})}{2^{d-j}}\big\rfloor} + 1 = \t \big\}$. 
    Then there exist weights ${\bf W} \in \R^{1\times d}$ and bias $b \in \R$ for the first unit of the $\t$-th block, 
    having absolute values at most $\alpha=2d\sigma^{-1}(1-\epsilon)$,
    such that 
    \begin{equation}
	    \Pr({\bf a}^1_{(\t),1} = 1 | {\bf x}) = 
	    \begin{cases} 
		    (1-\e) & {\bf x} \in S \\ \e & {\bf x} \notin S
	    \end{cases}. 
    \end{equation}
\end{lemma}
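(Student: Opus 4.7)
The set $S$ is exactly the set of inputs whose first $j$ bits coincide with the binary representation $\boldsymbol\tau = (\tau_1,\dots,\tau_j)\in\{0,1\}^j$ of the integer $\tau-1$. Geometrically, $S$ is a codimension-$j$ face of the $d$-cube, and as such it admits a supporting hyperplane. The plan is to exhibit such a hyperplane explicitly and scale it up until the logistic sigmoid saturates within $\epsilon$ of $0$ and $1$, while carefully tracking the magnitudes of the weights and bias.

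First, I would define a ``signed indicator'' affine form. Set $\tilde W_i = 2\tau_i-1 \in \{-1,+1\}$ for $i\in\{1,\dots,j\}$, $\tilde W_i = 0$ for $i>j$, and $\tilde b = -n_\tau + 1/2$, where $n_\tau = \sum_{i\leq j}\tau_i$. A direct computation shows that
\begin{equation*}
\tilde{\bf W}{\bf x}+\tilde b \;=\; \tfrac{1}{2} \;\;\text{if } {\bf x}\in S, \qquad \tilde{\bf W}{\bf x}+\tilde b \;=\; \tfrac{1}{2}-k({\bf x}) \;\leq\; -\tfrac{1}{2} \;\;\text{if } {\bf x}\notin S,
\end{equation*}
where $k({\bf x})\geq 1$ is the Hamming distance of the first $j$ bits of ${\bf x}$ from $\boldsymbol\tau$. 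This is because each mismatch either kills a $+1$ contribution (when $\tau_i=1$, $x_i=0$) or introduces a $-1$ (when $\tau_i=0$, $x_i=1$).

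Next, I would scale by $2\alpha_0$ with $\alpha_0 = \sigma^{-1}(1-\epsilon)$, i.e.\ set ${\bf W} = 2\alpha_0 \tilde{\bf W}$ and $b = 2\alpha_0 \tilde b$. Then ${\bf W}{\bf x}+b \geq \alpha_0$ whenever ${\bf x}\in S$ and ${\bf W}{\bf x}+b \leq -\alpha_0$ whenever ${\bf x}\notin S$, so feeding this into the logistic sigmoid gives
\begin{equation*}
\Pr({\bf a}^1_{(\tau),1}=1\mid{\bf x}) \;=\; \sigma({\bf W}{\bf x}+b) \;\geq\; 1-\epsilon\quad\text{on } S,\qquad \leq \epsilon\quad\text{off } S,
\end{equation*}
as required (in fact equality at the boundary $k=1$, with strictly tighter bounds when $k>1$).

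Finally, I would verify the magnitude bound. Each nonzero weight satisfies $|W_i|=2\alpha_0$, and the bias satisfies $|b| = 2\alpha_0\,|n_\tau - 1/2| \leq 2\alpha_0(d-\tfrac12) < 2d\,\sigma^{-1}(1-\epsilon) = \alpha$. There is no real obstacle here; the only mildly delicate point is making the bias magnitude fit within $\alpha = 2d\sigma^{-1}(1-\epsilon)$, which is what forces the shift by $1/2$ (rather than $1$) when defining $\tilde b$, and which is the reason the factor $2d$ appears in $\alpha$ rather than a smaller constant.
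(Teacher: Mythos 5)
Your proof is correct and is essentially the same as the paper's: both identify $S$ as a face of the $d$-cube, construct the supporting hyperplane with $\pm 1$ weights on the first $j$ coordinates and a bias shifted by $1/2$, and scale by $2\sigma^{-1}(1-\epsilon)$ so that the sigmoid lands at $1-\epsilon$ on $S$ and at most $\epsilon$ off $S$. Your explicit check that the bias magnitude stays below $2d\,\sigma^{-1}(1-\epsilon)$ is a welcome detail the paper leaves implicit.
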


\begin{proof} The probability of unit ${\bf a}^1_{(\t),1}$ activating is given by 
\begin{equation}
	\Pr({\bf a}^1_{(\t),1} = 1 | {\bf x}) = \s({\bf W}{\bf x} + b). 
\end{equation}
Note that $S$ is the set of length-$d$ binary vectors whose first $j$ bits equal the length-$j$ binary vector ${\bf g}$ with integer representation $\t$. 
Geometrically, $S$ is a $d-j$ dimensional face of the $d$-hypercube. 
In turn, there exists an affine hyperplane in $\mathbb{R}^d$ separating $S$ from $\{0,1\}^d\setminus S$. 
For instance, we may choose ${\bf W} = \gamma(2({\bf g}-\frac{1}{2}), {\bf 0})^\top$ and $b=\gamma (\|{\bf g}\|_1-\frac{1}{2})$, which gives ${\bf W}{\bf x} + b = \frac12 \gamma$ for all ${\bf x}\in S$, 
and ${\bf W}{\bf x} + b \leq -\frac12\gamma$ for all ${\bf}\not\in S$. 
Choosing $\gamma = 2\sigma^{-1}(1-\epsilon)$ yields the claim. 
\end{proof}

 Note that ${\bf x}\in S$ is equivalent to ${\bf x}_{[1,j]}={\bf g}$, where ${\bf g}$ is the $j$ bit representation of $\t$. Following \eqref{eq:sharingstep_a}--\eqref{eq:sharingstep} we can also have the second bit activate as 
     \begin{equation}
 	    \Pr({\bf a}^1_{(\t),2} = 1 | {\bf x}) = 
 	    \begin{cases} 
 		    \rho_\e, & {\bf x}_{[1,j]}={\bf g} \\ 
 		    \e, & {\bf x}_{[1,j]}\neq{\bf g}
 	    \end{cases} , 
 	    \label{eq:save-one-layer}
     \end{equation}
for any chosen $\rho\in[0,1]$. 
We will be able to use this type of initialization to save one layer when $j=d$, where there is only one subsection per section. 

\subsubsection*{The Hidden Layers}

In the second part of our construction, the focus is restricted to individual sections of the network, having width $s+(d-j)$ and $L = 2^{d-j}(2^{s-b}+2(s-b)-1)$ layers. 
To prevent separate sections from interfering with one another, choose all weights between units in sections $\t$ and $\t'$ to zero, for all $\t' \neq \t$. 
The $\t$-th section will only be contingent upon its parameters and ${\bf h}_{(\t)}^1$, which can be regarded as the input to the section. 
Each section will be responsible for approximating the target conditional distributions of $2^{d-j}$ inputs. 
Each section should be thought of as consisting of $2^{d-j}$ subsections in sequence, each consisting of $2^{s-b}+2(s-b)-1$ consecutive layers. 

Each subsection will be responsible for approximating the target conditional distribution of a single input ${\bf x}$. 
The first layer of any subsection copies the state from the previous layer, except the very first subsection, which we already described above. 
Subsection $q$ will be ``activated'' if ${\bf a}_{(\t)}^l = (1,0,\ldots,0)$ and ${\bf b}^l_{(\tau)}$ takes the specific value $\operatorname{bin}_{d-j}(q)$, where $l$ is the first layer of the subsection. 
When a subsection is activated, it will carry out a sequence of sharing steps to generate the output distribution. 
This can be achieved in $2^s$ layers by applying a single sharing step per layer, with schedule given by a Gray code with initial state $(1,0,\ldots, 0)$ and final state $(0,\ldots,0)$.  If only single sharing steps are used, then the parameters which need to be trainable are biases.
Alternatively, we can have the first $2(s-b)$ layers of the subsection conduct probability sharing to distribute the mass of ${\bf a}_{(\t)}^l = (1,0,\ldots,0)$ across the initial states $\sS_{i,1}$, $i=1,\ldots, 2(s-b)$, of the partial Gray codes from Theorem~\ref{theorem:sharing2}. Following this, the subsection overlays the $2^b=2(s-b)$ sequences of sharing steps with the schedule from Theorem~\ref{theorem:sharing2}, to generate the output distribution. 
When the subsection is not activated, it copies whatever incoming vector downwards until its last layer. 

By the construction one can see that if ${\bf a}_{(\t)}^1 = {\bf 0}$, probability mass never is transferred off of this state, 
meaning the last hidden layer of the section takes state ${\bf a}_{(\t)}^{L} = {\bf 0}$ with probability close to one. 
Therefore the blocks of the final hidden layer will be distributed as 
\begin{equation}
    {\bf a}_{(\t)}^{L} \sim 
    \begin{cases} 
        p^*(\cdot | {\bf x}), & \text{if } \big(\lfloor{\frac{\operatorname{int}({\bf x})}{2^{d-j}}\rfloor}+1\big) = \t \\ 
        \d_{\bf 0}, & \text{otherwise}
    \end{cases}.
\end{equation}

We can obtain a slight improvement when $j = d$ and there is only one subsection per section.  In this case, we can set two of the initial states $\sS_{i,1}$, $i=1,\ldots, 2(s-b)$, as $(1,0,0,\ldots, 0)$ and $(1,1,0,\ldots, 0)$. 
We initialize the first hidden layer by \eqref{eq:save-one-layer}, which allows us to place probabilities $\rho_\e$ and $(1-\rho_\e)$ on these two states and so, to save one of the $2(s-b)$ sharing steps that are used to initialize the partial Gray codes. 

\subsubsection*{The Output Layer}

The third and final step of the proof is to specify how the last layer of the network will copy the relevant block of the final hidden layer such that the output layer exhibits the correct distribution. 
To this end, we just need the $i$-th unit of the output layer to implement an \texttt{or} gate over the $i$-th bits of all blocks, for $i=1,\ldots,s$. 
This can be achieved, with probability $(1-\e)$, by setting the 
the bias of each output unit as $-\alpha$, 
and the weight matrix ${\bf W} \in \R^{s \times 2^j (s+d-j)}$ 
of the output layer as 
\begin{equation}
    {\bf W} = 
    \a \bpm I_s & | & Z & | & I_s & | & Z & | & \cdots & | & I_s & | & Z \epm,
\end{equation}
where $I_s$ is the $s\times s$ identity matrix, $Z$ is the $s \times (d-j)$ zero matrix, and $\alpha= 2 \sigma^{-1}(1-\e)$. 
This concludes the proof of Theorem~\ref{theorem:deep2}. 

\begin{example}
Consider the case where $d = 1$ and $s = 2$ and the simple sharing scheme by a Gray code.  The narrowest case where $j = 0$ is a network consisting of $2\cdot 2^2 = 8$ layers, each of width $2+1$. There is only one section, and it consists of two subsections, one for each possible input. 
The first subsection is responsible for approximating $p^*(\cdot | {\bf x} = 0)$ and the second for $p^*(\cdot | {\bf x} = 1)$. 
See Figure~\ref{fig:example1} for an illustration of the network and the Gray codes used to specify sharing steps. 
\end{example}

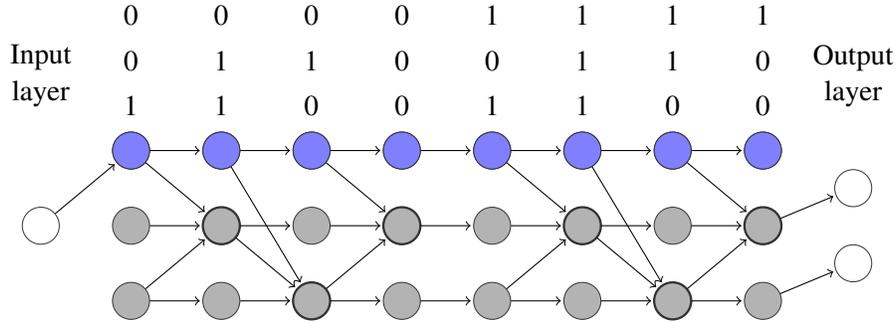
\begin{figure}[ht]
\centering
\begin{tikzpicture}[shorten >=1pt,->,draw=black!40, node distance=\layersep]
    \tikzstyle{every pin edge}=[<-,shorten <=1pt]
    \tikzstyle{neuron}=[circle,fill=black!25,minimum size=14pt,inner sep=0pt]
    \tikzstyle{input neuron}=[neuron, draw=black!80, line width=0.1mm, fill = black!0!white];
    \tikzstyle{output neuron}=[neuron, draw=black!80, line width=0.1mm, fill = black!0!white];
    \tikzstyle{hidden neuron}=[neuron, draw=black!80, line width=0.1mm, fill = black!30!white];
    \tikzstyle{hidden neuronz}=[neuron, draw=black!80, line width=0.1mm, fill = white!50!blue];
    \tikzstyle{Thick neuron}=[neuron, draw=black!80, line width=0.3mm, fill = black!30!white];

    \tikzstyle{annot} = [text width=4em, text centered]

    \node[input neuron] (I-1) at (-6,-2) {};

    \foreach \name / \y in {1,...,3}
        \path node[hidden neuron] (H1-\name) at (-4.8,-\y cm) {};
    \foreach \name / \y in {1,...,3}
        \path node[hidden neuron] (H2-\name) at (-3.6,-\y cm) {};
    \foreach \name / \y in {1,...,3}
        \path node[hidden neuron] (H3-\name) at (-2.4,-\y cm) {};
    \foreach \name / \y in {1,...,3}
        \path node[hidden neuron] (H4-\name) at (-1.2,-\y cm) {};
    \foreach \name / \y in {1,...,3}
        \path node[hidden neuron] (H5-\name) at (0,-\y cm) {};
    \foreach \name / \y in {1,...,3}
        \path node[hidden neuron] (H6-\name) at (1.2,-\y cm) {};
    \foreach \name / \y in {1,...,3}
        \path node[hidden neuron] (H7-\name) at (2.4,-\y cm) {};
    \foreach \name / \y in {1,...,3}
        \path node[hidden neuron] (H8-\name) at (3.6,-\y cm) {};

    \node[Thick neuron] (T1) at (-3.6,-2cm) {};
    \node[Thick neuron] (T2) at (-1.2,-2cm) {};
    \node[Thick neuron] (T3) at (1.2,-2cm) {};
    \node[Thick neuron] (T4) at (-2.4,-3cm) {};
    \node[Thick neuron] (T5) at (2.4,-3cm) {};
    \node[Thick neuron] (T6) at (3.6,-2cm) {};
    
    \foreach \name / \x in {4.8,3.6,2.4,1.2,0,-1.2,-2.4,-3.6}
        \path node[hidden neuronz] (B-\name) at (-\x,-1 cm) {};

    \foreach \name / \y in {1,...,2}
        \node[output neuron] (O-\name) at (4.8,-\y-0.5) {};

    \path (H1-1) edge[black] (H2-1);
    \path (H1-1) edge[black] (H2-2);
    \path (H1-2) edge[black] (H2-2);
    \path (H1-3) edge[black] (H2-2);
    \path (H1-3) edge[black] (H2-3);

    \path (H3-1) edge[black] (H4-1);
    \path (H3-1) edge[black] (H4-2);
    \path (H3-2) edge[black] (H4-2);
    \path (H3-3) edge[black] (H4-2);
    \path (H3-3) edge[black] (H4-3);

    \path (H5-1) edge[black] (H6-1);
    \path (H5-1) edge[black] (H6-2);
    \path (H5-2) edge[black] (H6-2);
    \path (H5-3) edge[black] (H6-2);
    \path (H5-3) edge[black] (H6-3);

    \path (H7-1) edge[black] (H8-1);
    \path (H7-1) edge[black] (H8-2);
    \path (H7-2) edge[black] (H8-2);
    \path (H7-3) edge[black] (H8-2);
    \path (H7-3) edge[black] (H8-3);

    \path (H2-1) edge[black] (H3-1);
    \path (H2-1) edge[black] (H3-3);
    \path (H2-2) edge[black] (H3-2);
    \path (H2-2) edge[black] (H3-3);
    \path (H2-3) edge[black] (H3-3);

    \path (H4-1) edge[black] (H5-1);
    \path (H4-2) edge[black] (H5-2);
    \path (H4-3) edge[black] (H5-3);

    \path (H6-1) edge[black] (H7-1);
    \path (H6-1) edge[black] (H7-3);
    \path (H6-2) edge[black] (H7-2);
    \path (H6-2) edge[black] (H7-3);
    \path (H6-3) edge[black] (H7-3);

    \path (H8-3) edge[black] (O-2);
    \path (H8-2) edge[black] (O-1);

    \path (I-1) edge[black] (H1-1);

    \node[annot,above of=I-1, node distance=2cm] (il) {Input layer};
    \node[annot,above of=O-1, node distance=1.5cm] {Output layer};

    \node[annot,above of=H1-1, node distance=0.6cm] (a1) {1};
    \node[annot,above of=a1, node distance=0.6cm] (a2) {0};
    \node[annot,above of=a2, node distance=0.6cm] (a3) {0};

    \node[annot,above of=H2-1, node distance=0.6cm] (b1) {1};
    \node[annot,above of=b1, node distance=0.6cm] (b2) {1};
    \node[annot,above of=b2, node distance=0.6cm] (b3) {0};

    \node[annot,above of=H3-1, node distance=0.6cm] (c1) {0};
    \node[annot,above of=c1, node distance=0.6cm] (c2) {1};
    \node[annot,above of=c2, node distance=0.6cm] (c3) {0};

    \node[annot,above of=H4-1, node distance=0.6cm] (d1) {0};
    \node[annot,above of=d1, node distance=0.6cm] (d2) {0};
    \node[annot,above of=d2, node distance=0.6cm] (d3) {0};

    \node[annot,above of=H5-1, node distance=0.6cm] (e1) {1};
    \node[annot,above of=e1, node distance=0.6cm] (e2) {0};
    \node[annot,above of=e2, node distance=0.6cm] (e3) {1};

    \node[annot,above of=H6-1, node distance=0.6cm] (f1) {1};
    \node[annot,above of=f1, node distance=0.6cm] (f2) {1};
    \node[annot,above of=f2, node distance=0.6cm] (f3) {1};

    \node[annot,above of=H7-1, node distance=0.6cm] (g1) {0};
    \node[annot,above of=g1, node distance=0.6cm] (g2) {1};
    \node[annot,above of=g2, node distance=0.6cm] (g3) {1};

    \node[annot,above of=H8-1, node distance=0.6cm] (h1) {0};
    \node[annot,above of=h1, node distance=0.6cm] (h2) {0};
    \node[annot,above of=h2, node distance=0.6cm] (h3) {1};

\end{tikzpicture}
\caption{The narrow network topology where $d = 1, \;s = 2,\; j = 0$, where only one probability sharing step is utilized per layer, and the Gray codes specifying them are shown.  This network is a universal approximator of Markov kernels in $\D_{1,2}.$  The units shown in blue will copy the input state throughout the network.  The first hidden layer of the network will either be $``100"$ or $``101"$ depending on the input.  If it is the former, only the first subsection ${\bf h}^{[1,4]}$ will perform probability mass sharing, resulting in ${\bf h}^4$ distributed as $p^*(\cdot | x = 0)$.  If it is the latter, probability mass sharing will only occur in ${\bf h}^{[5,8]}$. Only the non-zero weights are shown.  This network only requires 6 trainable parameters which are all biases, indicated by the thick-outlined units.} 
\label{fig:example1}
\end{figure}

\subsection{Error Analysis for Finite Weights and Biases}
\label{section:errr}

The proof construction demonstrates that we may conduct sequential probability sharing steps according to Gray codes which specify a unique path from every ${\bf x}$ to every ${\bf y}$. 
Given an input ${\bf x}$, the path generating ${\bf y}$ will occur with probability $p^*({\bf y} | {\bf x})$ as the size of the parameters becomes infinitely large. 
If the magnitude of the parameters is bounded by $\a = 2m(\s^{-1}(1-\e))$ and we consider a target kernel $P \in \D^\e_{d,s} := \{P \in \D_{d,s} \colon \e \leq P_{ij} \leq 1-\e \text{ for all } i,j\}$, we may utilize Theorem~\ref{theorem:sharing1} to compute bounds on the probability of the path that is intended to generate ${\bf y}$. 
Any other paths from {\bf x} to {\bf y} will have a low probability. The details are as follows. 

\begin{proof}[Proof of Theorem~\ref{thrm:errr}]
Fix an $\e \in (0,\frac{1}{2^s})$ and $p^\ast \in \D_{d,s}^\e$. 
Suppose that a network from Theorem~\ref{theorem:deep2} has $L$ hidden layers of width $m$.  Without loss of generality, assume that ${\bf y} = {\bf g}^r$ for some $r \in \{1,\dots,2^s\}$ where the sequence $\{{\bf g}^l\}$ is the Gray code defining the sharing steps.
Recall that $p({\bf y}|{\bf x})$ may be written as
\begin{align}
p({\bf y} | {\bf x}) &= \sum_{{\bf h}^1} \cdots \sum_{{\bf h}^L}\; p({\bf y} | {\bf h}^L)\cdots p({\bf h}^1 | {\bf x})\\
&= \sum_{{\bf h}} \; p({\bf y} | {\bf h}^L)\cdots p({\bf h}^1 | {\bf x}).
\end{align}
Note that most terms in this sum are $\sO(\e)$ or smaller when using the proof construction for Theorem~\ref{theorem:deep2}.  The one term that is larger than the rest is the term where the hidden layers activate as the sequence ${\bf h}^1 = {\bf g}^1,\; \dots, \;{\bf h}^r = {\bf g}^r, \; {\bf h}^{r+1} = {\bf g}^r, \; \dots, \; {\bf h}^L = {\bf g}^r$.  In particular, if the parameters in the network were infinitely large, this sequence of hidden layer activations would occur with exactly probability $p^*({\bf y}|{\bf x})$ by construction.  Denote this sequence by $T$ and let $p({\bf y},T | {\bf x})$ denote the probability of observing this sequence, $p({\bf y},T | {\bf x}) := p({\bf y} | {\bf h}^L = {\bf g}^r)p({\bf h}^{L} = {\bf g}^r | {\bf h}^{L-1} = {\bf g}^r)\cdots p({\bf h}^r = {\bf g}^r | {\bf h}^{r-1} = {\bf g}^{r-1})\cdots p({\bf h}^1 = {\bf g}^1 | {\bf x})$. 

When the magnitude of the weights is bounded by $2m\s^{-1}(1-\e)$, Theorem~\ref{theorem:sharing1} provides the error terms for each $p({\bf h}^l = {\bf g}^l | {\bf h}^{l-1} = {\bf g}^{l-1})$. Specifically, we have that
\begin{align*}
p({\bf h}^1 = {\bf g}^1 | {\bf x}) &= (1-\e)^m\\
p({\bf h}^2 = {\bf g}^2 | {\bf h}^1 = {\bf g}^1) &= \rho^{[1]}(1-\e)^{m-1}\\
\vdots\hspace{1cm} & \hspace{1cm} \vdots\\
p({\bf h}^r = {\bf g}^r | {\bf h}^{r-1} = {\bf g}^{r-1}) &= \rho^{[r-1]}(1-\e)^{m-1}\\
p({\bf h}^{r+1} = {\bf g}^r | {\bf h}^r = {\bf g}^r) &= (1-\rho^{[r]})(1-\e)^{m-1}\\
p({\bf h}^{r+2} = {\bf g}^r | {\bf h}^{r+1} = {\bf g}^r) &= (1-\e)^{m}\\
\vdots\hspace{1cm} & \hspace{1cm} \vdots\\
p({\bf h}^{L} = {\bf g}^r | {\bf h}^{L-1} = {\bf g}^r) &= (1-\e)^{m}\\
p({\bf y} | {\bf h}^L = {\bf g}^r) &= (1-\e)^{s}
\end{align*}
where $\rho^{[l]}$ are the transfer probabilities between layers $l$ and $l+1$ discussed in Theorem~\ref{theorem:sharing1}.  We point out that for the output of the network to be ${\bf y} = {\bf g}^r$, the complementary sharing probability must occur at layer $l = r$, i.e., $(1-\rho^{[r]})$.  Additionally, we point out that $p^*({\bf y}|{\bf x}) = \rho^{[1]}\rho^{[2]}\cdots\rho^{[r-1]}(1-\rho^{[r]})$.  With this, the bound for Theorem~\ref{thrm:errr} may be shown as follows:
\begin{align*}
|p({\bf y} | {\bf x}) - p^*({\bf y} | {\bf x})| 
&= \bigg|\sum_{{\bf h}^1} \cdots \sum_{{\bf h}^L} \; p({\bf y} | {\bf h}^L)\cdots p({\bf h}^1 | {\bf x}) - p^*({\bf y} | {\bf x})\bigg|\\
&\leq |p(T,{\bf y} | {\bf x}) - p^*({\bf y} | {\bf x})| 
+ \bigg|\sum_{({\bf h}^1,\ldots,{\bf h}^L) \neq T} \; p({\bf y} | {\bf h}^L)\cdots p({\bf h}^1 | {\bf x})\bigg|\\
&< |p(T,{\bf y} | {\bf x}) - p^*({\bf y} | {\bf x})| + \e\\ 
&= |\r^{[1]}\cdots \r^{[r-1]}(1-\r^{[r]})(1-\e)^{mL-r+s} - p^*({\bf y} | {\bf x})| + \e\\
&= |p^*({\bf y} | {\bf x})(1-\e)^{mL-r+s} - p^*({\bf y} | {\bf x})| + \e\\
&= p^*({\bf y} | {\bf x})|1 - (1-\e)^{mL-r+s}| + \e\\
&< 1 - (1-\e)^{mL-r+s} + \e.
\end{align*}
In the third line, the second term is upper bounded by $\e$ because each term in the sum has at least one factor of $\e$, and the sum itself can not be larger than $1$.  
Since $mL-r + s \leq N$, the total number of units in the network excluding the input units, for any $r \in 2^s$, we can uniformly bound the difference in each probability using $1 - (1-\e)^{N}$. 

It remains to show that if $p^\ast\in\Delta_{d,s}^\e$, then for each ${\bf x}$ the factorization $p^*({\bf g}^r|{\bf x}) = \rho^{[1]}\cdots\rho^{[r-1]}(1-\rho^{[r]})$ has factors in $[\e,1-\e]$ for each ${\bf g}^r$. 
Since $p^*({\bf g}^1|{\bf x})=(1-\rho^{[1]})\geq\e$, we have that $\rho^{[1]}\leq 1-\e$. 
Similarly, since $p^*({\bf g}^2|{\bf x})= \rho^{[1]}(1-\rho^{[2]}) \geq\e$ and $(1-\rho^{[2]})\leq 1$, we have that $\rho^{[1]}\geq\e$. The same argument applies recursively for all $r$. 

Finally, for an arbitrary target kernel $p^\ast\in\Delta_{d,s}$ one finds an approximation $p^{\ast,\e}\in\Delta^\e_{d,s}$ with 
$|p^\ast({\bf y| {\bf x}}) - p^{\ast,\e}({\bf y| {\bf x}})|\leq \e$ 
and 
$|p({\bf y| {\bf x}}) - p^\ast({\bf y| {\bf x}})|\leq |p({\bf y| {\bf x}}) - p^{\ast,\e}({\bf y| {\bf x}})| + \e$. 
\end{proof}

\subsection{Discussion of the Proofs for Deep Networks}

Since universal approximation was shown for the shallow case, it follows that any stochastic feedforward network of width at least $2^{d}(2^{s-1}-1)$ with $s \geq 2, \; d \geq 1$ and $L \geq 1$ hidden layers is a universal approximator.  The proof above refines this bound by showing that as a network is made deeper, it may be made proportionally narrower while still being a universal approximator.  This proof applies to network topologies indexed by $j \in \{0,1,\dots,d\}$, where the shallowest depth occurs when $j = d$, where $2^{s-b} + 2(s-b)$ 
hidden layers is sufficient where $b \sim \log_2(s)$. 
This leaves open whether or not there is a spectrum of networks between the $j = d$ case and the shallow case which are also universal approximators.  Beyond the narrow side of the spectrum where $j = 0$, it is also open whether or not narrower universal approximators exist. 
This is due to the proof technique which relies on information about the input being passed from layer to layer. 

We point out that universal approximation of $\Delta_{d,s}$ requires unbounded parameters. Indeed, if we want to express a conditional probability value of $1$ as the product of conditional probabilities expressed by the network, then some of these factors need to have entries $1$. On the other hand it is clear that a sigmoid unit only approaches value $1$ as its total input tends to infinity, which requires that the parameters tend to infinity. 
We provided bounds on the approximation errors when the parameters are bounded in magnitude. 
However, it is left open whether universal approximation of kernels in $\Delta_{d,s}^\e$, with entries bounded away from $0$ and $1$, is possible with finite weights. 
This is because our proof technique relies on inducing nearly-deterministic behavior in many of the computational units by sending the weights toward infinity. 
Nonetheless, as shown in our bounds and illustrated in Section~\ref{sec:examples}, most of the approximation quality is already present for moderately sized weights. 

The networks that we discussed here are optimal in the sense that they only utilize $2^d(2^s-1)$ trainable parameters. This follows from the observation that each probability mass sharing step has exactly one parameter that depends on the kernel that is being approximated. 
Further improvements on finding more compact universal approximators in the sense of having less units may be possible. 
It remains of interest to determine the tightness of our theorems in the sense of the number of units. 
Lower bounds for the width and overall number of parameters of the network can be determined by information theoretic and geometric techniques discussed next.

\section{Lower Bounds for Shallow and Deep Networks}
\label{sec:lowerbounds}

\subsection*{Parameter Counting Lower Bounds}

The following theorem establishes a lower bound on the number of parameters needed in a network for universal approximation to be possible. 
It verifies the intuition that the number of trainable parameters of the model needs to be at least as large as the dimension of the set of kernels that we want to approximate arbitrarily well.  This result is needed in order to exclude the possibility of a space-filling curve type of lower dimensional universal approximator. 

The proof is based on finding a smooth parametrization of the closure of the model and then applying Sard's theorem \citep{sard1942}. 
We start with the parametrization. 

\begin{proposition}
\label{proposition:closed-parametrization}
The closure of the set of kernels $F_{d,m_1,\ldots, m_L,s}\subseteq\Delta_{d,s}$ represented by any Bayesian sigmoid belief network can be parametrized in terms of a finite collection of smooth maps with compact parameter space of the same dimension as the usual parameter space. 
\end{proposition}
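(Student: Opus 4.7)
The plan is to explicitly compactify the parameter space coordinate-wise and then handle the boundary by a finite chart decomposition. Let $n$ denote the total number of weight and bias parameters, and let $\phi\colon \mathbb{R}^n \to \Delta_{d,s}$ be the usual smooth parametrization defined through \eqref{eq:module} and \eqref{eq:condiprob}. First I would apply the coordinate-wise diffeomorphism $\theta_i = \tan(\pi u_i / 2)$ to obtain a smooth map $\tilde\phi\colon (-1,1)^n \to \Delta_{d,s}$ whose image equals that of $\phi$, and hence whose closure is $\overline{F_{d,m_1,\ldots,m_L,s}}$. The remaining task is to cover the boundary behavior as $u \to u^\ast \in \partial [-1,1]^n$ by a finite family of smooth extensions on compact pieces of dimension $n$.

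The key structural observation is that $\tilde\phi$ factors through the finite list of affine combinations $z_{i,l,\mathbf{h}^{l-1}} := \mathbf{W}^l_i \mathbf{h}^{l-1} + b^l_i$, one for each unit and each possible state of the preceding layer; the conditional probability $p(\mathbf{y}|\mathbf{x})$ is a polynomial in the values $\sigma(z_{i,l,\mathbf{h}^{l-1}})$. For any $u^\ast \in \partial [-1,1]^n$, let $S = \{i : u_i^\ast = \pm 1\}$ index the coordinates sent to infinity and $\epsilon \in \{-1,+1\}^S$ record their signs. Each $z_{i,l,\mathbf{h}^{l-1}}$ is a specific sum of a subset of the $\theta_i$'s plus a bias, so as $u \to u^\ast$ within the face $F_{S,\epsilon}$ the value either remains bounded (and $\sigma(z) \to \sigma(z_\infty) \in (0,1)$ depending on the remaining free parameters), tends determinately to $\pm\infty$ (and $\sigma(z) \to 0$ or $1$), or is indeterminate because contributions of both signs diverge.

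In the determinate cases the limit $\lim_{u \to u^\ast} \tilde\phi(u)$ is smooth in the remaining free coordinates $u_j$, $j \notin S$, and extends to the closed face, giving a chart $K_{S,\epsilon}$ diffeomorphic to a compact cube of dimension $n - |S|$. To restore the dimension and cover the indeterminate cases simultaneously, I would introduce polar-type coordinates on a neighborhood of $F_{S,\epsilon}$: for $i \in S$ write $\theta_i = t\, r_i\, \epsilon_i + \hat\theta_i$, where $t \in [0,\infty]$ is a radial scale, $(r_i)_{i\in S}$ lies on the standard simplex $\Delta^{|S|-1}$, and $\hat\theta_i \in \mathbb{R}$ (subsequently compactified by the same $\tan$-substitution). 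As $t \to \infty$, each affine combination $z_{i,l,\mathbf{h}^{l-1}}$ has leading coefficient $\sum_{i \in S \cap T_z} r_i \epsilon_i$; where this coefficient is nonzero the limit is determinate, and on its zero locus the finite offsets $\hat\theta_i$ take over and again yield a smooth extension. Since the number of affine combinations is finite, iterating this blow-up along the finitely many vanishing loci terminates, producing a finite family of smooth maps on compact parameter spaces, each of dimension equal to $n$ by the standard count $(n-|S|) + 1 + (|S|-1) + (\text{offsets on vanishing loci}) = n$.

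The main obstacle I anticipate is precisely the dimension bookkeeping in the last step: one must verify that the blow-up coordinates $(t, r, \hat\theta)$ on each indeterminate stratum can be arranged so that the effective parameter space has dimension exactly $n$, not more, and that finitely many such charts suffice to cover $\overline{\operatorname{im}(\tilde\phi)}$. Finiteness of the covering reduces to the combinatorics of subsets $S$, sign patterns $\epsilon$, and the finite list of affine functionals $z_{i,l,\mathbf{h}^{l-1}}$, while dimension preservation follows from the fact that each radial blow-up trades $|S|$ coordinates approaching infinity for one scale parameter plus $|S|-1$ simplex parameters, with the remaining degrees of freedom absorbed into the offsets on the (lower-dimensional) vanishing loci of the leading coefficients.
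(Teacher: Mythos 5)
Your setup is fine (the $\tan$-compactification, the factorization through the finitely many affine functionals $z_{i,l,\mathbf{h}^{l-1}}$, and the observation that $p(\mathbf{y}|\mathbf{x})$ is a polynomial in the $\sigma(z_{i,l,\mathbf{h}^{l-1}})$), but the argument defers exactly the hard part of the proposition to an unproved claim. The set of subsequential limits at a boundary point $u^\ast$ with indeterminate functionals is governed by the \emph{relative rates} at which the diverging coordinates go to infinity; across the zero locus of a leading coefficient $\sum_{i\in S\cap T_z} r_i\epsilon_i$ the limit of $\sigma(z)$ jumps between $0$ and $1$, so a single radial blow-up does not produce a map that extends smoothly (or even continuously) to $t=\infty$ -- one must introduce a further scale comparing the decay of the leading coefficient to $1/t$, and it is precisely the assertion that this iteration terminates after finitely many steps with smooth charts that you state without proof. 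Your dimension count is also inconsistent as written: $(n-|S|)+1+(|S|-1)$ already equals $n$, so adjoining the offsets $\hat\theta_i$, $i\in S$, as additional free coordinates would give charts of dimension exceeding $n$, which would weaken the downstream Sard argument in Theorem~\ref{thm:counting}. You flag both issues yourself as ``the main obstacle,'' but they are not bookkeeping details -- they are the content of the proposition.

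The paper avoids this entirely by two moves you should compare against. First, it reduces to a single unit: the network kernel is a polynomial (matrix products and marginalizations) in the unit kernels, so compact smooth parametrizations of the units compose to one of the network, and no global analysis of the full parameter space is needed. Second, for a single unit it does not compactify by $\tan$ and then chase limits; instead it splits $\mathbb{R}^{m+1}$ into closed orthants and on each orthant substitutes $\omega_j=e^{w_j}$ for the negative coordinates and $\gamma_j=e^{-w_j}$ for the positive ones, turning $\sigma\bigl(\sum_j w_j h_j\bigr)$ into the rational expression $\prod_{j\in S}\omega_j^{h_j}\big/\bigl(\prod_{j\in S}\omega_j^{h_j}+\prod_{j\notin S}\gamma_j^{h_j}\bigr)$ on $[0,1]^{m+1}$, whose denominator degenerates only on low-dimensional faces. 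The sign-adapted exponential change of variables is the key device that makes the map extend to the compactification with no blow-ups and with the parameter dimension manifestly preserved; without it, or without an actual proof of your termination and smoothness claims, your argument has a genuine gap.
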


\begin{proof}[Proof of Proposition~\ref{proposition:closed-parametrization}]
Since compositions of units to create a network corresponds to taking matrix products and marginalization corresponds to adding entries of a matrix, both of which are smooth maps, it suffices to prove the statement for a single unit. 
Consider the set $F_{m,1}$ of kernels in $\Delta_{m,1}$ represented by a unit with $m$ binary inputs. 
The usual parametrization takes $w = (w_0,w_1,\ldots, w_m)\in \mathbb{R}^{m+1}$ to the kernel in $\Delta_{m,1}$ given by 
the $2\times 2^m$ matrix 
\begin{equation*}
[\sigma(\sum_{j=0}^m w_j h_j ), \sigma(-\sum_{j=0}^m w_j h_j)]_{h\in\{1\}\times\{0,1\}^m}, 
\end{equation*}
where any $h=(h_0,h_1,\ldots,h_m)$ has first entry $h_0=1$. 
We split the parameter space $\mathbb{R}^{m+1}$ into the $2^{m+1}$ closed orthants. 
Fix one of the orthants $\mathbb{R}^{m+1}_S$, which is specified by a partition of 
$\{0,1,\ldots,m\}$ into a set $S$ of coordinates that are allowed to be negative, and the complementary set $L$ of coordinates that are allowed to be positive. 
Now consider the bijection $w\in \mathbb{R}^{m+1}_S\to [\omega, \gamma]\in (0,1]^{m+1}$ with $\omega_j = \exp(w_j)$ for each $j\in S$, and $\gamma_j = \exp(-w_j)$ for each $j\not\in S$. 
Then 
\begin{align*}
\sigma(\sum_{j=0}^m w_j h_j) 
=& \frac{\exp(\sum_{j=0}^m w_j h_j)}{\exp(\sum_{j=0}^m w_j h_j) + 1}  \\
=& \frac{\prod_{j=0}^m\exp(w_j h_j)}{\prod_{j=0}^m\exp(w_j h_j) + 1} \\
=& \frac{\prod_{j\in S} \exp(w_j h_j)}{\prod_{j\in S}\exp(w_j h_j) + \prod_{j\not\in S}\exp(-w_j h_j)} \\
=& 
\frac{\prod_{j\in S} \omega_j^{h_j}}{\prod_{j\in S}\omega_j^{h_j} + \prod_{j\not\in S}\gamma_j^{h_j}} . 
\end{align*}
This defines a smooth map $\psi_S \colon [0,1]^{m+1}\to\Delta_{m,1}$ (or a finite family of smooth maps over the relative interiors of the faces of $[0,1]^{m+1}$). 
Since the map $\psi_S$ is continuous and its domain is compact and its co-domain is Hausdorff, the image $\psi_S([0,1]^{m+1})$ is closed. 
In turn, the union over different orthants $\cup_S \psi_S([0,1]^{m+1})$ is a closed set which contains $F_{m,1}$ as a dense subset, so that it is equal to $\overline{F_{m,1}}$. 
\end{proof}

\begin{theorem}\label{thm:counting}
Consider a stochastic feedforward network with $d$ binary inputs and $s$ binary outputs.  
If the network is a universal approximator of Markov kernels in $\Delta_{d,s}$, then necessarily the  number of trainable parameters is at least $2^d(2^s-1)$. 
\end{theorem}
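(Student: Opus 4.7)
The plan is to use Proposition~\ref{proposition:closed-parametrization} to reduce the problem to a dimension count and then invoke a Sard-type argument. Let $P$ denote the total number of trainable parameters of the network; this equals the dimension of the usual parameter space (sums of terms of the form $m_l m_{l-1} + m_l$). The target set $\Delta_{d,s}$ is a polytope of dimension $D := 2^d(2^s-1)$, and therefore has strictly positive Lebesgue measure in its affine hull $\R^D$.

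First, I would invoke Proposition~\ref{proposition:closed-parametrization} to write
\begin{equation*}
\overline{F_{d,m_1,\ldots,m_L,s}} \;=\; \bigcup_{k=1}^{K} \psi_k(C_k),
\end{equation*}
where each $C_k$ is a compact subset of $\R^P$ (the union ranges over the finitely many orthant-pieces, one per unit, combined across layers) and each $\psi_k\colon C_k \to \Delta_{d,s}\subseteq \R^D$ is smooth. Crucially, the parametrization preserves dimensions: the domain of each $\psi_k$ sits inside a space of dimension at most $P$.

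Second, I would argue by contradiction: assume $P < D$. Extend each $\psi_k$ smoothly to an open neighborhood $U_k\subseteq \R^P$ of $C_k$ in $\R^P$, and compose with the inclusion $\R^P \hookrightarrow \R^D$ to obtain smooth maps $\tilde\psi_k\colon U_k \to \R^D$ whose Jacobians have rank at most $P < D$ everywhere. By Sard's theorem (or equivalently, the standard fact that the smooth image of a set of dimension $< D$ has Lebesgue measure zero in $\R^D$), the set $\tilde\psi_k(U_k)$ has $D$-dimensional Lebesgue measure zero, and hence so does $\psi_k(C_k)$.

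Third, I would conclude by taking the finite union: $\bigcup_{k=1}^{K}\psi_k(C_k)$ has Lebesgue measure zero in the affine hull of $\Delta_{d,s}$, which contradicts the universal approximation assumption $\overline{F_{d,m_1,\ldots,m_L,s}} = \Delta_{d,s}$, since the polytope $\Delta_{d,s}$ has positive $D$-dimensional measure. Therefore $P \geq D = 2^d(2^s-1)$, proving the theorem. The one mildly subtle step is the first one, ensuring that the reparametrization from Proposition~\ref{proposition:closed-parametrization} genuinely yields a compact domain of dimension $P$ rather than something larger (which it does, since the bijection between each orthant $\R^{m+1}_S$ and $(0,1]^{m+1}$ preserves dimension, and compositions and marginalizations preserve smoothness without inflating the parameter count); once this is in hand, the Sard/null-set argument is entirely standard.
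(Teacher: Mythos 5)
Your proposal follows essentially the same route as the paper's own proof: both invoke Proposition~\ref{proposition:closed-parametrization} to write the closure of the model as a finite union of images of smooth maps on compact domains of dimension equal to the parameter count, and then apply Sard's theorem (every point is critical when the domain dimension is below $2^d(2^s-1)$) to conclude each image, and hence the finite union, is a null set, contradicting universal approximation. The argument is correct; your extra care about extending to open neighborhoods and checking that the reparametrization does not inflate the dimension is a reasonable elaboration of details the paper leaves implicit.
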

The space of Markov kernels is $\Delta_{d,s} = \D_s \times \cdots \times \D_s$ ($2^d$ times), and has dimension $2^d(2^s-1).$  This theorem states that at least one parameter is needed per degree of freedom of a Markov kernel. 

\begin{proof}[Proof of Theorem~\ref{thm:counting}] 
Consider one of the smooth and closed maps $\psi$ provided in Proposition~\ref{proposition:closed-parametrization} and denote its input space by $\Omega=[0,1]^k$. 
Sard's theorem states that the set of critical values of a smooth map is a null set. 
If the input-space dimension is less than the output-space dimension, then every point is a critical point and the image of the map is a null set. 
Therefore, we conclude that if $\dim(\Omega)=k$ is less than $2^d(2^s-1)$, the set $\psi(\Omega) = \overline{\psi(\Omega)}$ is a null set. Since the closure of the model is a finite union of such sets, it cannot possibly be a universal approximator if the dimension is of the parameter space is less than indicated. 
\end{proof}

\subsection*{Minimum Width}

A universal approximator can not have too narrow layers. 
We can show this by utilizing the data processing inequality. 
Another approach is in terms of the combinatorics of the tuples of factorizing distributions represented by a layer of stochsastic units. 

We start with the approach based on the data processing inequality. 
To be precise, consider the mutual information of two discrete random vectors ${\bf X}$ and ${\bf Y}$, which is defined as 
\begin{equation}
MI({\bf X};{\bf Y}) = 
H({\bf X}) -H({\bf X}|{\bf Y}) 
= 
H({\bf Y}) -H({\bf Y}|{\bf X}),  
\end{equation}
where $H({\bf Y}) = -\sum_{\bf y} p({\bf y}) \log p({\bf y})$ stands for the entropy of the probability distribution of ${\bf Y}$, 
and $H({\bf Y}|{\bf X}) = -\sum_{\bf x} p({\bf x}) \sum_{\bf y} p({\bf y}|{\bf x}) \log p({\bf y} | {\bf x})$ stands for the conditional entropy of ${\bf Y}$ given ${\bf X}$. 
If the state spaces are $\mathcal{X}$ and $\mathcal{Y}$, then the maximum value of the mutual information is  $\min\{\log |\mathcal{X}|, \log|\mathcal{Y}|\}$. This value is attained by any joint distribution for which one of the variables is uniformly distributed and its state is fully determined by the observation of the other variable. 

The data processing inequality states that if a joint distribution satisfies the Markov chain $p({\bf x},{\bf h}, {\bf y}) = p({\bf x}) p({\bf h}|{\bf x}) p({\bf y}|{\bf h})$, then the mutual information behaves monotonically in the sense that $MI({\bf X};{\bf Y}) \leq MI({\bf X};{\bf H})$. 
Note that this inequality is independent of how the conditional distributions are parametrized, and in special cases there might exist stronger inequalities. 
From the generic inequality given above we infer the following. 

\begin{proposition} \label{prop:narrow1}
Consider a sigmoid stochastic feedforward network with $d$ inputs and $s$ outputs.  If the network is a universal approximator of Markov kernels in $\D_{d,s}$, then each hidden layer has at least $\min\{d,s\}$ units. 
\end{proposition}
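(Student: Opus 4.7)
The plan is to apply the data processing inequality to the Markov chain $\mathbf{X} \to \mathbf{H} \to \mathbf{Y}$, where $\mathbf{H}$ denotes the state of an arbitrary single hidden layer of width $m$ in the network. The key generic bound is that, regardless of how the conditional distributions are parametrized,
\begin{equation*}
MI(\mathbf{X};\mathbf{H}) \leq \min\{\log_2|\mathcal{X}|,\log_2|\mathcal{H}|\} = \min\{d,m\} \text{ bits.}
\end{equation*}
So if one can exhibit a single joint distribution realizable (at least in the closure) by the network for which $MI(\mathbf{X};\mathbf{Y})$ attains the value $\min\{d,s\}$ bits, the claim $m \geq \min\{d,s\}$ follows at once.

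First, I would construct an explicit target $P^\ast \in \Delta_{d,s}$ together with an input marginal $p(\mathbf{x})$ on $\{0,1\}^d$ witnessing $MI(\mathbf{X};\mathbf{Y}) = \min\{d,s\}$. Take $p(\mathbf{x})$ uniform. If $d \leq s$, let $P^\ast(\cdot\mid \mathbf{x}) = \delta_{\iota(\mathbf{x})}$ for any injection $\iota\colon \{0,1\}^d \hookrightarrow \{0,1\}^s$ (for example, appending $s-d$ zeros); then $\mathbf{Y}$ determines and is determined by $\mathbf{X}$, so $MI(\mathbf{X};\mathbf{Y}) = H(\mathbf{X}) = d$. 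If $d > s$, let $P^\ast(\cdot\mid \mathbf{x}) = \delta_{\pi(\mathbf{x})}$ where $\pi$ is projection onto the first $s$ coordinates; then $\mathbf{Y}$ is uniform on $\{0,1\}^s$ and determined by $\mathbf{X}$, giving $MI(\mathbf{X};\mathbf{Y}) = H(\mathbf{Y}) = s$. In both cases the mutual information equals $\min\{d,s\}$.

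Second, by the universal approximation hypothesis, the network represents kernels $P$ arbitrarily close to $P^\ast$ in, say, the entrywise sense. Fixing the input marginal $p(\mathbf{x})$ above, the joint distribution $p(\mathbf{x})P(\mathbf{y}\mid\mathbf{x})$ depends continuously on $P$, and the entropy and conditional entropy are continuous functions of a probability distribution on a finite set; hence $MI(\mathbf{X};\mathbf{Y})$ computed from the network can be made arbitrarily close to $\min\{d,s\}$. Combining with the data processing inequality $MI(\mathbf{X};\mathbf{Y}) \leq MI(\mathbf{X};\mathbf{H})$ and the bound $MI(\mathbf{X};\mathbf{H}) \leq \min\{d,m\}$, we obtain
\begin{equation*}
\min\{d,s\} - \varepsilon \leq \min\{d,m\}\quad\text{for every }\varepsilon > 0,
\end{equation*}
and since $d,s,m$ are integers we conclude $m \geq \min\{d,s\}$.

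The only technical subtlety is the continuity-in-the-approximation step: one must ensure that the same input marginal is used on both sides, which is legitimate since $p(\mathbf{x})$ is an auxiliary choice and not part of the network's parameters, and then the continuity of Shannon entropy on the finite simplex makes the limit argument routine. No step requires anything beyond the generic information-theoretic bound and the elementary observation that a hidden layer of $m$ binary units carries at most $m$ bits of information about the input.
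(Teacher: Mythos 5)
Your proof is correct and follows essentially the same route as the paper: apply the data processing inequality to the Markov chain $\mathbf{X}\to\mathbf{H}\to\mathbf{Y}$ through a hidden layer of $m$ binary units, and note that universal approximation forces the representable joint distributions to come arbitrarily close to one with mutual information $\min\{d,s\}$ bits, which is impossible if $m<\min\{d,s\}$. Your version is in fact more explicit than the paper's, which only sketches the existence of an excluded open set of joint distributions, whereas you exhibit a concrete extremal kernel and carry out the continuity argument.
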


 \begin{proof}
 The network is a universal approximator of Markov kernels if and only if the model augmented to include arbitrary probability distributions over the inputs is a universal approximator of joint distributions over inputs and outputs. 
 In view of the data processing inequality, if any of the hidden layers has less than $\min\{d,s\}$ units, then the joint distributions of inputs and outputs represented by the network satisfy non-trivial inequalities of the mutual information, meaning that an open nontrivial set of joint distributions is excluded. 
 \end{proof}

We can further strengthen this result in the case where the number of inputs is smaller than the number of outputs. 

\begin{proposition} \label{prop:narrow2}
Consider stochastic feedfoward networks with $d$ inputs and $s > d$ outputs. 
\begin{itemize}
\item 
    If $d \geq 0$ and $s\geq 2$, the last hidden layer of a universal approximator has at least $s - 1$ units when $s$ is even, and at least $s$ units when $s$ is odd. 
\item 
    If $d \geq 1$, the last hidden layer of a universal approximator has at least $s$ units. 
\end{itemize}
\end{proposition}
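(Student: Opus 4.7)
\emph{Proof plan.} The plan is to factor the network's representation through the last hidden layer of width $m$ and write
\begin{equation*}
    p({\bf y}\,|\,{\bf x}) \;=\; \sum_{{\bf h}\in\{0,1\}^m} \mu({\bf h}\,|\,{\bf x})\, q_{\bf h}({\bf y}),
\end{equation*}
where $\mu(\cdot|{\bf x})\in\D_m$ is the marginal of the last hidden layer given input ${\bf x}$ (induced by all preceding layers) and the tuple $(q_{\bf h})_{{\bf h}\in\{0,1\}^m}\in F_{m,s}$ collects the product distributions $q_{\bf h}\in\sE_s$ produced by the output layer. I would treat the preceding layers as maximally flexible, since any restriction on them can only make universal approximation harder; hence the $\mu(\cdot|{\bf x})$ may be taken to range arbitrarily over $\D_m$. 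The proposition then reduces to asking which tuples of rows $(p_{\bf x})_{{\bf x}\in\{0,1\}^d}\in\D_s^{2^d}$ are simultaneously expressible as mixtures of a single admissible family $(q_1,\dots,q_{2^m})$.

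For the first bullet, I would average the outputs against an arbitrary distribution on the inputs. The resulting output marginal has the form $\sum_{\bf h}\tilde\mu({\bf h})\,q_{\bf h}$ with $\tilde\mu\in\D_m$ free, so the set of achievable marginals equals
\begin{equation*}
    \sA_m \;=\; \bigcup_{(q_1,\dots,q_{2^m})\in \overline{F_{m,s}}} \operatorname{conv}(q_1,\dots,q_{2^m}).
\end{equation*}
This is precisely the set of visible marginals of an RBM with $m$ hidden and $s$ visible units, and universal approximation $\sA_m=\D_s$ therefore entails the known RBM lower bounds already cited in the paper: $m\geq s-1$ when $s$ is even \citep{montufar2014deep} and $m\geq s$ when $s$ is odd (Proposition~3.19 of \citealp{doi:10.1137/140957081}).

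For the second bullet, the step up to $d\geq 1$ lets me select two distinct inputs ${\bf x}\neq{\bf x}'$, and universal approximation of $\D_{d,s}$ demands that for every pair $(p_0^\ast,p_1^\ast)\in\D_s\times\D_s$ there exists a \emph{common} tuple $(q_{\bf h})\in\overline{F_{m,s}}$ whose convex hull contains both rows. For $s$ odd the first bullet already yields $m\geq s$, so only the case of $s$ even with $m=s-1$ remains to rule out. I would do so by exhibiting an explicit geometric obstruction: in the base case $s=2,\,d=1$, the closure $\overline{\sE_2}$ is the Segre quadric $p_{00}p_{11}=p_{01}p_{10}$ in $\Delta_2$, and one checks by a direct computation that for a strictly positive product distribution $p_0^\ast\in\sE_2^+$ and a suitably correlated non-product target $p_1^\ast$, the real affine line through $p_0^\ast$ and $p_1^\ast$ meets $\overline{\sE_2}$ in only one point. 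No two product distributions $q_0,q_1$ can then be placed so that both rows lie in $\operatorname{conv}(q_0,q_1)$, so $m=1$ cannot be universal. For general even $s$ I would lift this obstruction from segments to $(2^{s-1}-1)$-dimensional convex hulls of product distributions.

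\emph{Main obstacle.} The hardest step is precisely this lift, because in the boundary cases $s\in\{2,4\}$ a direct parameter count of the full model against $\D_{d,s}$ is exactly balanced and so cannot rule out universal approximation by counting alone. The route I would take is to combine Sard's theorem applied to the compact smooth parametrization of Proposition~\ref{proposition:closed-parametrization} with a rank analysis of the evaluation map: at generic output-layer parameters, the differential of $(q_{\bf h},\mu_0,\mu_1)\mapsto(p_0,p_1)$ must factor through the tangent space to the variety swept out by $(2^{s-1}-1)$-dimensional simplices whose vertices lie on $\overline{\sE_s}$, and I would show this tangent space has strictly positive codimension in $T\D_{d,s}$ whenever $m=s-1$. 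Together with the base-case obstruction this forces the closure of the network's image to be a proper null subset of $\D_{d,s}$, completing the lower bound $m\geq s$.
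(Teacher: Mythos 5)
Your decomposition of the output distribution as a mixture $\sum_{\bf h}\mu({\bf h}\,|\,{\bf x})\,q_{\bf h}$ of the product distributions $q_{\bf h}\in\sE_s$ indexed by the states of the last hidden layer is the right starting point, and your first bullet reaches the correct conclusion, but with an imprecision worth flagging: the set $\sA_m$ you define (arbitrary mixtures of the products with parameters on the affine grid ${\bf W}{\bf h}+{\bf b}$) \emph{contains} the RBM model and is not known to equal it, so ``RBMs need $m\geq s-1$ (resp.\ $s$)'' does not by itself imply the same bound for the larger set $\sA_m$. The implication does go through, but only because the cited lower-bound proofs use nothing about the RBM's mixture weights; they rest on two facts about the components alone: (i) a product distribution whose support lies inside a single parity class of $\{0,1\}^s$ is a point measure, so a target with full support on the even-parity strings forces all $2^{s-1}$ even point measures to occur among the $2^m$ rows; and (ii) the affine grid $\{{\bf W}{\bf h}+{\bf b}\}$ can meet all even-parity orthants of $\R^s$ only if $m\geq s$ when $s$ is odd. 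You should invoke these facts directly rather than lean on an identification of $\sA_m$ with the RBM.

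The second bullet is where the proposal has a genuine gap. For even $s$ you reduce to ruling out $m=s-1$, give a tangency obstruction for $s=2$, and then propose to ``lift'' it to general even $s$ via Sard's theorem and a rank analysis of the evaluation map. That lift is precisely the content that would need to be proved, and the sketch does not deliver it: as you note yourself, the dimension count is balanced in the boundary cases, so Sard's theorem cannot conclude unless you additionally show the differential is singular everywhere (equivalently, that the tangent space to the swept-out variety has strictly positive codimension), and no argument for that is given. The machinery is also unnecessary: fact (i) above applies to each parity class separately, so if for one input you demand a target supported on all even-parity strings and for a second input (available since $d\geq 1$) a target supported on all odd-parity strings, the single shared tuple $(q_{\bf h})$ must contain all $2^{s-1}$ even and all $2^{s-1}$ odd point measures among its $2^m$ rows, whence $2^m\geq 2^s$ and $m\geq s$ for every $s\geq 2$. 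This is the paper's two-line argument; it is robust under taking closures (the required point measures must arise as limits of rows) and replaces both your base case and the unproven lift.
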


\begin{proof}
Note that the output distribution is a mixture of the conditional distributions of the output layer given all possible values of the second last layer, all of which are factorizing distributions. 
Further, note that if a factorizing distribution has support strictly contained in the set of even (or odd) parity strings, then it must be a point measure. 

Consider $d\geq0$ and, as a desired output distribution, the uniform distribution on strings of even parity. 
In order for the the network to represent this, the last kernel needs to contain the $2^{s-1}$ point measures on even parity strings as rows. 
In turn, the last hidden layer must have at least $s-1$ units. 
The lower bound $s$ results from the fact that the rows of the kernels are not arbitrary product distributions. 
Indeed, for a module with $m$ input units, the $2^m$ rows of the kernel are factorizing distributions with shared parameters of the form ${\bf W}{\bf h} + {\bf b}$, ${\bf h}\in\{0,1\}^m$. 
The parameter vector of a point measure that is concentrated on a given vector ${\bf y}\in\{0,1\}^s$ is a vector on the $y$-th orthant of $\mathbb{R}^s$. 
The set of parameters ${\bf W}{\bf h} + {\bf b}$, ${\bf h}\in\{0,1\}^m$ intersects all even parity orthants of $\mathbb{R}^s$ only if $m \geq s$ \citep[see][Proposition~3.19]{doi:10.1137/140957081}. 

Now consider $d \geq 1$ and, as a desired pair of output distributions for two different inputs, a distribution supported strictly on the even parity strings and a distribution supported on the odd parity strings. This requires that the last kernel has all $2^s$ point measures as rows, and hence at least $s$ inputs. 
\end{proof}
An example of Proposition~\ref{prop:narrow2} is shown in Figure~\ref{fig:my_label1}. 

\begin{figure}
    \centering
    \begin{tikzpicture}
    \node[circle, fill = gray!20, inner sep=0pt, minimum size=.5cm, draw=black, label=center:${}$] (X1) at (1, 2) {}; 
    \foreach \x in {0,1,2,3,4} { 
    \node[circle, fill = gray!20, inner sep=0pt, minimum size=.5cm, draw=black, label=center:${}$] (X2\x) at (\x-1, 1) {};
	\draw[->,shorten >= 2pt, shorten <= 2pt] (X1) -- (X2\x);  
    }
    \foreach \y in {0,1,2} { 
    \node[circle, fill = gray!20, inner sep=0pt, minimum size=.5cm, draw=black, label=center:${}$] (X3\y) at (\y, 0) {};
        \foreach \x in {0,1,2,3,4} { 
	    \draw[->,shorten >= 1pt, shorten <= 2pt] (X2\x) -- (X3\y);   }
    }
    \foreach \y in {0,1,2,3} { 
    \node[circle, fill = gray!20, inner sep=0pt, minimum size=.5cm, draw=black, label=center:${}$] (X4\y) at (\y-0.5, -1) {};
        \foreach \x in {0,1,2} { 
	    \draw[->,shorten >= 2pt, shorten <= 2pt] (X3\x) -- (X4\y);   }
    }
    \end{tikzpicture}
    \caption{By Proposition~\ref{prop:narrow2}, this network is not a universal approximator of $\D_{1,4}$, although it has more than $\dim(\Delta_{1,4})=32$ parameters.} 
    \label{fig:my_label1}
\end{figure}
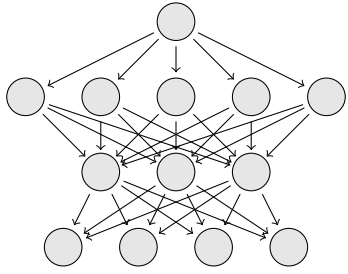

\section{A Numerical Example}
\label{sec:examples}

The previous theory is constructive in the sense that given a specific Markov kernel $P \in \D_{d,s}$, then for any choice of the shape coefficient $j$, the parameters are explicitly determined. 
To validate this for the $d = 2$, $s = 2$ case, $500$ Markov kernels in $\D_{2,2}$ were generated uniformly at random by sampling from the Dirichlet distribution with parameter $1$. 
For each kernel, a network consisting of the parameters specified by the theory above was instantiated.
We consider the architecture with $j=d=2$. 
As the magnitude of the non-zero parameters grows, the network will converge to the target kernel according to Theorem~\ref{thrm:errr}. 

Let $P^*$ be the target kernel and $P$ the approximation represented by the network (for the relatively small number of variables, it could be calculated exactly, but here we calculated it via 25,000 samples of the output for each input). 
The error is $E = \max_{i,j} |{P}_{ij} - P^*_{ij}|$. 
In the table below, we report the average error over 500 target kernels, $E_{\text{avg}} = \frac{1}{500} \sum_{k=1}^{500} E_k$, and the maximum error $E_{\text{max}} = \max_{k} E_k$, for the various values of the coefficient $\e$ from our theorem, along with the corresponding parameter magnitude bound $\alpha$, and the error upper bound of Theorem~\ref{thrm:errr} $|p({\bf y} | {\bf x}) - p^*({\bf y} | {\bf x})| \leq 1 - (1-\e)^N + 2\e$. 
\begin{center}
\begin{tabular}{| c | c | c | c | c |} 
            \hline
            $10\e$ & $\alpha$ & Error Bound of Thm.~\ref{thrm:errr} & $E_{\text{avg}}$ & $E_{\text{max}}$ \\
            \hline
            $2^{-2}$ & 14.65 & 0.4160 & 0.0522 & 0.1642\\ \hline
            $2^{-3}$ & 17.47 & 0.2276 & 0.0248 & 0.1004\\ \hline
            $2^{-4}$ & 20.28 & 0.1192 & 0.0134 & 0.0541\\ \hline
            $2^{-5}$ & 23.06 & 0.0610 & 0.0077 & 0.0425\\ \hline
            $2^{-6}$ & 25.84 & 0.0308 & 0.0060 & 0.0306\\ \hline
\end{tabular}
\end{center}

\begin{center}
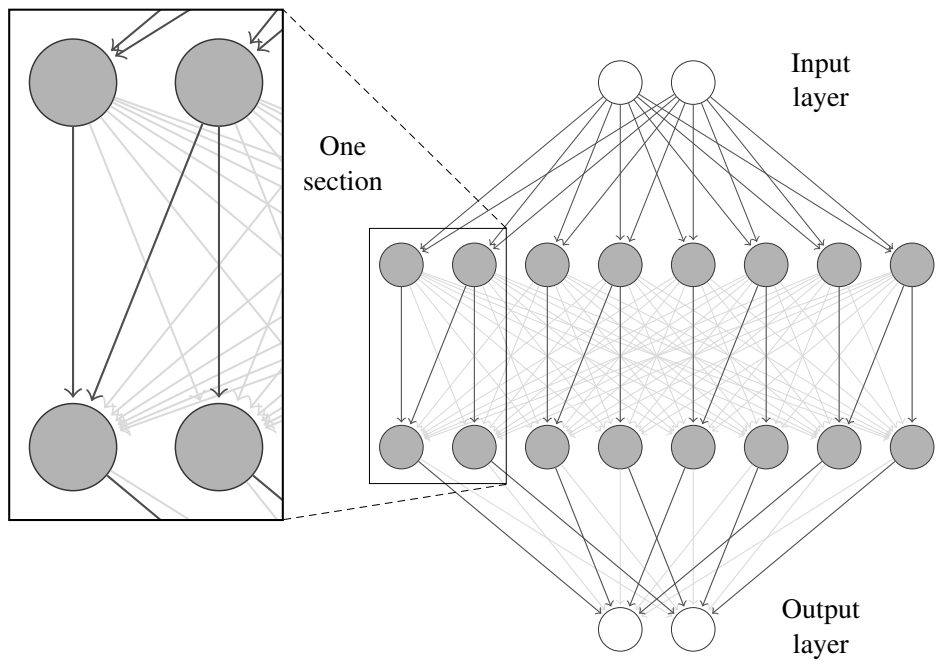
\begin{figure} 
\scalebox{0.97}{
\hspace{0.9cm}
\begin{tikzpicture}[shorten >=1pt,->,draw=black!50, node distance=\layersep, spy using outlines]
   
    \tikzstyle{every pin edge}=[<-,shorten <=1pt]
    \tikzstyle{neuron}=[circle,fill=black!25,minimum size=17pt,inner sep=0pt]
    \tikzstyle{input neuron}=[neuron, draw=black!80, line width=0.1mm, fill = black!0!white];
    \tikzstyle{output neuron}=[neuron, draw=black!80, line width=0.1mm, fill = black!0!white];
    \tikzstyle{hidden neuron}=[neuron, draw=black!80, line width=0.1mm, fill = black!30!white];
    \tikzstyle{annot} = [text width=4em, text centered]

    \foreach \name / \y in {1,...,2}
        \node[input neuron] (I-\name) at (-\y-3,7.5) {};

    \foreach \name / \y in {1,...,8}
        \path node[hidden neuron] (H-\name) at (-\y,5 cm) {};

    \foreach \name / \y in {1,...,8}
        \path node[hidden neuron] (H1-\name) at (-\y,2.5 cm) {};

    \foreach \name / \y in {1,...,2}
        \node[output neuron] (O-\name) at (-\y-3,0 cm) {};

    \foreach \source in {1,...,2}
        \foreach \dest in {1,...,8}
            \path (I-\source) edge[black!70] (H-\dest);

    \foreach \source in {1,...,8}
        \foreach \dest in {1,...,8}
            \path (H-\source) edge[black!15] (H1-\dest);

    \foreach \source in {1,...,8}
        \foreach \dest in {1,...,2}
            \path (H1-\source) edge[black!15] (O-\dest);

    \foreach \source in {1,...,8}
        \path (H-\source) edge[black!70] (H1-\source);

    \path (H-7) edge[black!70] (H1-8);
    \path (H-5) edge[black!70] (H1-6);
    \path (H-3) edge[black!70] (H1-4);
    \path (H-1) edge[black!70] (H1-2);

    \foreach \source in {8,6,4,2}
        \path (H1-\source) edge[black!70] (O-2);
    \foreach \source in {7,5,3,1}
        \path (H1-\source) edge[black!70] (O-1);

    \spy [draw, height=7cm, width=3.75cm, magnification=2] on (-7.5,3.75) in node at ($(H-7)!.5!(H-8) + (-4,0) $);

    \draw[-,densely dashed] (-6.56,5.5) -- ($(H-7)!.5!(H-8) + (-2.14,3.52) $);
    \draw[-,densely dashed] (-6.56,1.99) -- ($(H-7)!.5!(H-8) + (-2.15,-3.5) $);
    
    \node[annot] at ($(H-7)!.5!(H-8) + (-1.3,1.4) $) {One section};
    \node[annot,right of=I-1, node distance=1.75cm] {Input layer};
    \node[annot,right of=O-1, node distance=1.75cm] {Output layer};
\end{tikzpicture}}
\caption{The network architecture used in the numerical example of Section~\ref{sec:examples} to demonstrate our results.
The non-zero connections are shown in black. 
Notice that even within sections the connectivity does not need to be full. 
} 
\label{fig:examp1}
\end{figure}
\end{center}

\newpage
We now describe the explicit construction. 
Write a given kernel as 
\begin{align*} P &= \bpm 
p(0,0 | 0,0) & p(0,1 | 0,0) & p(1,0 | 0,0) & p(1,1 | 0,0) \\
p(0,0 | 0,1) & p(0,1 | 0,1) & p(1,0 | 0,1) & p(1,1 | 0,1) \\
p(0,0 | 1,0) & p(0,1 | 1,0) & p(1,0 | 1,0) & p(1,1 | 1,0) \\
p(0,0 | 1,1) & p(0,1 | 1,1) & p(1,0 | 1,1) & p(1,1 | 1,1) \epm \\
&= \bpm 
p_{11} & p_{12} & p_{13} & p_{14} \\ 
p_{21} & p_{22} & p_{23} & p_{24} \\ 
p_{31} & p_{32} & p_{33} & p_{34} \\ 
p_{41} & p_{42} & p_{43} & p_{44} \epm . 
\end{align*}
Given a fixed kernel of this form, the following choices of network parameters will make the network exactly approximate the kernel as we allow the maximum magnitude of the parameters $\a \to \infty$. 

We consider the widest deep architecture with $j=d$. 
Since we have $d=2$ inputs and $s = 2 = 2^{b-1}+b$ outputs, with $b=1$, 
the network has $2^{d-j}(\frac{2^s}{2(s-b)} +2(s-b)-1)-1 = 2$ 
hidden layers of width $2^j (s+d-j) = 8$. 
Here, since $j=d$, we can save one layer in comparison to the general construction.
This network consists of $2^j=4$ independent sections, one for each possible input. 
See Figure~\ref{fig:examp1} for an illustration of the overall network topology. 

The sharing schedule of the single subsection of each section will follow the $2^b=2(d-b)=2$ partial Gray codes 
\begin{equation}
 \sS_{1} = \bpm \sS_{1,1}\\\sS_{1,2}\epm = 
 \bpm 
 1 & 0\\ 
 0 & 0
 \epm 
 \quad 
 \text{and}
 \quad 
 \sS_{2} = 
 \bpm \sS_{2,1}\\\sS_{2,2}\epm = 
\bpm 
 1 & 1\\ 
 0 & 1
 \epm.  
\end{equation}
For a given $0<\e<1/4$ we set $\gamma = \sigma^{-1}(1-\e)$. 
The parameters will have magnitude at most $\alpha = 2m\sigma^{-1}(1-\e)$, where $m =\max\{j,s+(d-j)\}=2$.  
The weights and biases of the first hidden layer are  
\begin{equation} \label{eq:125}
{\bf W}^1 = 
\bpm 
-2\g & -2\g \\ 
-\g & -\g \\ 
\hline
-2\g & 2\g \\ 
-2\g & 2\g \\ 
\hline
2\g & -2\g \\ 
2\g & -2\g \\ 
\hline
2\g & 2\g \\ 
\g & \g \epm, 
\;\;\; 
{\bf b}^1 = \bpm 
\g \\ \s^{-1}(p_{12} + p_{14}) \\ \hline
-\g \\ \s^{-1}(p_{22} + p_{24}) -2\g\\\hline 
-\g \\ \s^{-1}(p_{32}+p_{34})-2\g \\\hline
-3\g\\ \s^{-1}(p_{41} + p_{44}) - 2\g \epm . 
\end{equation}
This will map ${\bf x}=(0,0)$ to ${\bf a}_{(1)}^1=(1,0)$ with probability $(1-\e)(1-(p_{12}+p_{14}))=(1-\e)(p_{11}+p_{13})$ and to ${\bf a}_{(1)}^1=(1,1)$ with probability $(1-\e)(p_{12}+p_{14})$. The other transitions are similar.
When $P \in \D^\e_{2,2}$, one necessarily has that $\e \leq 1/4$, otherwise $\D^\e_{2,2}$ is empty, $p_{ij} \leq 1-3\e$, and $p_{ij} + p_{ik} \leq 1-2\e$ for any $i,j,k$.  This in turn means the bias parameters in (\ref{eq:125}) are bounded by $3\g$.

The second hidden layer has weights 
\begin{equation} {\bf W}^2 = \bpm 
\g & \o_1 & 0 & 0 & 0 & 0 & 0 & 0\\
0 & 2\g & 0 & 0 & 0 & 0 & 0 & 0\\\hline
0 & 0  & \g & \o_2 & 0 & 0 & 0 & 0\\
0 & 0 & 0 & 2\g & 0 & 0 & 0 & 0\\\hline
0 & 0 & 0 & 0 & \g & \o_3 & 0 & 0\\
0 & 0 & 0 & 0 & 0 & 2\g & 0 & 0\\\hline
0 & 0 & 0 & 0 & 0 & 0 & \g& \o_4\\
0 & 0 & 0 & 0 & 0 & 0 & 0 & 2\g
\epm , 
\end{equation}
where 
\begin{equation} \label{eqn:127}
\o_i = \s^{-1}\left(\frac{p_{i4}}{p_{i2} + p_{i4}}\right) - \s^{-1}\left(\frac{p_{i3}}{p_{i1}+p_{i3}}\right) , 
\end{equation}
for $i = 1,2,3,4$. 
The second hidden layer biases are chosen as 
\begin{equation} \label{eqn:128}
{\bf b}^2 = \bpm 
\s^{-1}(p_{13}/(p_{11}+p_{13})) -\g\\ -\g \\\hline 
\s^{-1}(p_{23}/(p_{21} + p_{23})) -\g\\ -\g \\\hline 
\s^{-1}(p_{33}/(p_{33}+p_{31}))-\g\\ -\g \\\hline 
\s^{-1}(p_{43}/(p_{43}+p_{41})) -\g\\ -\g \epm . 
\end{equation}

This will map ${\bf a}_{(1)}^1 = (1,0)$ to ${\bf a}_{(1)}^2 = (1,0)$ with probability $p_{13}/(p_{11}+p_{13})(1-\e)$ and to ${\bf a}_{(1)}^2 = (0,0)$ with probability $(1-p_{13}/(p_{11}+p_{13}))(1-\e)$. 
In particular, for input ${\bf x}=(0,0)$ we have 
\begin{align*}
\Pr\big({\bf a}_{(1)}^2=(0,0),{\bf a}_{(1)}^1=(1,0) | {\bf x}=(0,0)\big) 
&= \big(1-\frac{p_{13}}{p_{11}+p_{13}}\big)(1-\e) \cdot (1-\e)(p_{11}+p_{13}) \\
&= p_{11}(1-\e)^2. 
\end{align*}
Note that if $\e\leq p_{ij}\leq 1-\e$ for all $i,j$, then the factors $p_{13}/(p_{11}+p_{13})$ and $(p_{11}+p_{13})$ are also between $\e$ and $1-\e$. 
The other transitions are similar. 

The weights and biases of the output layer are 
\begin{equation}
{\bf W}^3 = \bpm 2\g & 0 & 2\g & 0 & 2\g & 0 & 2\g & 0 \\ 0 & 2\g & 0 & 2\g & 0 & 2\g & 0 & 2\g\epm, \;\;\; {\bf b}^3 = \bpm -\g \\ -\g \epm . 
\end{equation}
This will map, for example, ${\bf h}^2=({\bf a}_{(1)}^2,{\bf a}_{(2)}^2,{\bf a}_{(3)}^2,{\bf a}_{(4)}^2)=(0,0,0,\ldots,0)$ to ${\bf y} = (0,0)$ with probability $(1-\sigma(-\gamma))^2 = (1-\e)^2$. 
Furthermore, to see that the parameters in \eqref{eqn:127} and \eqref{eqn:128} are bounded by $2\g$, it needs to be shown that $\frac{p_{ij}}{p_{ij} + p_{ik}} \leq 1-\e$, or that $1 + \frac{p_{ik}}{p_{ij}} \geq (1-\e)^{-1}$.  The smallest possible  $\frac{p_{ik}}{p_{ij}}$ is $\frac{\e}{1-3\e}$, and $\frac{1}{1-\e} \leq 1 + \frac{\e}{1-3\e}$ is easily verified for $\e \in (0,1/4]$.

Note that exactly $2^2(2^2-1) = 12$ parameters depend on the values of the target Markov kernel itself, while the other parameters are fixed depending only on the desired level of accuracy.

\section{Conclusion} 
\label{sec:conclusion}

In this work we made advances towards a more complete picture of the representational power of stochastic feedforward networks. 
We showed that a spectrum of sigmoid stochastic feedforward networks are capable of universal approximation. 
In the obtained results, the shallow architecture requires less hidden units than the deep architectures, while the deep architectures achieve the minimum number of trainable parameters necessary for universal approximation. 
At the extreme of the spectrum discussed is the $j = 0$ case, where a network of width $s+d$ and depth approx $2^{d+s}/2(s-b)$, $b \sim \log_2(s)$ is sufficient for universal approximation. 
At the other end of the deep spectrum is the $j = d$ case, which can be seen as an intermediate between the $j = 0$ case and the shallow universal approximator since its width is exponential in $d$ and its depth is exponential in~$s$. 
Further, we obtained bounds on the approximation errors when the network parameters are restricted in absolute value by some $\alpha>0$. In our construction, the error is then bounded by $1-(1-\e)^N+2\e$, where $\e$ is the error of each unit, which is bounded by $\sigma(-\alpha/2(d+s))$.

\subsection*{Open Problems}

We collect a few open problems that we believe would be worth developing in the future. 

\begin{itemize}

\item Determine the dimension of the set of distributions represented at layer $l$ of a deep stochastic network. In this direction, the dimension of RBMs has been studied \citep{cueto2010geometry,doi:10.1137/16M1077489}. 

\item Determine the equations and inequalities that characterize the set of Markov kernels that are representable by a deep stochastic feedforward network. 
Works in this direction include studies of Bayesian networks \citep{GARCIA2005331}, 
the RBM with three visible and two hidden binary variables \citep{seigal2018mixtures}, 
and naive Bayes models with one hidden binary variable \citep{allman2019maximum}. 

\item Obtain maximum approximation error bounds for a network which is \textit{not} a universal approximator by measures such as the maximum KL-divergence, and evaluate the behavior of different network topologies depending on the number of trainable parameters. 
There are a number of works in this direction, covering hierarchical graphical models \citep{matus2009divergence}, exponential families \citep{rauh2011finding}, RBMs and DBNs \citep{le2008representational,montufar2011expressive,montufar2013maximal,montufar2015geometry}. 
For the RBM with three visible and two hidden units mentioned above, \citet{seigal2018mixtures} obtained the exact value. 

\item Is it possible to obtain more compact families of universal approximators of Markov kernels than the ones that we presented here? 
We constructed universal approximators with the minimal number of trainable weights, but which include a substantial number of non-zero fixed weights. Is it possible to construct more compact universal approximators with a smaller number of units and non-zero weights? 
Can we refine the lower bounds for the minimum width and the minimum depth given a maximum width of the hidden layers of a universal approximator? 
This kind of problem has traditionally been more difficult than refining upper bounds. 
A few examples are listed by \citet{montufar2017hierarchical}. 

\item Our construction uses sparsely connected networks to achieve the minimum possible number of parameters. How does restricting the connectivity of a network affect the distributions it can represent? Describe whether and which advantages are provided by sparsely connected networks over fully connected networks. 

\item Generalize the analysis to conditional distributions other than sigmoid, and non-binary variables. Works in this direction inlcude the treatment of RBMs with nonbinary units \citep{JMLR:v16:montufar15a} and that of DBNs with non-binary units \citep{montufar2014universal}. 

\item Another interesting direction are the theoretical advantages of stochastic networks in relation to deterministic networks, and the development of more effective techniques for training stochastic networks. In this direction, \citet{NIPS2013_5026} discuss multi-modality and combinations of deterministic and stochastic units. 

\end{itemize}

\bigskip 
\noindent\textbf{Acknowledgement}
\small 
We thank Nihat Ay for insightful discussions. This project has received funding from the European Research Council (ERC) under the European Union's Horizon 2020 research and innovation programme (grant agreement n\textsuperscript{o} 757983).

\markboth{Stochastic Feedforward Neural Networks:\\Universal Approximation}{\it{References}}
\bibliography{References}

\begin{thebibliography}{69}
\providecommand{\natexlab}[1]{#1}
\providecommand{\url}[1]{\texttt{#1}}
\expandafter\ifx\csname urlstyle\endcsname\relax
  \providecommand{\doi}[1]{doi: #1}\else
  \providecommand{\doi}{doi: \begingroup \urlstyle{rm}\Url}\fi

\bibitem[Allman et~al.(2019)Allman, Cervantes, Evans, Ho\c{s}ten, Kubjas,
  Lemke, Rhodes, and Zwiernik]{allman2019maximum}
Elizabeth~S. Allman, Hector~Ba{\~n}os Cervantes, Robin Evans, Serkan
  Ho\c{s}ten, Kaie Kubjas, Daniel Lemke, John~A. Rhodes, and Piotr Zwiernik.
\newblock Maximum likelihood estimation of the latent class model through model
  boundary decomposition.
\newblock \emph{Journal of Algebraic Statistics}, 10\penalty0 (1), 2019.

\bibitem[Bengio(2009)]{Bengio-2009}
Yoshua Bengio.
\newblock Learning deep architectures for {AI}.
\newblock \emph{Foundations and Trends in Machine Learning}, 2\penalty0
  (1):\penalty0 1--127, 2009.
\newblock \doi{10.1561/2200000006}.
\newblock Also published as a book. Now Publishers, 2009.

\bibitem[Bengio and Delalleau(2011)]{10.1007/978-3-642-24412-4_3}
Yoshua Bengio and Olivier Delalleau.
\newblock On the expressive power of deep architectures.
\newblock In Jyrki Kivinen, Csaba Szepesv{\'a}ri, Esko Ukkonen, and Thomas
  Zeugmann, editors, \emph{Algorithmic Learning Theory}, pages 18--36, Berlin,
  Heidelberg, 2011. Springer Berlin Heidelberg.
\newblock ISBN 978-3-642-24412-4.

\bibitem[Bengio et~al.(2007)Bengio, Lamblin, Popovici, and
  Larochelle]{NIPS2006_3048}
Yoshua Bengio, Pascal Lamblin, Dan Popovici, and Hugo Larochelle.
\newblock Greedy layer-wise training of deep networks.
\newblock In B.~Sch\"{o}lkopf, J.~C. Platt, and T.~Hoffman, editors,
  \emph{Advances in Neural Information Processing Systems 19}, pages 153--160.
  MIT Press, 2007.
\newblock URL
  \url{http://papers.nips.cc/paper/3048-greedy-layer-wise-training-of-deep-networks.pdf}.

\bibitem[Bishop(2006)]{Bishop:2006:PRM:1162264}
Christopher~M. Bishop.
\newblock \emph{Pattern Recognition and Machine Learning (Information Science
  and Statistics)}.
\newblock Springer-Verlag, Berlin, Heidelberg, 2006.
\newblock ISBN 0387310738.

\bibitem[B\"olcskei et~al.(2019)B\"olcskei, Grohs, Kutyniok, and
  Petersen]{doi:10.1137/18M118709X}
Helmut B\"olcskei, Philipp Grohs, Gitta Kutyniok, and Philipp Petersen.
\newblock Optimal approximation with sparsely connected deep neural networks.
\newblock \emph{SIAM Journal on Mathematics of Data Science}, 1\penalty0
  (1):\penalty0 8--45, 2019.
\newblock \doi{10.1137/18M118709X}.
\newblock URL \url{https://doi.org/10.1137/18M118709X}.

\bibitem[Brunato and Battiti(2015)]{brunato2015stochastic}
Mauro Brunato and Roberto Battiti.
\newblock Stochastic local search for direct training of threshold networks.
\newblock In \emph{2015 International Joint Conference on Neural Networks
  (IJCNN)}, pages 1--8. IEEE, 2015.

\bibitem[Cueto et~al.(2010)Cueto, Morton, and Sturmfels]{cueto2010geometry}
Mar{\'\i}a~Ang{\'e}lica Cueto, Jason Morton, and Bernd Sturmfels.
\newblock Geometry of the restricted {B}oltzmann machine.
\newblock \emph{Algebraic Methods in Statistics and Probability 516}, pages
  135--153, 2010.

\bibitem[Cybenko(1989)]{Cybenko1989}
George Cybenko.
\newblock Approximation by superpositions of a sigmoidal function.
\newblock \emph{Mathematics of Control, Signals and Systems}, 2\penalty0
  (4):\penalty0 303--314, Dec 1989.
\newblock ISSN 1435-568X.
\newblock \doi{10.1007/BF02551274}.
\newblock URL \url{https://doi.org/10.1007/BF02551274}.

\bibitem[Delalleau and Bengio(2011)]{delalleau2011shallow}
Olivier Delalleau and Yoshua Bengio.
\newblock Shallow vs. deep sum-product networks.
\newblock In \emph{Advances in Neural Information Processing Systems 24}, pages
  666--674, 2011.

\bibitem[Eldan and Shamir(2016)]{pmlr-v49-eldan16}
Ronen Eldan and Ohad Shamir.
\newblock The power of depth for feedforward neural networks.
\newblock In Vitaly Feldman, Alexander Rakhlin, and Ohad Shamir, editors,
  \emph{29th Annual Conference on Learning Theory}, volume~49 of
  \emph{Proceedings of Machine Learning Research}, pages 907--940, Columbia
  University, New York, New York, USA, 23--26 Jun 2016. PMLR.
\newblock URL \url{http://proceedings.mlr.press/v49/eldan16.html}.

\bibitem[Garcia et~al.(2005)Garcia, Stillman, and Sturmfels]{GARCIA2005331}
Luis~David Garcia, Michael Stillman, and Bernd Sturmfels.
\newblock Algebraic geometry of bayesian networks.
\newblock \emph{Journal of Symbolic Computation}, 39\penalty0 (3):\penalty0 331
  -- 355, 2005.
\newblock ISSN 0747-7171.
\newblock \doi{https://doi.org/10.1016/j.jsc.2004.11.007}.
\newblock URL
  \url{http://www.sciencedirect.com/science/article/pii/S0747717105000076}.
\newblock Special issue on the occasion of MEGA 2003.

\bibitem[Goodfellow et~al.(2014)Goodfellow, Pouget-Abadie, Mirza, Xu,
  Warde-Farley, Ozair, Courville, and Bengio]{goodfellow14}
Ian Goodfellow, Jean Pouget-Abadie, Mehdi Mirza, Bing Xu, David Warde-Farley,
  Sherjil Ozair, Aaron Courville, and Yoshua Bengio.
\newblock Generative adversarial nets.
\newblock In \emph{Advances in Neural Information Processing Systems 27}, pages
  2672--2680, 2014.

\bibitem[Gribonval et~al.(2019)Gribonval, Kutyniok, Nielsen, and
  Voigtl{\"a}nder]{Gribonval2019ApproximationSO}
R{\'e}mi Gribonval, Gitta Kutyniok, Morten Nielsen, and Felix Voigtl{\"a}nder.
\newblock Approximation spaces of deep neural networks.
\newblock \emph{arXiv preprint arXiv:1905.01208}, 2019.

\bibitem[Hastad and Goldmann(1991)]{Hastad1991}
Johan Hastad and Mikael Goldmann.
\newblock On the power of small-depth threshold circuits.
\newblock \emph{Computational Complexity}, 1\penalty0 (2):\penalty0 113--129,
  1991.
\newblock ISSN 1420-8954.
\newblock \doi{10.1007/BF01272517}.
\newblock URL \url{https://doi.org/10.1007/BF01272517}.

\bibitem[Hinton and Salakhutdinov(2006)]{HintonSalakhutdinov2006b}
Geoffrey~E Hinton and Ruslan~R Salakhutdinov.
\newblock Reducing the dimensionality of data with neural networks.
\newblock \emph{Science}, 313\penalty0 (5786):\penalty0 504--507, July 2006.
\newblock \doi{10.1126/science.1127647}.
\newblock URL
  \url{http://www.ncbi.nlm.nih.gov/sites/entrez?db=pubmed&uid=16873662&cmd=showdetailview&indexed=google}.

\bibitem[Hinton et~al.(2006)Hinton, Osindero, and Teh]{hinton2006fast}
Geoffrey~E Hinton, Simon Osindero, and Yee-Whye Teh.
\newblock A fast learning algorithm for deep belief nets.
\newblock \emph{Neural Computation}, 18\penalty0 (7):\penalty0 1527--1554,
  2006.

\bibitem[Hornik et~al.(1989)Hornik, Stinchcombe, and
  White]{hornik1989multilayer}
Kurt Hornik, Maxwell Stinchcombe, and Halbert White.
\newblock Multilayer feedforward networks are universal approximators.
\newblock \emph{Neural Networks}, 2\penalty0 (5):\penalty0 359--366, 1989.

\bibitem[Huang et~al.(2006)Huang, Zhu, Mao, Siew, Saratchandran, and
  Sundararajan]{huang2006can}
Guang-Bin Huang, Qin-Yu Zhu, KZ~Mao, Chee-Kheong Siew, Paramasivan
  Saratchandran, and Narasimhan Sundararajan.
\newblock Can threshold networks be trained directly?
\newblock \emph{IEEE Transactions on Circuits and Systems II: Express Briefs},
  53\penalty0 (3):\penalty0 187--191, 2006.

\bibitem[Johnson(2019)]{johnson2018deep}
Jesse Johnson.
\newblock Deep, skinny neural networks are not universal approximators.
\newblock In \emph{International Conference on Learning Representations (ICLR
  2019)}, 2019.
\newblock URL \url{https://openreview.net/forum?id=ryGgSsAcFQ}.

\bibitem[Larochelle et~al.(2007)Larochelle, Erhan, Courville, Bergstra, and
  Bengio]{Larochelle:2007:EED:1273496.1273556}
Hugo Larochelle, Dumitru Erhan, Aaron Courville, James Bergstra, and Yoshua
  Bengio.
\newblock An empirical evaluation of deep architectures on problems with many
  factors of variation.
\newblock In \emph{Proceedings of the 24th International Conference on Machine
  Learning}, ICML '07, pages 473--480, New York, NY, USA, 2007. ACM.
\newblock ISBN 978-1-59593-793-3.
\newblock \doi{10.1145/1273496.1273556}.
\newblock URL \url{http://doi.acm.org/10.1145/1273496.1273556}.

\bibitem[Lauritzen(1996)]{lauritzen1996graphical}
Steffen~L. Lauritzen.
\newblock \emph{Graphical Models}.
\newblock Oxford Statistical Science Series. Clarendon Press, 1996.
\newblock ISBN 9780191591228.
\newblock URL \url{https://books.google.com/books?id=mGQWkx4guhAC}.

\bibitem[Le~Roux and Bengio(2008)]{le2008representational}
Nicolas Le~Roux and Yoshua Bengio.
\newblock Representational power of restricted boltzmann machines and deep
  belief networks.
\newblock \emph{Neural Computation}, 20\penalty0 (6):\penalty0 1631--1649,
  2008.

\bibitem[Le~Roux and Bengio(2010)]{le2010deep}
Nicolas Le~Roux and Yoshua Bengio.
\newblock Deep belief networks are compact universal approximators.
\newblock \emph{Neural Computation}, 22\penalty0 (8):\penalty0 2192--2207,
  2010.

\bibitem[Lee et~al.(2017)Lee, Kim, Chong, and Shin]{lee2017simplified}
Kimin Lee, Jaehyung Kim, Song Chong, and Jinwoo Shin.
\newblock Simplified stochastic feedforward neural networks.
\newblock \emph{arXiv preprint arXiv:1704.03188}, 2017.

\bibitem[Lu et~al.(2017)Lu, Pu, Wang, Hu, and Wang]{NIPS2017_7203}
Zhou Lu, Hongming Pu, Feicheng Wang, Zhiqiang Hu, and Liwei Wang.
\newblock The expressive power of neural networks: A view from the width.
\newblock In I.~Guyon, U.~V. Luxburg, S.~Bengio, H.~Wallach, R.~Fergus,
  S.~Vishwanathan, and R.~Garnett, editors, \emph{Advances in Neural
  Information Processing Systems 30}, pages 6231--6239. Curran Associates,
  Inc., 2017.
\newblock URL
  \url{http://papers.nips.cc/paper/7203-the-expressive-power-of-neural-networks-a-view-from-the-width.pdf}.

\bibitem[Lupanov(1956)]{lupanov19562}
Oleg~B. Lupanov.
\newblock On rectifier and switching-and-rectifier circuits.
\newblock \emph{Doklady Academii nauk SSSR, Reports by Academy of Sciences
  USSR}, 111\penalty0 (6):\penalty0 1171--1174, 1956.

\bibitem[Mat{\'u}{\v{s}}(2009)]{matus2009divergence}
Franti{\v{s}}ek Mat{\'u}{\v{s}}.
\newblock Divergence from factorizable distributions and matroid
  representations by partitions.
\newblock \emph{IEEE Transactions on Information Theory}, 55\penalty0
  (12):\penalty0 5375--5381, 2009.

\bibitem[Mhaskar and Poggio(2016)]{mhaskar2016deep}
Hrushikesh~N Mhaskar and Tomaso Poggio.
\newblock Deep vs. shallow networks: An approximation theory perspective.
\newblock \emph{Analysis and Applications}, 14\penalty0 (06):\penalty0
  829--848, 2016.

\bibitem[Mont{\'u}far(2014{\natexlab{a}})]{montufar2014deep}
Guido Mont{\'u}far.
\newblock Deep narrow {B}oltzmann machines are universal approximators.
\newblock In \emph{International Conference on Learning Representations (ICLR
  2015)}, 2014{\natexlab{a}}.

\bibitem[Mont{\'u}far(2014{\natexlab{b}})]{montufar2014universal}
Guido Mont{\'u}far.
\newblock Universal approximation depth and errors of narrow belief networks
  with discrete units.
\newblock \emph{Neural Computation}, 26\penalty0 (7):\penalty0 1386--1407,
  2014{\natexlab{b}}.

\bibitem[Mont\'ufar(2015)]{73459}
Guido Mont\'ufar.
\newblock A comparison of neural network architectures.
\newblock In \emph{Deep learning Workshop, ICML '15}. 2015.

\bibitem[Mont{\'u}far(2015)]{montufar2015universal}
Guido Mont{\'u}far.
\newblock Universal approximation of {M}arkov kernels by shallow stochastic
  feedforward networks.
\newblock \emph{arXiv preprint arXiv:1503.07211}, 2015.

\bibitem[Mont{\'u}far(2018)]{10.1007/978-3-319-97798-0_4}
Guido Mont{\'u}far.
\newblock Restricted boltzmann machines: Introduction and review.
\newblock In Nihat Ay, Paolo Gibilisco, and Franti{\v{s}}ek Mat{\'u}{\v{s}},
  editors, \emph{Information Geometry and Its Applications}, pages 75--115,
  Cham, 2018. Springer International Publishing.
\newblock ISBN 978-3-319-97798-0.

\bibitem[Mont\'ufar and Ay(2011)]{montufar2011refinements}
Guido Mont\'ufar and Nihat Ay.
\newblock Refinements of universal approximation results for deep belief
  networks and restricted boltzmann machines.
\newblock \emph{Neural Computation}, 23\penalty0 (5):\penalty0 1306--1319,
  2011.

\bibitem[Mont\'ufar and Morton(2012)]{78096}
Guido Mont\'ufar and Jason Morton.
\newblock {Kernels and submodels of deep belief networks}.
\newblock In \emph{Deep Learning and Unsupervised Feature Learning Workshop,
  NIPS 2012}. 2012.

\bibitem[Mont{{\'u}}far and Morton(2015)]{JMLR:v16:montufar15a}
Guido Mont{{\'u}}far and Jason Morton.
\newblock Discrete restricted {B}oltzmann machines.
\newblock \emph{Journal of Machine Learning Research}, 16:\penalty0 653--672,
  2015.
\newblock URL \url{http://jmlr.org/papers/v16/montufar15a.html}.

\bibitem[Mont\'ufar and Morton(2015)]{doi:10.1137/140957081}
Guido Mont\'ufar and Jason Morton.
\newblock When does a mixture of products contain a product of mixtures?
\newblock \emph{SIAM Journal on Discrete Mathematics}, 29\penalty0
  (1):\penalty0 321--347, 2015.
\newblock \doi{10.1137/140957081}.
\newblock URL \url{https://doi.org/10.1137/140957081}.

\bibitem[Mont\'ufar and Morton(2017)]{doi:10.1137/16M1077489}
Guido Mont\'ufar and Jason Morton.
\newblock Dimension of marginals of kronecker product models.
\newblock \emph{SIAM Journal on Applied Algebra and Geometry}, 1\penalty0
  (1):\penalty0 126--151, 2017.
\newblock \doi{10.1137/16M1077489}.
\newblock URL \url{https://doi.org/10.1137/16M1077489}.

\bibitem[Mont{\'u}far and Rauh(2017)]{montufar2017hierarchical}
Guido Mont{\'u}far and Johannes Rauh.
\newblock Hierarchical models as marginals of hierarchical models.
\newblock \emph{International Journal of Approximate Reasoning}, 88:\penalty0
  531--546, 2017.

\bibitem[Mont{\'u}far et~al.(2011)Mont{\'u}far, Rauh, and
  Ay]{montufar2011expressive}
Guido Mont{\'u}far, Johannes Rauh, and Nihat Ay.
\newblock Expressive power and approximation errors of restricted boltzmann
  machines.
\newblock In \emph{Advances in Neural Information Processing Systems 24}, pages
  415--423, 2011.

\bibitem[Mont{\'u}far et~al.(2013)Mont{\'u}far, Rauh, and
  Ay]{montufar2013maximal}
Guido Mont{\'u}far, Johannes Rauh, and Nihat Ay.
\newblock Maximal information divergence from statistical models defined by
  neural networks.
\newblock In \emph{Geometric Science of Information}, pages 759--766. Springer,
  2013.

\bibitem[Mont\'ufar et~al.(2014)Mont\'ufar, Pascanu, Cho, and
  Bengio]{NIPS2014_5422}
Guido Mont\'ufar, Razvan Pascanu, Kyunghyun Cho, and Yoshua Bengio.
\newblock On the number of linear regions of deep neural networks.
\newblock In Z.~Ghahramani, M.~Welling, C.~Cortes, N.~D. Lawrence, and K.~Q.
  Weinberger, editors, \emph{Advances in Neural Information Processing Systems
  27}, pages 2924--2932. Curran Associates, Inc., 2014.
\newblock URL
  \url{http://papers.nips.cc/paper/5422-on-the-number-of-linear-regions-of-deep-neural-networks.pdf}.

\bibitem[Mont{\'u}far et~al.(2015)Mont{\'u}far, Ay, and
  Ghazi-Zahedi]{montufar2015geometry}
Guido Mont{\'u}far, Nihat Ay, and Keyan Ghazi-Zahedi.
\newblock Geometry and expressive power of conditional restricted {B}oltzmann
  machines.
\newblock \emph{Journal of Machine Learning Research}, 16\penalty0
  (1):\penalty0 2405--2436, 2015.

\bibitem[Muroga(1971)]{muroga1971threshold}
Saburo Muroga.
\newblock \emph{Threshold Logic and Its Applications}.
\newblock Wiley-Interscience, 1971.
\newblock ISBN 9780471625308.
\newblock URL \url{https://books.google.com/books?id=wvtQAAAAMAAJ}.

\bibitem[Neal(1990)]{Neal90learningstochastic}
Radford~M. Neal.
\newblock Learning stochastic feedforward networks.
\newblock Technical report, CRG-TR-90-7, Dept. of Computer Science, University
  of Toronto, 1990.

\bibitem[Neal(1992)]{NEAL199271}
Radford~M. Neal.
\newblock Connectionist learning of belief networks.
\newblock \emph{Artificial Intelligence}, 56\penalty0 (1):\penalty0 71--113,
  1992.
\newblock ISSN 0004-3702.
\newblock \doi{https://doi.org/10.1016/0004-3702(92)90065-6}.
\newblock URL
  \url{http://www.sciencedirect.com/science/article/pii/0004370292900656}.

\bibitem[Neciporuk(1964)]{neciporuk1964synthesis}
E.I. Neciporuk.
\newblock The synthesis of networks from threshold elements.
\newblock \emph{Problemy Kibernetiki}, 11:\penalty0 49--62, 1964.

\bibitem[Ngiam et~al.(2011)Ngiam, Chen, Koh, and Ng]{Ngiam2011LearningDE}
Jiquan Ngiam, Zhenghao Chen, Pang Koh, and Andrew Ng.
\newblock Learning deep energy models.
\newblock In \emph{Proceedings of the 28th International Conference on Machine
  Learning, ICML 2011}, pages 1105--1112, 2011.

\bibitem[Pascanu et~al.(2014)Pascanu, Mont{\'u}far, and
  Bengio]{pascanu2013number}
Razvan Pascanu, Guido Mont{\'u}far, and Yoshua Bengio.
\newblock On the number of response regions of deep feed forward networks with
  piece-wise linear activations.
\newblock In \emph{International Conference on Learning Representations (ICLR
  2014)}, 2014.

\bibitem[Pearl(1988)]{Pearl:1988:PRI:52121}
Judea Pearl.
\newblock \emph{Probabilistic Reasoning in Intelligent Systems: Networks of
  Plausible Inference}.
\newblock Morgan Kaufmann Publishers Inc., San Francisco, CA, USA, 1988.
\newblock ISBN 0-934613-73-7.

\bibitem[Poggio et~al.(2017)Poggio, Mhaskar, Rosasco, Miranda, and
  Liao]{poggio2017and}
Tomaso Poggio, Hrushikesh Mhaskar, Lorenzo Rosasco, Brando Miranda, and Qianli
  Liao.
\newblock Why and when can deep-but not shallow-networks avoid the curse of
  dimensionality: a review.
\newblock \emph{International Journal of Automation and Computing}, 14\penalty0
  (5):\penalty0 503--519, 2017.

\bibitem[Raghu et~al.(2017)Raghu, Poole, Kleinberg, Ganguli, and
  Sohl-Dickstein]{pmlr-v70-raghu17a}
Maithra Raghu, Ben Poole, Jon Kleinberg, Surya Ganguli, and Jascha
  Sohl-Dickstein.
\newblock On the expressive power of deep neural networks.
\newblock In Doina Precup and Yee~Whye Teh, editors, \emph{Proceedings of the
  34th International Conference on Machine Learning}, volume~70 of
  \emph{Proceedings of Machine Learning Research}, pages 2847--2854,
  International Convention Centre, Sydney, Australia, 06--11 Aug 2017. PMLR.
\newblock URL \url{http://proceedings.mlr.press/v70/raghu17a.html}.

\bibitem[Raiko et~al.(2014)Raiko, Berglund, Alain, and
  Dinh]{raiko2014techniques}
Tapani Raiko, Mathias Berglund, Guillaume Alain, and Laurent Dinh.
\newblock Techniques for learning binary stochastic feedforward neural
  networks.
\newblock In \emph{International Conference on Learning Representations (ICLR
  2015)}, 2014.

\bibitem[{Ranzato} et~al.(2007){Ranzato}, {Huang}, {Boureau}, and
  {LeCun}]{4270182}
M.~{Ranzato}, F.~J. {Huang}, Y.~{Boureau}, and Y.~{LeCun}.
\newblock Unsupervised learning of invariant feature hierarchies with
  applications to object recognition.
\newblock In \emph{2007 IEEE Conference on Computer Vision and Pattern
  Recognition}, pages 1--8, June 2007.
\newblock \doi{10.1109/CVPR.2007.383157}.

\bibitem[Rauh(2011)]{rauh2011finding}
Johannes Rauh.
\newblock Finding the maximizers of the information divergence from an
  exponential family.
\newblock \emph{IEEE Transactions on Information Theory}, 57\penalty0
  (6):\penalty0 3236--3247, 2011.

\bibitem[Rojas(2003)]{rojas2003networks}
Ra{\'u}l Rojas.
\newblock Networks of width one are universal classifiers.
\newblock In \emph{Proceedings of the International Joint Conference on Neural
  Networks, 2003.}, volume~4, pages 3124--3127. IEEE, 2003.

\bibitem[Salakhutdinov and Hinton(2009)]{Salakhutdinov:2009:SH:1558385.1558446}
Ruslan Salakhutdinov and Geoffrey Hinton.
\newblock Semantic hashing.
\newblock \emph{International Journal of Approximate Reasoning}, 50\penalty0
  (7):\penalty0 969--978, July 2009.
\newblock ISSN 0888-613X.
\newblock \doi{10.1016/j.ijar.2008.11.006}.
\newblock URL \url{http://dx.doi.org/10.1016/j.ijar.2008.11.006}.

\bibitem[Sard(1942)]{sard1942}
Arthur Sard.
\newblock The measure of the critical values of differentiable maps.
\newblock \emph{Bull. Amer. Math. Soc.}, 48\penalty0 (12):\penalty0 883--890,
  12 1942.
\newblock URL \url{https://projecteuclid.org:443/euclid.bams/1183504867}.

\bibitem[Sarikaya et~al.(2014)Sarikaya, Hinton, and Deoras]{6737243}
Ruhi Sarikaya, Geoffrey~E. Hinton, and Anoop Deoras.
\newblock Application of deep belief networks for natural language
  understanding.
\newblock \emph{IEEE/ACM Transactions on Audio, Speech, and Language
  Processing}, 22\penalty0 (4):\penalty0 778--784, April 2014.
\newblock \doi{10.1109/TASLP.2014.2303296}.

\bibitem[Saul et~al.(1996)Saul, Jaakkola, and
  Jordan]{Saul:1996:MFT:1622737.1622741}
Lawrence~K. Saul, Tommi Jaakkola, and Michael~I. Jordan.
\newblock Mean field theory for sigmoid belief networks.
\newblock \emph{Journal of Artificial Intelligence Research}, 4\penalty0
  (1):\penalty0 61--76, March 1996.
\newblock ISSN 1076-9757.
\newblock URL \url{http://dl.acm.org/citation.cfm?id=1622737.1622741}.

\bibitem[Scarselli and Tsoi(1998)]{SCARSELLI199815}
Franco Scarselli and Ah~Chung Tsoi.
\newblock Universal approximation using feedforward neural networks: A survey
  of some existing methods, and some new results.
\newblock \emph{Neural Networks}, 11\penalty0 (1):\penalty0 15--37, 1998.
\newblock ISSN 0893-6080.
\newblock \doi{https://doi.org/10.1016/S0893-6080(97)00097-X}.
\newblock URL
  \url{http://www.sciencedirect.com/science/article/pii/S089360809700097X}.

\bibitem[Seigal and Mont{\'u}far(2018)]{seigal2018mixtures}
Anna Seigal and Guido Mont{\'u}far.
\newblock Mixtures and products in two graphical models.
\newblock \emph{Journal of Algebraic Statistics}, 9\penalty0 (1), 2018.

\bibitem[{Shannon}(1949)]{6771698}
Claude~E. {Shannon}.
\newblock The synthesis of two-terminal switching circuits.
\newblock \emph{The Bell System Technical Journal}, 28\penalty0 (1):\penalty0
  59--98, Jan 1949.
\newblock \doi{10.1002/j.1538-7305.1949.tb03624.x}.

\bibitem[Sutskever and Hinton(2008)]{sutskever2008deep}
Ilya Sutskever and Geoffrey~E Hinton.
\newblock Deep, narrow sigmoid belief networks are universal approximators.
\newblock \emph{Neural Computation}, 20\penalty0 (11):\penalty0 2629--2636,
  2008.

\bibitem[Tang and Salakhutdinov(2013)]{NIPS2013_5026}
Yichuan Tang and Ruslan~R Salakhutdinov.
\newblock Learning stochastic feedforward neural networks.
\newblock In C.~J.~C. Burges, L.~Bottou, M.~Welling, Z.~Ghahramani, and K.~Q.
  Weinberger, editors, \emph{Advances in Neural Information Processing Systems
  26}, pages 530--538. Curran Associates, Inc., 2013.
\newblock URL
  \url{http://papers.nips.cc/paper/5026-learning-stochastic-feedforward-neural-networks.pdf}.

\bibitem[Wenzel et~al.(2000)Wenzel, Ay, and Pasemann]{wenzel2000hyperplane}
Walter Wenzel, Nihat Ay, and Frank Pasemann.
\newblock Hyperplane arrangements separating arbitrary vertex classes in
  n-cubes.
\newblock \emph{Advances in Applied Mathematics}, 25\penalty0 (3):\penalty0
  284--306, 2000.

\bibitem[Yarotsky(2017)]{YAROTSKY2017103}
Dmitry Yarotsky.
\newblock Error bounds for approximations with deep relu networks.
\newblock \emph{Neural Networks}, 94:\penalty0 103--114, 2017.
\newblock ISSN 0893-6080.
\newblock \doi{https://doi.org/10.1016/j.neunet.2017.07.002}.
\newblock URL
  \url{http://www.sciencedirect.com/science/article/pii/S0893608017301545}.

\bibitem[Yarotsky(2018)]{pmlr-v75-yarotsky18a}
Dmitry Yarotsky.
\newblock Optimal approximation of continuous functions by very deep relu
  networks.
\newblock In S\'ebastien Bubeck, Vianney Perchet, and Philippe Rigollet,
  editors, \emph{Proceedings of the 31st Conference On Learning Theory},
  volume~75 of \emph{Proceedings of Machine Learning Research}, pages 639--649.
  PMLR, 06--09 Jul 2018.
\newblock URL \url{http://proceedings.mlr.press/v75/yarotsky18a.html}.

\end{thebibliography}
\bibliographystyle{plainnat}

\end{document}